\documentclass[lettersize,journal]{IEEEtran}
\usepackage[utf8]{inputenc}

\usepackage{textcomp}

\usepackage{amsmath,amsfonts,amssymb}
\usepackage{amsthm}
\usepackage{algorithm}
\usepackage{algorithmic}
\usepackage{array}
\usepackage{subcaption}
\usepackage{textcomp}
\usepackage{stfloats}
\usepackage{url}
\usepackage{verbatim}
\usepackage{graphicx}
\usepackage{cite}
\usepackage{booktabs}
\usepackage[dvipsnames,table,xcdraw]{xcolor}
\usepackage{pifont}
\newcommand{\cmark}{\ding{51}}
\newcommand{\xmark}{\ding{55}}

\newcommand{\gray}[1]{\textcolor{gray}{#1}}
\definecolor{deepred}{rgb}{0.6,0,0}
\newtheorem{assumption}{Assumption}

\newtheorem{theorem}{Theorem}
\newtheorem{corollary}{Corollary}
\newtheorem{lemma}{Lemma}
\newtheorem{definition}{Definition}
\newtheorem{remark}{Remark}

\begin{document}

\title{Lifecycle-Aware Federated Continual Learning in Mobile Autonomous Systems}

\author{Beining Wu,~\IEEEmembership{Member,~IEEE,} and
         Jun Huang,~\IEEEmembership{Senior Member,~IEEE}
\thanks{Beining Wu and Jun Huang are with the Department of Electrical Engineering and Computer Science, South Dakota State University, Brookings, 57006 SD. Email: Wu.Beining@jacks.sdstate.edu., Jun.Huang@sdstate.edu.}
\thanks{Manuscript received [Date]; revised [Date].}}



\maketitle

\begin{abstract}
Federated continual learning (FCL) allows distributed autonomous fleets to adapt collaboratively to evolving terrain types across extended mission lifecycles. However, current approaches face several key challenges: 1) they use uniform protection strategies that do not account for the varying sensitivities to forgetting on different network layers; 2) they focus primarily on preventing forgetting during training, without addressing the long-term effects of cumulative drift; and 3) they often depend on idealized simulations that fail to capture the real-world heterogeneity present in distributed fleets. In this paper, we propose a lifecycle-aware dual-timescale FCL framework that incorporates training-time (pre-forgetting) prevention and (post-forgetting) recovery. Under this framework, we design a layer-selective rehearsal strategy that mitigates immediate forgetting during local training, and a rapid knowledge recovery strategy that restores degraded models after long-term cumulative drift. We present a theoretical analysis that characterizes heterogeneous forgetting dynamics and establishes the inevitability of long-term degradation. Our experimental results show that this framework achieves up to 8.3\% mIoU improvement over the strongest federated baseline and up to 31.7\% over conventional fine-tuning. We also deploy the FCL framework on a real-world rover testbed to assess system-level robustness under realistic constraints; the testing results further confirm the effectiveness of our FCL design.
\end{abstract}

\begin{IEEEkeywords}
Federated Learning, Continual Learning, Planetary Exploration, Lifecycle Knowledge Management
\end{IEEEkeywords}

\section{Introduction}
\label{sec:introduction}

\IEEEPARstart{M}{obile} autonomous fleets are deployed across a range of domains where environments evolve throughout the mission lifecycle, from polar research stations and precision agriculture to planetary exploration~\cite{Wu2025RACS, Wu2025WASA, Ding2025IPCCC}. Among these, planetary exploration missions, especially those on Mars, represent the most demanding instance: they have grown from single probes into coordinated fleets of autonomous rovers that work together to operate across geologically diverse regions~\cite{NASA_MER_Spirit_Opportunity, NASA2025MarsReport}. In these missions, semantic terrain segmentation plays a vital role in guiding rovers safely across unfamiliar regions and helping them recognize sites of scientific interest~\cite{Sun2023JGR}. Because each rover acquires imagery in a different region, the data observed across the fleet is inherently heterogeneous, and local onboard decision-making requires models that remain up to date with each rover's own operating conditions. At the same time, the bandwidth and delay limits of the Interplanetary Internet~\cite{Khoroshylov2021SST, Wu2026COMST} make it impractical to return large volumes of raw data to Earth for centralized training. Federated Learning (FL) has therefore become a natural architectural choice~\cite{McMahan2017AISTATS, Li2020MLSys, Wu2026MNET, Wu2026ARXIV}, allowing rover fleets to train shared segmentation models collaboratively while exchanging only model parameters. Yet the reality of long journeys through space reveals a gap in current methods: models must evolve and adapt throughout a long-term mission lifecycle, rather than remain unchanged after their first deployment.

As rovers travel through diverse regions, they encounter terrain types that were absent in earlier exploration zones, such as shifts from loose soil to exposed bedrock. In such settings, the fleet must learn new knowledge step by step while retaining to what it has already learned~\cite{Wang2024TPAMI}. However, continual adaptation on resource-limited edge devices often leads to \emph{catastrophic forgetting}. As new knowledge is absorbed, changes to model parameters can cause earlier knowledge to fade and be lost~\cite{Rebuffi2017CVPR, Douillard2021CVPR}.  
In FL settings, such a degradation typically occurs at two timescales.
At the training-time scale, rovers face immediate conflicts among gradients during local updates, as they seek to learn new tasks without weakening their performance on prior ones~\cite{Li2025CST}. At the long-term scale, repeated federated aggregation of Non-IID models throughout the mission introduces cumulative drift, which slowly erodes the model’s ability to recognize classes learned in earlier stages~\cite{Wang2024NeurIPS, Yang2024TKDE}. These challenges call for federated continual learning (FCL), which enables distributed rover fleets to learn over time without forgetting what was learned before. We term a framework \emph{lifecycle-aware} when it explicitly coordinates defenses across both temporal phases, coupling training-time forgetting mitigation with long-term degradation recovery through shared operational state.

While FCL has been recently investigated in the AI communities~\cite{Rebuffi2017CVPR,Douillard2021CVPR,Wang2022CVPR,Zhao2023ICML,Dong2023CVPR,Li2023TPAMI,Qiao2024NeurIPS,Wang2024NeurIPS,He2025CVPR,Wei2025WACV,Behrouz2025NeurIPS,Li2025CST,Batra2024ARXIV}, existing research presents significant limitations in the following aspects. 
%
First, prior rehearsal-based strategies (replay while learning new) ignore the heterogeneous forgetting sensitivities of deep network layers~\cite{Rebuffi2017CVPR, Douillard2021CVPR}. Although recent studies have begun to examine layer-wise sensitivity~\cite{Qiao2024NeurIPS, He2025CVPR, Wei2025WACV}, they still apply uniform protection strategies to all layers. Such a ``one-size-fits-all'' approach neglects the fact that different layers have distinct \emph{plasticity} and \emph{stability} requirements. Backbone layers benefit from continued adaptation as representations evolve, whereas classifier layers require strong stabilization to preserve decision boundaries~\cite{Behrouz2025NeurIPS}. As a result, existing rehearsal-based frameworks with uniform or static layer-wise protection cause harmful effects: over-protected layers that should remain adaptable, while under-protected layers that are more prone to forgetting~\cite{Zhao2023ICML}.

Second, existing recovery mechanisms, including Low-Rank Adaptation~\cite{He2025CVPR, Wei2025WACV}, Knowledge Distillation~\cite{Wang2022CVPR}, and Variational Inference~\cite{Li2023TPAMI, Batra2024ARXIV}, depend heavily on iterative gradient-based optimization. Such methods often require multiple training epochs to converge, making them computationally expensive and vulnerable to ongoing multiplicative drift in federated learning~\cite{Wang2024NeurIPS}. Consequently, prior recovery approaches lack fast mechanisms capable of restoring degraded models with low computational and communication overhead.

Third, although several theoretical FCL frameworks have been proposed~\cite{Dong2023CVPR, Li2025CST}, most current studies are based on simulations that do not account for the strict system heterogeneity in real-world rover fleets. For example, these simulations often overlook differences in onboard storage capacity and the impact of limited communication resources. Therefore, solutions developed through simulation alone cannot guarantee reliable mission deployment in practice.

In a nutshell, prior studies on FCL present the following significant research gaps:

\begin{enumerate}
    \item Current rehearsal-based approaches that employ uniform or static layer-wise protection often result in over-protected layers requiring adaptability, while under-protected layers are more prone to forgetting.

    \item Previous recovery methods exhibit slow convergence and are unable to restore degraded models while maintaining low computational and communication overhead.

    \item Most existing studies are based on simulations that fail to account for the system heterogeneity in real-world rover fleets. Solutions developed solely through simulation cannot ensure reliable mission deployment in practice.
\end{enumerate}

To close the above research gaps, this work aims to design a lifecycle-aware FCL framework that leverages stratified rehearsal for short-term learning and meta-learned recovery for long-term learning to combat catastrophic forgetting. Specifically, we make the following contributions.

\begin{itemize}
    \item We propose a dual-timescale FCL framework that incorporates training-time (pre-forgetting) prevention and (post-forgetting) recovery. Under this framework, we design a Layer-Selective Rehearsal (LSR) strategy that mitigates immediate forgetting during local training, and a Rapid Knowledge Recovery (RKR) strategy that actively restores degraded models after long-term cumulative drift.

    \item We present a theoretical analysis that characterizes heterogeneous forgetting dynamics over deep network layers and demonstrates the inevitability of long-term degradation. We also provide a formal justification for the LSR protection design and derive sample-complexity bounds for the performance of RKR compared with retraining.

    \item We evaluate our FCL framework on three representative Mars terrain datasets (MarsScapes, S5Mars, AI4MARS), achieving up to 8.3\% mIoU improvement over the strongest federated baseline and up to 31.7\% over conventional fine-tuning. We then deploy the FCL framework on a real-world rover testbed to assess system-level robustness under realistic constraints, including extreme Non-IID conditions and resource heterogeneity. The testing results further confirm the effectiveness of our FCL design.


\end{itemize}

The remainder of this paper is organized as follows. 
Section~\ref{sec:related_work} briefly summarizes the related studies. 
Section~\ref{sec:problem_formulation} describes the system model and formulates the federated class-incremental learning problem. 
In Section~\ref{sec:proposed_method}, we present our proposed lifecycle-aware defense framework. 
Section~\ref{sec:theoretical_analysis} provides the theoretical analysis of forgetting dynamics and recovery bounds. 
We discuss the experimental results from both simulations and the physical testbed in Section~\ref{sec:experiments}, and draw the conclusion in Section~\ref{sec:conclusion}.

\section{Related Work}
\label{sec:related_work}

\subsection{Federated Learning in MAS}
For collaborative learning in mobile autonomous systems (MAS), federated learning (FL) trains a decentralized global model by aggregating network parameters from distributed autonomous agents. McMahan \emph{et al.} \cite{McMahan2017AISTATS} propose a communication-efficient aggregation strategy that enables collaborative training across edge devices without exchanging raw data. Li \emph{et al.} \cite{Li2020MLSys} develop FedProx to address data heterogeneity across federated agents via introducing proximal terms into the local optimization objective. Recent works \cite{Wu2025TON, Cai2025TITS, Huang2025TMC, Wu2025WASA, Ding2025IPCCC, Wu2025RACS, Pudasaini2026HPSR} apply FL to autonomous systems, including vehicular platooning, connected autonomous vehicles, UAV trajectory planning, dual-level UAV--vehicle collaborative learning, precision-agriculture path planning, and communication-efficient federated unlearning for smart agriculture, where distributed agents must collaboratively learn models under communication constraints and extreme data heterogeneity. A complementary line of work on edge collaborative perception further studies distributed online learning for edge video analytics~\cite{Fang2025TON}, task-oriented collaborative perception~\cite{Fang2025JSAC}, and edge--aerial task-oriented communication for low-altitude navigation~\cite{Fang2026GLOBECOM}, while FL on heterogeneous IoT edges under missing modalities has also been investigated~\cite{Wu2026ARXIV}. At the wireless and coordination substrate, parallel studies on AoI-aware DRL-based resource allocation~\cite{Wu2023ACCESS}, SWIPT-assisted D2D energy harvesting~\cite{Xing2026ACR}, fault-tolerant nano-communication switching~\cite{Pan2023SCIS}, model-free cooperative optimal output regulation for multi-agent systems~\cite{Wu2023MPE}, and multi-agent reinforcement learning for Stackelberg security games in MEC~\cite{Ding2026ICNC} underpin the networking and control layer on which such federated autonomous systems operate. Similar challenges arise in planetary exploration missions where rover fleets must collaboratively learn terrain segmentation models across geologically diverse regions while preserving data locality.

To enable learning new knowledge continuously in federated settings, He \emph{et al.} \cite{He2025TMC} propose federated continual learning via server-side generative replay, though the method assumes identifiable task transitions and focuses on bounding immediate forgetting within individual task stages. Dong \emph{et al.} \cite{Dong2023CVPR} address federated incremental semantic segmentation by storing old global models for immediate distillation. 

The above FL methods~\cite{McMahan2017AISTATS, Li2020MLSys, He2025TMC, Dong2023CVPR} assume static deployment and are evaluated at task completion, whereas missions require models to adapt throughout the operational lifecycle across multiple temporal scales.

A parallel line of work studies learned terrain analysis for Mars rovers. SPOC~\cite{Rothrock2016AIAA} introduced CNN-based Mars terrain classification and was later integrated into the rover-operations analytics pipeline MAARS~\cite{Ono2020SMC}. MLNav~\cite{Daftry2022RAL} studies onboard learned navigation under flight-software constraints. Several terrain datasets have been released by the community, including AI4MARS~\cite{Swan2021CVPR}, MarsScapes~\cite{Liu2023AA}, and S5Mars~\cite{Zhang2024TGRS}, which we use in our evaluation. These works share an offline, single-rover, non-incremental training assumption: data is collected in situ, returned to Earth for labeling, and used to train a static model that is subsequently uploaded back to the rover. The federated, class-incremental regime that arises over a multi-year mission lifecycle is left open, which is the gap our work targets.

\subsection{Continual Learning}
Continual learning (a.k.a. incremental learning or lifelong learning) focuses on identifying new categories continuously in dynamic real-world scenarios \cite{Li2025CST, Wu2026TNSE}. A major challenge is catastrophic forgetting, where models learn new categories sequentially and have limited memory for storing old training data. Existing methods fall into regularization-based approaches, generative replay, and exemplar memory construction \cite{Rebuffi2017CVPR}. Rebuffi \emph{et al.} \cite{Rebuffi2017CVPR} propose iCaRL that maintains a fixed-size exemplar buffer to rehearse old classes during new task training. Buzzega \emph{et al.} \cite{Buzzega2020NeurIPS} design dark experience replay to store and replay past model predictions. Beyond sample-level rehearsal, recent work on shared spatial memory through predictive coding explores representation-level consolidation mechanisms that complement exemplar-based replay~\cite{Fang2025ARXIV}. Recent advances \cite{Douillard2021CVPR, Lin2025CVPR} extend incremental learning to semantic segmentation tasks, addressing challenges such as background shift and spatial consistency in dense prediction scenarios. Additional works \cite{Cong2024ECCV, Fang2025CVPR, Yang2025ICLR, Yin2025CVPR} further develop adaptive strategies for conflict mitigation and instance replay mechanisms.
Recent theoretical analyses \cite{Wang2024NeurIPS, Khademi2024NeurIPS} reveal that catastrophic forgetting exhibits complex dynamics beyond immediate performance degradation, with model myopia and task confusion accumulating inevitably across extended task sequences. Empirical studies \cite{Zhao2023ICML} further show that different network modules exhibit heterogeneous forgetting patterns, with only a few task-specific parameters sensitive to distribution shifts, while others can be shared across tasks. Layer-selective protection in CL has been investigated along two directions. The first applies uniform or pre-specified structural protection, including binary freeze/adapt masks~\cite{Zhao2023ICML} and fixed low-rank adaptation modules at chosen layers~\cite{Qiao2024NeurIPS}. The per-layer allocation in these methods is fixed at the outset and does not change with task dynamics. The second applies post-hoc sensitivity analysis: a one-shot importance score identifies sensitive parameters after a task is completed, and rehearsal is then applied over the selected subset~\cite{Zhao2023ICML}. 

These methods~\cite{Zhao2023ICML, Qiao2024NeurIPS} target centralized settings. Prior incremental learning approaches~\cite{Rebuffi2017CVPR, Douillard2021CVPR} evaluate forgetting at task completion and do not account for the cumulative degradation induced by repeated federated aggregation under extreme Non-IID conditions.

\subsection{Lifecycle-Aware Knowledge Management}
In long-term scenarios, knowledge degradation across extended learning sequences remains a challenge despite training-time defense mechanisms. Recent theoretical work \cite{Wang2024NeurIPS} reveals that catastrophic forgetting accumulates inevitably as incremental errors compound across sequential tasks. Memory buffer coverage limitations exacerbate this degradation as the number of learned classes grows \cite{Khademi2024NeurIPS}. A trivial solution to recover from such degradation is to periodically retrain the entire model on memory buffers, but this approach incurs prohibitive computational overhead and communication costs in resource-constrained autonomous systems. To enable efficient knowledge recovery, meta-learning-based approaches \cite{Finn2017ICML} learn transferable adaptation patterns that generalize across tasks. Wang \emph{et al.} \cite{Wang2022CVPR} employ episodic replay distillation that simulates task transitions during meta-training, enabling rapid few-shot adaptation through learned correction patterns. Recent works \cite{He2025CVPR, Li2023TPAMI} further improve recovery efficiency through parameter-efficient strategies such as low-rank adaptation and variational distillation.

These methods~\cite{Wang2022CVPR, He2025CVPR, Li2023TPAMI} assume centralized settings with full model access and do not address the multiplicative degradation induced by repeated federated aggregation under Non-IID conditions. They also treat prevention and recovery as isolated operations rather than coordinating them across temporal scales.
\section{System Model and Problem Formulation}
\label{sec:problem_formulation}

\subsection{System Model}

In a long-term Mars exploration mission, a fleet of $K$ autonomous rovers, denoted as $\{\mathcal{C}_k\}_{k=1}^K$, is deployed to conduct scientific investigations under the \emph{wireless federated learning} framework. Each rover is responsible for collecting data and performing local learning, while a central server $\{\mathcal{S}\}$ coordinates collaborative model training through periodic synchronization. Each rover is constrained by its onboard storage capacity $\mathcal{C}_k$ and, as such, can retain only a limited volume of historical data for training. The federated architecture adopted here is a consequence of the communication budget rather than a privacy requirement. Even if the aggregation server were placed on Mars rather than on Earth, the relay and surface links between rovers share the same order of bandwidth as the Mars--Earth relay path~\cite{Edwards2006AA}, so raw imagery from multiple rovers cannot be collected at a single node; only model parameters, which constitute a far smaller per-round payload, are compatible with the available link budget.

As the mission progresses and rovers operate in new territories, they encounter previously unseen phenomena and knowledge not present in earlier regions. We model this process as a sequence of learning tasks ${\mathcal{T}^t}_{t=1}^T$, where each task $\mathcal{T}^t$ corresponds to data collected in a newly explored region. Since future tasks may revisit environments where earlier knowledge remains relevant, the learning system must preserve acquired knowledge while incorporating new information, avoiding knowledge override and catastrophic forgetting.

The data observed by each rover is inherently heterogeneous due to spatially varying geological and environmental factors. Specifically, each rover $\mathcal{C}_k$ samples data from its local distribution $\mathcal{P}_k$, and the set of distributions ${\mathcal{P}_k}$ is non-independent and non-identically distributed (Non-IID) across the fleet. 

These characteristics give rise to learning challenges at two different time scales. At the training-time scale, rovers face immediate optimization challenges in each task as they balance learning new knowledge with preserving existing knowledge through local gradient updates. At the long-term scale, the cumulative effects over sequential tasks lead to progressive performance degradation on early-learned knowledge.

\subsection{Problem Formulation}


We consider a sequence of tasks, denoted as $\mathcal{T} = \{\mathcal{T}^t\}_{t=1}^T$, where $T$ is the total number of tasks. For each task $t$, the dataset is given by $\mathcal{T}^t = \{(\mathbf{x}_i^t, \mathbf{y}_i^t)\}_{i=1}^{N^t}$, where each sample consists of an input RGB image $\mathbf{x}_i^t \in \mathbb{R}^{H \times W \times 3}$ and its corresponding pixel-wise label map $\mathbf{y}_i^t \in \mathcal{Y}^{H \times W}$. Each pixel in the mask (label map) is assigned a semantic class from the label space $\mathcal{Y}^t$, and $N^t$ denotes the number of image-mask pairs in task $t$. The label space $\mathcal{Y}^t$ contains $C^t$ new classes that are unique to the current task. For example, in Mars terrain segmentation, Task 0 provides $N^0$ training samples, each consisting of an RGB image $\mathbf{x}_i^0 \in \mathbb{R}^{H \times W \times 3}$ (a 3-channel color image with height $H$ and width $W$ pixels) and its ground-truth label map $\mathbf{y}_i^0 \in \mathcal{Y}^{H \times W}$ (a same-sized matrix where each spatial location $(h,w)$ stores the terrain category of that pixel). Task 0 learns $C^0$ terrain classes $\mathcal{Y}^0=\{\text{Soil, Sand, Bedrock}\}$, meaning each pixel is classified as one of these three types. Task 1 then introduces $N^1$ images with $C^1$ classes $\mathcal{Y}^1=\{\text{Gravel, Rocks}\}$. Although Task 1 provides annotations only for its new classes, the model must be able to segment each pixel into one of the $C^0 + C^1$ classes learned so far.

In the context of federated continual learning (FCL), a key constraint is that the label spaces for different tasks are strictly non-overlapping, that is, $\mathcal{Y}^t \cap (\cup_{j=1}^{t-1}\mathcal{Y}^j) = \emptyset$. This formulation reflects a practical reality of the mission lifecycle: as rovers progressively enter new geological regions, they encounter terrain types that were absent from earlier mission stages, and the complete set of terrain categories cannot be predetermined at deployment time. As a result, each new task introduces $C^t$ classes that are entirely different from the $C^o = \sum_{i=1}^{t-1}C^i$ classes in previous tasks. This requires the model to incrementally expand its segmentation capability to accommodate new classes, while maintaining good performance on previously learned classes.

Each rover $\mathcal{C}_k$ receives a local subset $\mathcal{T}_k^t \subset \mathcal{T}^t$ consisting of $N_k^t$ samples drawn from its regional distribution $\mathcal{P}_k$. The local label space $\mathcal{Y}_k^t \subseteq \mathcal{Y}^t$ contains $C_k^t$ classes, and the global label space is formed through the union: $\mathcal{Y}^t = \cup_{k=1}^K \mathcal{Y}_k^t$. The data distributions $\{\mathcal{P}_k\}_{k=1}^K$ are Non-IID due to regional heterogeneity. To mitigate catastrophic forgetting, each rover maintains an exemplar memory buffer $\mathcal{M}_k$ that stores representative image-mask pairs from previously learned classes. The memory capacity $|\mathcal{M}_k|$ is fixed due to onboard storage constraints. As the number of learned classes $C^o$ grows across tasks, the per-class sample allocation $|\mathcal{M}_k|/C^o$ diminishes. 

The learning process operates over iterative communication rounds. Within each task $\mathcal{T}^t$, the training proceeds for $R$ communication rounds indexed by $r = 1, \ldots, R$. At round $r$, the server $\mathcal{S}$ distributes the current global model $\Theta^{r,t}$ to a selected subset of rovers $\hat{\mathcal{C}}^r \subseteq \{\mathcal{C}_1, \ldots, \mathcal{C}_K\}$. Each selected rover $\mathcal{C}_k \in \hat{\mathcal{C}}^r$ then performs local optimization on the combined dataset $\mathcal{T}_k^t \cup \mathcal{M}_k$, seeking to minimize:
\begin{equation}
\begin{aligned}
\min_{\Theta_k^{r,t}} \;& 
\mathbb{E}_{(\mathbf{x},\mathbf{y}) \sim \mathcal{T}_k^t}
\bigl[\ell(f_{\Theta_k}(\mathbf{x}), \mathbf{y})\bigr] \\
&+ \lambda \mathbb{E}_{(\mathbf{x},\mathbf{y}) \sim \mathcal{M}_k}
\bigl[\ell(f_{\Theta_k}(\mathbf{x}), \mathbf{y})\bigr],
\end{aligned}
\end{equation}
where $f_{\Theta_k}$ denotes the segmentation model parameterized by $\Theta_k^{r,t}$, $\ell$ is the pixel-wise cross-entropy loss, and $\lambda$ balances learning on new task data against rehearsal on memorized samples. Upon receiving the locally updated models $\{\Theta_k^{r,t}\}_{k \in \hat{\mathcal{C}}^r}$ from all selected rovers, the server performs federated aggregation to obtain the global model for the next round: $\Theta^{r+1,t} = \frac{1}{|\hat{\mathcal{C}}^r|} \sum_{k \in \hat{\mathcal{C}}^r} \Theta_k^{r,t}$.

The main \textbf{goal} of our FCL framework is to maximize segmentation performance in the entire mission lifecycle. To measure this, we use the average mean Intersection over Union (mIoU) calculated over all classes after the final task $T$ is completed, since mIoU gives how well the model maintains segmentation accuracy for all classes seen during the mission.

The changes in mIoU over time show two main types of degradation, each linked to a different phase of the mission lifecycle. During training within a task, the rover faces immediate conflicts between learning from new task data $\mathcal{T}_k^t$ and keeping the knowledge already stored in the local model $\mathcal{M}_k$. Each local update must balance learning new classes (plasticity) with keeping previous knowledge (stability). As a result, sudden drops in mIoU for earlier tasks right after training on $\mathcal{T}^t$ can be observed.

Over the long term, as the model progresses through the full sequence of tasks, a slower but ongoing degradation appears. Even if strategies during training help reduce immediate forgetting, repeated rounds of local updates and global aggregation, especially under strong Non-IID conditions, cause the representations of early-learned classes to drift. This problem becomes more serious as the memory buffer $\mathcal{M}_k$ covers less of each class when more classes are learned. Because of this, we observe a steady drop in mIoU for tasks learned earlier, which is due to the repeated accumulation of aggregation errors over the course of the mission.

\section{Proposed Method}
\label{sec:proposed_method}

\subsection{Initial Test and Observation}
\label{motivation}

\begin{figure}[t!]
\centering
\begin{minipage}[t]{0.515\columnwidth}
  \centering
  \includegraphics[width=\textwidth,height=9cm,keepaspectratio]{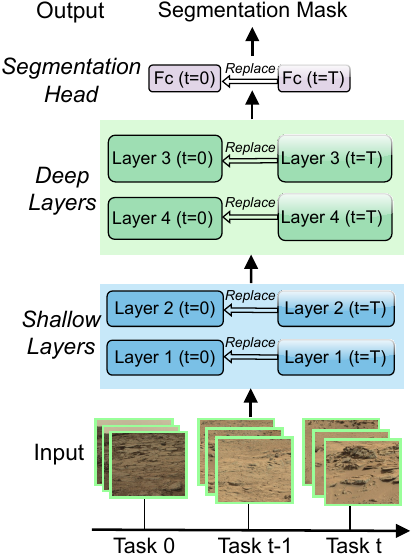}
  
  \vspace{0.2cm}
  {\small (a) Layer replacement}
\end{minipage}%
\hspace{0.02\columnwidth}%
\begin{minipage}[t]{0.465\columnwidth}
  \centering
  \includegraphics[width=\textwidth,height=9cm,keepaspectratio]{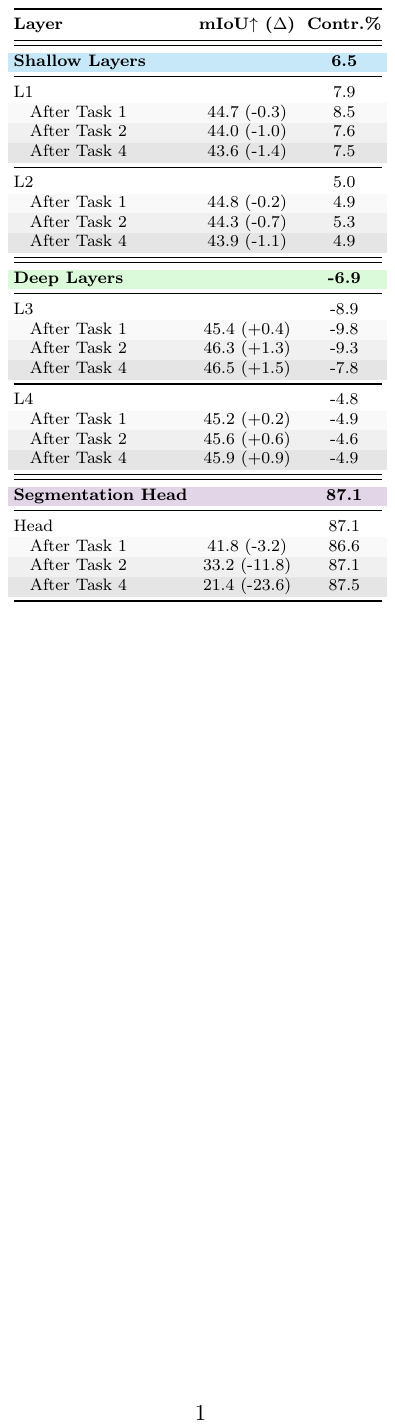}
  
  \vspace{0.2cm}  
  {\small (b) Forgetting analysis}
\end{minipage}
\caption{Layer-wise sensitivity analysis. (a) Controlled replacement experiment where baseline layers from $\Theta^{0}$ are individually replaced with trained layers from $\Theta^{t}$. (b) Quantitative forgetting contribution across different layer groups.}
\label{fig:layer_analysis}
\end{figure}

To understand catastrophic forgetting in FCL, we conduct a controlled layer replacement experiment to isolate the impact of each network layer on performance degradation. The experimental procedure is designed as follows. First, we train a baseline model $\Theta^{0}$ on Task 0. Next, we train a model $\Theta^{t}$ on Task $t$ using standard FCL. We then replace each layer of $\Theta^{0}$ individually with the corresponding layer from $\Theta^{t}$, keeping all other layers fixed. The change in mIoU on Task 0 after each replacement quantifies the specific contribution of that layer to forgetting.

We base this study on MarsScapes~\cite{Liu2023AA}, the first panoramic Mars terrain dataset collected by the Curiosity rover at Gale Crater, under extreme Non-IID conditions, using a Dirichlet distribution with $\beta=0.1$ across $K=30$ rovers. Task 0 includes classes 0–4 (Soil, Sand, Bedrock, Gravel, Rocks) and achieves an initial mIoU of 45.0\%. Task $t$ introduces new terrain classes, while requiring the retention of Task 0 knowledge through the exemplar memory $\mathcal{M}_k$.

The results in Fig.~\ref{fig:layer_analysis} indicate significant variation in the forgetting performance over different network layers. Each layer group's contribution is reported as its signed $\Delta$mIoU normalized by $\sum_l |\Delta\text{mIoU}_l|$ (unsigned shares sum to $\approx\!100\%$ up to rounding), which is an empirical counterpart of the $\kappa_l$ in Lemma~\ref{lemma:decomposition} rather than a numerical identity. Specifically, the segmentation head contributes $87.1\%$ of the total forgetting magnitude, with mIoU decreasing from $45.0\%$ to $21.4\%$. In contrast, replacing deeper backbone layers yields negative forgetting of $-6.9\%$, improving Task~0 performance from $45.0\%$ to $46.5\%$. Shallow layers contribute minimally ($6.5\%$), causing a $1.4$ pp decrease in mIoU.

Note that existing \emph{rehearsal-based} methods~\cite{Rebuffi2017CVPR, He2025TMC}, which mitigate forgetting by maintaining memory buffers $\mathcal{M}_k$ and replaying samples from earlier tasks during new task training, apply the same level of protection to all layers. However, our test results demonstrate that this approach is not optimal, because applying uniform rehearsal constraints presents a dilemma: strong constraints limit beneficial adaptation in the backbone, while weak constraints allow significant forgetting in the segmentation head. This motivates us to design rehearsal mechanisms that enforce stability where necessary and maintain plasticity where it is beneficial.


\subsection{The Proposed Framework}

\begin{figure*}[t]
\centering
\includegraphics[width=\textwidth]{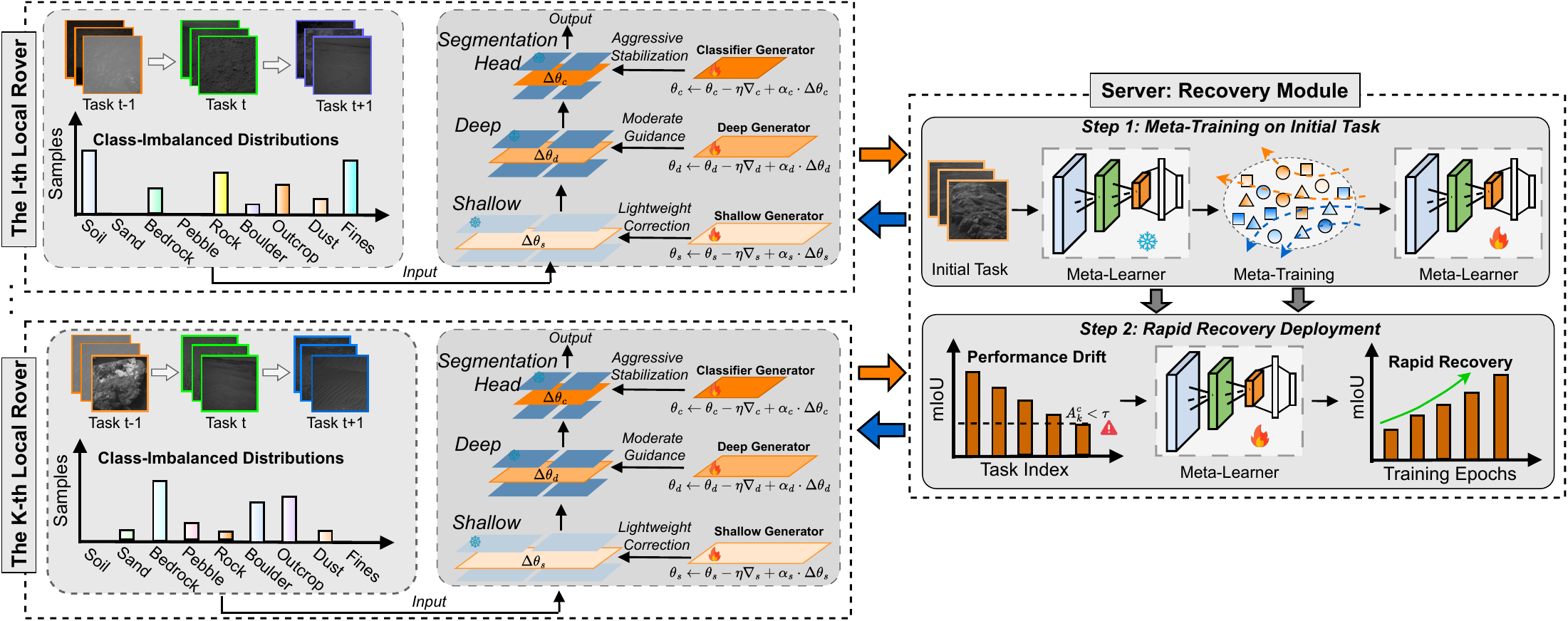}
\caption{Overview of the proposed dual-timescale framework. 
\textbf{Left:} Layer-Selective Rehearsal (LSR) operates during local 
training on rovers, applying stratified corrections through learned 
generators ($\phi_s$, $\phi_d$, $\phi_c$) that produce adaptive updates 
for shallow, deep, and segmentation head layers with heterogeneous protection 
strengths ($\alpha_s < \alpha_d < \alpha_c$). \textbf{Right:} Rapid 
Knowledge Recovery (RKR) is meta-trained on the initial task and deployed 
when long-term degradation is detected, efficiently restoring segmentation 
head performance through learned recovery patterns ($\psi$).}
\label{fig:framework}
\end{figure*}


The proposed framework is shown in Fig.~\ref{fig:framework}. The framework consists of two components for two different time scales: \textit{Layer-Selective Rehearsal (LSR)} operates during local training by applying stratified corrections via learned generators for shallow, deep, and segmentation head layers, while \textit{Rapid Knowledge Recovery (RKR)} provides server-side meta-learned recovery when long-term degradation is detected. At the training-time scale, LSR applies heterogeneous protection strategies through learned generators that decouple plasticity and stability requirements across layer groups. At the long-term scale, RKR provides efficient repair mechanisms that restore degraded segmentation head states through learned correction patterns.

LSR contributes a stratified, online-learned rehearsal strategy for the training-time scale, together with a closed-form suboptimality bound for any uniform-$\alpha$ rehearsal (Theorem~1). RKR contributes a single-pass recovery function meta-trained once on Task~0 and applied without further retraining at subsequent tasks, together with a sample-complexity statement (Corollary~1) that compares its fine-tuning cost against retraining. The design departs from prior layer-selective approaches: protection strengths are continuously stratified ($\alpha_s<\alpha_d<\alpha_c$) rather than binary, and the corrections are produced by learned generators conditioned on the current gradient and memory features, so the per-layer adjustment adapts as training proceeds rather than being fixed in advance. The two components are combined because Theorem~2 indicates that training-time prevention alone cannot bound the long-term degradation under Non-IID federated aggregation, so a recovery mechanism is required in addition.

\subsubsection{Layer-Selective Rehearsal (LSR)}\label{sec:lsr}


LSR applies layer-aware protection strategies during continual learning. Unlike existing rehearsal methods~\cite{Buzzega2020NeurIPS} that impose uniform constraints on all model parameters, LSR separates the plasticity–stability trade-off by allowing the backbone to evolve where beneficial while explicitly stabilizing the segmentation head. To realize this design, LSR employs three learned generator functions, ${\phi_s, \phi_d, \phi_c}$, which provide adaptive update corrections for shallow layers, deep layers, and the segmentation head, respectively. Each generator $\phi_l$, where $l \in \{s, d, c\}$, takes as input compact statistics of the current layer parameters and gradient together with memory-derived features $\mathbf{h}_m$, and outputs a correction term $\Delta\theta_l$ that modulates the standard gradient update. The resulting corrections differ in magnitude to reflect layer-specific sensitivity: shallow layers receive minimal intervention weighted by $\alpha_s$, deep layers receive moderate guidance weighted by $\alpha_d$, and the segmentation head receives strong stabilization weighted by $\alpha_c$, with $\alpha_s < \alpha_d < \alpha_c$. These generators are trained online alongside the model parameters, enabling LSR to adapt continuously as the semantic class space expands over sequential tasks.

Each generator $\phi_l$ operates on compact per-channel summaries of the current layer rather than on its full parameter vector. Let $z_l$ denote the concatenation of the per-channel mean and variance of $\theta_l$, and let $g_l$ denote the corresponding per-channel summary of the gradient $\nabla_l$. The generator input is $[z_l; g_l; \mathbf{h}_m]$, where $\mathbf{h}_m$ is obtained by globally average-pooling the backbone feature maps over the exemplars in $\mathcal{M}_k$. The generator output is a per-channel modulation factor that is broadcast to all parameters within the same channel to form $\Delta\theta_l$. This design keeps the generator size independent of the layer width and enables $\phi_l$ to be instantiated as compact MLPs.

The \textit{shallow layer generator} $\phi_s$ makes lightweight correction with minimal architectural complexity. Given the per-channel summaries $z_s, g_s \in \mathbb{R}^{2C_s}$ of the shallow-layer parameters and gradient (with $C_s$ the shallow-layer channel count) and the memory encoding $\mathbf{h}_m \in \mathbb{R}^{d_m}$, the generator is instantiated as a compact multi-layer perceptron that produces correction:
\begin{equation}\label{equ2}
\Delta\theta_s = \phi_s([z_s; g_s; \mathbf{h}_m]; \omega_s),
\end{equation}
where $[\cdot; \cdot; \cdot]$ denotes concatenation and $\omega_s$ represents learnable parameters. The lightweight design reflects the minimal forgetting contribution (6.5\%) while preserving plasticity for learning new task features. 

The \textit{deep layer generator} $\phi_d$ is designed to guide the beneficial evolution observed in sensitivity analysis (-6.9\%). It produces corrections that enhance rather than constrain adaptation:
\begin{equation}\label{equ3}
\Delta\theta_d = \phi_d([z_d; g_d; \mathbf{h}_m]; \omega_d),
\end{equation}
where $z_d, g_d \in \mathbb{R}^{2C_d}$ are the per-channel summaries of the deep-layer parameters and gradient. The moderately sized MLP learns to amplify update directions that are beneficial for both new task learning and old task preservation, allowing for natural feature evolution while memory-guided corrections prevent drift toward task-specific patterns. 

The \textit{segmentation head generator} $\phi_c$ implements stabilization to address the dominant forgetting source (87.1\%). Beyond parameters and gradients, it incorporates preserved knowledge including class prototypes $\mathbf{p} \in \mathbb{R}^{C^o \times d}$, the memory-pooled feature $\mathbf{h}_m$ (backbone-pooled features of the exemplars in $\mathcal{M}_k$, i.e., the same representation used in Eqs.~\eqref{equ2}--\eqref{equ3}), and geometric features $\mathbf{a}$ capturing inter-class angular relationships:
\begin{equation}\label{equ4}
\Delta\theta_c = \phi_c([z_c; g_c; \mathbf{p}; \mathbf{h}_m; \mathbf{a}]; \omega_c),
\end{equation}
where $z_c, g_c \in \mathbb{R}^{2C_c}$ are the per-channel summaries of the segmentation-head parameters and gradient. The architecture employs multiple specialized encoders for different input modalities, followed by a fusion layer that produces correction $\Delta\theta_c$. Operating on $\mathbf{h}_m$ rather than on raw image-mask pairs keeps the input dimensionality tractable for a compact MLP-based generator. The richer input context enables precise boundary preservation while accommodating new class integration. The parameter update for each layer group $l$ integrates standard gradient descent with learned correction weighted by $\alpha_l$:
\begin{equation}\label{equ5}
\theta_l \leftarrow \theta_l - \eta \nabla_l + \alpha_l \Delta\theta_l,
\end{equation}
where $\eta$ is the learning rate. The stratified weighting $\alpha_s < \alpha_d < \alpha_c$ ensures minimal intervention in shallow layers, moderate guidance in deep layers, and aggressive stabilization in the segmentation head, directly reflecting their respective forgetting contributions.

The generators must be trained online rather than pre-trained due to incremental evaluation characteristics of the FCL. After completing task $t$, models are evaluated on the cumulative test set spanning tasks 0 through $t$, requiring generators to adapt continuously to the expanding label space. Pre-trained generators with fixed capacity cannot accommodate the growing number of classes and evolving class relationships across $T$ sequential tasks. During local training on rover $\mathcal{C}_k$ at round $r$ of task $t$, we jointly optimize model parameters $\Theta_k^{r,t} = \{\theta_s, \theta_d, \theta_c\}$ and generator parameters $\Omega = \{\omega_s, \omega_d, \omega_c\}$. The selective rehearsal loss combines task learning with memory rehearsal:
\begin{equation}\label{equ6}
\begin{aligned}
\mathcal{L}_s \;=\;&
\mathbb{E}_{(\mathbf{x},\mathbf{y}) \sim \mathcal{T}_k^t}
\bigl[\ell(f_\Theta(\mathbf{x}), \mathbf{y})\bigr] \\
&+ \lambda \mathbb{E}_{(\mathbf{x},\mathbf{y}) \sim \mathcal{M}_k}
\bigl[\ell(f_\Theta(\mathbf{x}), \mathbf{y})\bigr],
\end{aligned}
\end{equation}
where $\ell$ denotes pixel-wise cross-entropy loss and $\lambda$ balances new task learning against old task preservation. The generators are updated to minimize forgetting while enabling model parameters to learn effectively.

Note in each communication round that, selected rovers receive global model $\Theta^{r,t}$ from server $\mathcal{S}$ and perform local training with LSR. The generators $\{\phi_s, \phi_d, \phi_c\}$ operate locally during gradient computation, giving corrections that guide parameter updates according to the stratified rule. After local training, rovers transmit updated model parameters to the server for aggregation: $\Theta^{r+1,t} = \frac{1}{|\hat{\mathcal{C}}^r|} \sum_{k \in \hat{\mathcal{C}}^r} \Theta_k^{r,t}$. Generator parameters $\Omega$ remain local and are not communicated, ensuring minimal overhead beyond standard federated learning.

\subsubsection{Rapid Knowledge Recovery (RKR)}\label{sec:rkr}


Although LSR mitigates forgetting at the training-time scale, long-term degradation over sequential tasks remains challenging. This is because, on the one hand, prior work~\cite{Wang2024NeurIPS,Khademi2024NeurIPS} has established that catastrophic forgetting in continual learning is inevitable due to the accumulation of incremental errors over extended task sequences, which are further augmented by repeated aggregation under Non-IID conditions. On the other hand, while fine-tuning on memory buffers can partially restore performance, the communication overhead and computational cost make frequent complete retraining impossible in resource-constrained rovers. To address these issues, we introduce RKR that learns to efficiently restore degraded segmentation head states by leveraging preserved knowledge for targeted corrections.

The recovery function $\psi$ operates exclusively on the segmentation head to address the dominant forgetting source. Given a degraded segmentation head state $\theta_c^* \in \mathbb{R}^{d_c}$ after task $t$, the function generates correction $\Delta\theta_c$ by integrating multiple sources of preserved knowledge. The architecture employs specialized encoders for degraded parameters ($E_s$), class prototypes ($E_p$), memory exemplars ($E_m$), and geometric features ($E_g$), which are fused through attention-weighted aggregation. The complete forward pass is formulated as:
\begin{equation}\label{equ7}
\begin{aligned}
\Delta\theta_c &= \psi(\theta_c^*, \mathbf{p}, \mathcal{M}_k, \mathbf{a}; \xi) \\
&= \mathcal{F}\left(
    \sum_{i \in \{s,p,m,g\}}
    \frac{\exp(\mathbf{q}^\top E_i(\cdot))}{\sum_{j} \exp(\mathbf{q}^\top E_j(\cdot))}
    \cdot E_i(\cdot)
\right),
\end{aligned}
\end{equation}
where $E_s(\theta_c^*)$, $E_p(\mathbf{p})$, $E_m(\mathcal{M}_k)$, and $E_g(\mathbf{a})$ produce encoded representations, $\mathbf{q}$ is a learnable query vector computing attention weights via softmax, and $\mathcal{F}$ denotes the fusion decoder network parameterized by $\xi$. The recovered segmentation head is obtained as $\theta_c^+ = \theta_c^* + \Delta\theta_c$, followed by brief fine-tuning on $\mathcal{M}_k$ to stabilize restoration.

The recovery function is trained on task 0 using episodic procedures inspired by meta-learning principles~\cite{Wang2022CVPR}. Each episode simulates degradation by partitioning task 0's label space $\mathcal{Y}^0$ into training subset $\mathcal{Y}^-$ and held-out subset $\mathcal{Y}^+ = \mathcal{Y}^0 \setminus \mathcal{Y}^-$. A degraded segmentation head $\theta_c^*$ is constructed by optimizing only on $\mathcal{Y}^-$, inducing artificial forgetting on $\mathcal{Y}^+$. The recovery parameters $\xi$ are then optimized through a bi-level objective:
\begin{align}\label{equ8}
\min_\xi \mathbb{E}_{\mathcal{Y}^-, \mathcal{Y}^+} &\left[ \frac{1}{|\mathcal{D}_{\mathcal{Y}^+}|} \sum_{(\mathbf{x},\mathbf{y}) \in \mathcal{D}_{\mathcal{Y}^+}} \ell(f_{\theta_c^* + \psi(\cdot; \xi)}(\mathbf{x}), \mathbf{y}) \right],
\end{align}
where $\theta_c^* = \arg\min_{\theta_c} \frac{1}{|\mathcal{D}_{\mathcal{Y}^-}|} \sum_{(\mathbf{x},\mathbf{y}) \in \mathcal{D}_{\mathcal{Y}^-}} \ell(f_{\theta_c}(\mathbf{x}), \mathbf{y})$, $\mathcal{D}_{\mathcal{Y}^-}$ and $\mathcal{D}_{\mathcal{Y}^+}$ denote data distributions over respective label subsets, and $\ell$ is the pixel-wise cross-entropy loss. 

The above episodic training learns general recovery patterns rather than class-specific corrections, which enables direct transfer to tasks $t > 0$ without retraining $\psi$. During deployment phase, each rover monitors cumulative mIoU $\mathcal{I}_k^c$ after completing task $t$. When performance on early tasks drops below threshold $\tau$, the rover invokes recovery by generating $\Delta\theta_c$ using current prototypes $\mathbf{p}$, memory buffer $\mathcal{M}_k$, and geometric features $\mathbf{a}$, then applies the correction $\theta_c^+ = \theta_c^* + \Delta\theta_c$ followed by brief fine-tuning. The recovery function parameters $\xi$ remain constant in all tasks that requires no additional training or communication beyond the initial Task 0 meta-learning phase, maintaining the minimal overhead.

\subsection{Training Pipeline}

\begin{algorithm}[t!]
\caption{Training Lifecycle in FCL (\colorbox{rgb:red!2,65;green!30,60;blue!20,125}{LSR Training}, \colorbox{rgb:red!2,65;green!30,90;blue!20,125}{RKR Triggering}, and \colorbox{rgb:red!30,155;green!20,20;blue!20,30}{Aggregation})}\label{alg:training}
\textbf{Input:} $K$ rovers $\{\mathcal{C}_k\}_{k=1}^K$, $T$ tasks, $R$ rounds per task, threshold $\tau$
\begin{algorithmic}[1]
\STATE \textbf{Task 0 Initialization:}
\STATE Initialize $\Theta^{0,0}$, memory $\{\mathcal{M}_k\}$, generators $\{\phi_s, \phi_d, \phi_c\}$
\STATE Train recovery function $\psi$ via episodic meta-learning
\FOR{$t = 1$ to $T$}
    \FOR{$r = 1$ to $R$}
        \colorbox{rgb:red!2,65;green!30,60;blue!20,125}{
        \parbox{0.82\columnwidth}{
        \STATE \gray{$\triangleright$ \textit{Rovers perform local training with LSR:}}
        \FOR{each selected rover $\mathcal{C}_k \in \hat{\mathcal{C}}^r$}
            \STATE Receive $\Theta^{r,t}$; Initialize $\Theta_k^{r,t} = \Theta^{r,t}$;
            \FOR{each batch $(\mathbf{x}, \mathbf{y}) \sim \mathcal{T}_k^t \cup \mathcal{M}_k$}
                \STATE Compute $\nabla_s, \nabla_d, \nabla_c$ from $\mathcal{L}_s$ (Eq.~\eqref{equ6});
                \STATE Generate corrections (Eqs.~\eqref{equ2}-\eqref{equ4}); 
                \STATE Update layers (Eq.~\eqref{equ5});
                \STATE Update generators $\Omega$;
            \ENDFOR
            \STATE Upload $\Theta_k^{r,t}$ to server;
        \ENDFOR
        }}
            
        \colorbox{rgb:red!30,155;green!20,20;blue!20,30}{
        \parbox{0.82\columnwidth}{
        \STATE \gray{$\triangleright$ \textit{Server performs aggregation:}}
        \STATE $\Theta^{r+1,t} \leftarrow \frac{1}{|\hat{\mathcal{C}}^r|} \sum_{k \in \hat{\mathcal{C}}^r} \Theta_k^{r,t}$; Distribute to rovers;
        }}
    \ENDFOR
    
    \colorbox{rgb:red!2,65;green!30,90;blue!20,125}{
    \parbox{0.82\columnwidth}{
    \STATE \gray{$\triangleright$ \textit{Long-term recovery triggering:}}
    \FOR{each rover $\mathcal{C}_k$}
        \STATE Evaluate cumulative mIoU $\mathcal{I}_k^c$;
        \IF{$\mathcal{I}_k^c < \tau$}
            \STATE Generate $\Delta\theta_c$ via $\psi$ (Eq.~\eqref{equ7}); 
            \STATE Apply $\theta_c^+ \leftarrow \theta_c^* + \Delta\theta_c$;
            \STATE Fine-tune on $\mathcal{M}_k$ for few epochs;
        \ENDIF
    \ENDFOR
    }}
\ENDFOR
\RETURN $\Theta^{R,T}$
\end{algorithmic}
\end{algorithm}

The training pipeline of our framework is presented in \textbf{Algorithm~\ref{alg:training}}. Task 0 serves as the initialization phase where all rovers train the initial model on their local data and construct memory buffers $\mathcal{M}_k$ following iCaRL~\cite{Rebuffi2017CVPR}. Note that the recovery function $\psi$ is trained during Task 0 via the episodic meta-learning procedure described in Section~\ref{sec:rkr}, learning transferable recovery patterns that will be applied to all subsequent tasks without retraining. The learned generators $\{\phi_s, \phi_d, \phi_c\}$ are initialized and will be updated online throughout the training.

For each task $t \geq 1$, the training proceeds through $R$ communication rounds. At round $r$, the server $\mathcal{S}$ distributes the global model $\Theta^{r,t}$ to selected rovers $\hat{\mathcal{C}}^r$ (blue lines 6-16 of \textbf{Algorithm~\ref{alg:training}}). Each selected rover $\mathcal{C}_k$ performs local training by sampling batches from $\mathcal{T}_k^t \cup \mathcal{M}_k$ and applying LSR. For each batch, the rover computes task gradients and generates layer-wise corrections via the three generators following Equations~\eqref{equ2}-\eqref{equ4}, then updates parameters according to the stratified rule in Equation~\eqref{equ5}. The generators $\Omega = \{\omega_s, \omega_d, \omega_c\}$ are jointly optimized with model parameters to minimize the selective rehearsal loss in Eq.~\eqref{equ6}. After local training, rovers upload their updated models to the server, which performs federated aggregation $\Theta^{r+1,t} = \frac{1}{|\hat{\mathcal{C}}^r|} \sum_{k \in \hat{\mathcal{C}}^r} \Theta_k^{r,t}$ (orange lines 17-18 of \textbf{Algorithm~\ref{alg:training}}). Generator parameters remain local and are not communicated.

After completing all $R$ rounds for task $t$, each rover evaluates its cumulative mIoU on tasks $\{0, \ldots, t\}$ (see green lines 20-28 of \textbf{Algorithm~\ref{alg:training}}). When performance on early tasks drops below threshold $\tau$, RKR is invoked. The rover generates correction $\Delta\theta_c$ using the pre-trained recovery function $\psi$ via Equation~\eqref{equ7}, applies the correction $\theta_c^+ = \theta_c^* + \Delta\theta_c$, and performs brief fine-tuning on memory buffer $\mathcal{M}_k$. Since the recovery function parameters $\xi$ remain fixed after Task 0 training, the process incurs minimal computational overhead and requires no additional communication with the server.
\section{Theoretical Analysis}
\label{sec:theoretical_analysis}

In this section, we establish theoretical foundations that support our dual-timescale framework. We first formalize the forgetting metric and state the necessary assumptions. We then characterize training-time forgetting through layer-wise decomposition and prove the suboptimality of uniform rehearsal strategies. Finally, we analyze long-term degradation inevitability and derive sample complexity bounds for recovery mechanisms.

The theoretical analysis in this section is intended to provide qualitative design guidance rather than tight performance bounds. The three results that go beyond standard machinery are: Lemma~\ref{lemma:decomposition}, a layer-wise decomposition of forgetting that provides the structural basis for stratified protection; Theorem~\ref{thm:uniform_suboptimal}, a closed-form suboptimality bound for any uniform-$\alpha$ rehearsal; and Theorem~\ref{thm:longterm}, a long-term degradation lower bound that couples federated aggregation drift with memory-buffer-coverage dilution. The remaining steps (smoothness-based bounds, gradient-variance decomposition) use standard tools and serve as building blocks for these three results.

\begin{definition}[Task-Level Forgetting]\label{def:forgetting}
Let $\mathcal{P}^t$ denote the global test distribution for task $t$. 
For task $t' < t$, the forgetting of task $t'$ after training on 
task $t$ is defined as:
\begin{equation}
\begin{aligned}
\Delta^{t,t'} :=\;&
\mathbb{E}_{(\mathbf{x},\mathbf{y}) \sim \mathcal{P}^{t'}}
\bigl[\ell(f_{\Theta^t}(\mathbf{x}), \mathbf{y})\bigr] \\
&- \mathbb{E}_{(\mathbf{x},\mathbf{y}) \sim \mathcal{P}^{t'}}
\bigl[\ell(f_{\Theta^{t'}}(\mathbf{x}), \mathbf{y})\bigr].
\end{aligned}
\end{equation}
The cumulative forgetting at task $t$ is:
\begin{equation}
\Delta^t := \frac{1}{t} \sum_{t'=0}^{t-1} \Delta^{t,t'}.
\end{equation}
\end{definition}

\noindent We now present the assumptions required for our theoretical results.

\begin{assumption}[Layer-wise Smoothness]\label{asm:smoothness}
For each layer $l \in \{s, d, c\}$, the loss function is $L_l$-smooth with respect to layer parameters, i.e., for any $\theta_l, \tilde{\theta}_l$:
\begin{equation}
\begin{aligned}
&\|\nabla_{\theta_l} \ell(f_\Theta(\mathbf{x}), \mathbf{y}) 
- \nabla_{\tilde{\theta}_l} \ell(f_{\tilde{\Theta}}(\mathbf{x}), \mathbf{y})\| \\
&\qquad\qquad\qquad\leq L_l \|\theta_l - \tilde{\theta}_l\|,
\end{aligned}
\end{equation}
where $L_s, L_d, L_c$ are layer-specific smoothness parameters.
\end{assumption}

\begin{assumption}[Bounded Gradient Variance]\label{asm:variance}
For any model parameters $\Theta$ and rover $k$, the stochastic gradient has bounded variance:
\begin{equation}
\begin{aligned}
&\mathbb{E}_{(\mathbf{x},\mathbf{y}) \sim \mathcal{P}_k}
\Big[\|\nabla_\Theta \ell(f_\Theta(\mathbf{x}), \mathbf{y}) \\
&\qquad\qquad - \nabla_\Theta \mathbb{E}[\ell]\|^2\Big] 
\leq \sigma^2.
\end{aligned}
\end{equation}
\end{assumption}

\begin{assumption}[Distribution Heterogeneity]\label{asm:heterogeneity}
The Non-IID degree is characterized by heterogeneity parameter $\gamma^2$:
\begin{equation}
\begin{aligned}
&\frac{1}{K} \sum_{k=1}^K 
\Big\|\nabla_\Theta \mathbb{E}_{(\mathbf{x},\mathbf{y}) \sim \mathcal{P}_k}[\ell] \\
&\qquad - \nabla_\Theta \mathbb{E}_{(\mathbf{x},\mathbf{y}) \sim \bar{\mathcal{P}}}[\ell]\Big\|^2 
\leq \gamma^2,
\end{aligned}
\end{equation}
where $\bar{\mathcal{P}} = \frac{1}{K}\sum_{k=1}^K \mathcal{P}_k$ is the average distribution.
\end{assumption}

\begin{assumption}[Memory Buffer Constraint]\label{asm:memory}
Each rover maintains a fixed-capacity memory buffer $|\mathcal{M}_k| = M$. As tasks accumulate, the per-class coverage diminishes:
\begin{equation}
\frac{|\mathcal{M}_k|}{C^o} = \frac{M}{\sum_{i=0}^{t-1}C^i}
\to 0 \quad \text{as } t \to \infty.
\end{equation}
\end{assumption}

We acknowledge that the layer-wise smoothness assumption and the bounded gradient variance assumption are idealized conditions when applied to deep segmentation networks trained under strong heterogeneity. The theoretical results in this section should therefore be read as directional insight about the design: they indicate which failure modes are structural (layer-wise heterogeneous forgetting, long-term degradation under Non-IID aggregation) rather than artifacts of a particular training run. The experiments in Section~\ref{sec:experiments} are intended to provide empirical validation of these qualitative predictions.

\subsection{Training-Time Forgetting Characterization}

We now characterize how catastrophic forgetting manifests heterogeneously across network layers and establish the theoretical foundation for layer-selective rehearsal.

\begin{lemma}[Layer-wise Forgetting Decomposition]\label{lemma:decomposition}
Let $\Theta^{t'} = \{\theta_s^{t'}, \theta_d^{t'}, \theta_c^{t'}\}$ and $\Theta^t = \{\theta_s^t, \theta_d^t, \theta_c^t\}$ be model states after tasks $t'$ and $t > t'$, respectively. Under Assumption~\ref{asm:smoothness}, there exist layer-wise contribution coefficients $\kappa_s, \kappa_d, \kappa_c \geq 0$ with $\kappa_s + \kappa_d + \kappa_c = 1$ such that:
\begin{equation}
\begin{aligned}
\Delta^{t,t'} ={}&
\sum_{l \in \{s,d,c\}} \kappa_l \,
\mathbb{E}_{\mathcal{P}^{t'}}
\Big[
\ell\big(f_{\Theta^{t',l \leftarrow t}}(\mathbf{x}), \mathbf{y}\big)
\\
&\quad
-
\ell\big(f_{\Theta^{t'}}(\mathbf{x}), \mathbf{y}\big)
\Big]
+ \mathcal{O}\!\left(\|\Theta^t - \Theta^{t'}\|^2\right),
\end{aligned}
\end{equation}
where $\Theta^{t',l \leftarrow t} := 
\Theta^{t'} \setminus \{\theta_l^{t'}\} \cup \{\theta_l^t\}$.
\end{lemma}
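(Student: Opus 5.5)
The natural route is a first-order Taylor expansion of the population risk on the old task. Write $F(\Theta) := \mathbb{E}_{\mathcal{P}^{t'}}[\ell(f_\Theta(\mathbf{x}),\mathbf{y})]$ and $\delta_l := \theta_l^t - \theta_l^{t'}$ for $l\in\{s,d,c\}$. By Definition~\ref{def:forgetting}, $\Delta^{t,t'} = F(\Theta^t) - F(\Theta^{t'})$.

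\textbf{Step 1 (expand the full change).} Under Assumption~\ref{asm:smoothness}, applied layer by layer, the standard descent-lemma bound gives
\[
F(\Theta^t) - F(\Theta^{t'}) = \sum_{l} \langle \nabla_{\theta_l} F(\Theta^{t'}), \delta_l\rangle + \mathcal{O}\big(\|\Theta^t-\Theta^{t'}\|^2\big).
\]
The $\mathcal{O}$-constant is of order $\max_l L_l$. There is one subtlety: Assumption~\ref{asm:smoothness} controls only the within-layer Lipschitz constant of $\nabla_{\theta_l}$. I will read it, as the statement implicitly does, as also bounding the cross-layer blocks of the Hessian. Otherwise the remainder contains terms of the form $\langle \delta_l, H_{lm}\delta_m\rangle$ that the assumption does not control.

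\textbf{Step 2 (expand each single-layer swap).} The model $\Theta^{t',l\leftarrow t}$ differs from $\Theta^{t'}$ only in block $l$. Applying the same bound along the segment from $\theta_l^{t'}$ to $\theta_l^t$ gives
\[
D_l := F(\Theta^{t',l\leftarrow t}) - F(\Theta^{t'}) = \langle \nabla_{\theta_l}F(\Theta^{t'}),\delta_l\rangle + \mathcal{O}(L_l\|\delta_l\|^2).
\]
Subtracting this from Step~1 yields the additive decomposition
\[
\Delta^{t,t'} = \sum_l D_l + \mathcal{O}\big(\|\Theta^t-\Theta^{t'}\|^2\big).
\]
This is the real content of the lemma. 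It says that, to first order, forgetting is the sum of the per-layer replacement effects, which is exactly what the experiment in Fig.~\ref{fig:layer_analysis} measures.

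\textbf{Step 3 (the normalization, the main obstacle).} Step~2 naturally gives weights $\kappa_l = 1$, not weights summing to one. I see two ways to obtain the stated convex form.

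\emph{First option: rescale the replacement.} Redefine the swap as a scaled replacement $\theta_l^{t'} + 3\kappa_l$-type moves. Equivalently, write $\sum_l D_l = \sum_l \tfrac13 (3D_l)$ and note that by linearity of the first-order term, $3D_l$ equals the effect of a threefold step in block $l$ up to the quadratic remainder.

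\emph{Second option: an explicit convex-hull condition.} Any convex combination $\sum_l \kappa_l D_l$ lies between $\min_l D_l$ and $\max_l D_l$. So nonnegative $\kappa_l$ summing to one with $\Delta^{t,t'} = \sum_l\kappa_l D_l + \mathcal{O}(\cdot)$ exist only if $\sum_l D_l$ falls in that interval, modulo the remainder. This holds, for example, when one layer dominates and the others nearly cancel. That is precisely the empirical regime of Fig.~\ref{fig:layer_analysis}, where the head has $D_c$ large and the shallow and deep effects are of opposite sign ($+6.5\%$ versus $-6.9\%$).

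I would try the second option first. Choose $\kappa_l$ by the intermediate value theorem on the simplex whenever the hull condition holds. I would then state plainly that in general the decomposition is additive, so the normalized $\kappa_l$ should be read as relative shares $|D_l|/\sum_m |D_m|$, matching the empirical normalization quoted in Section~\ref{motivation}.
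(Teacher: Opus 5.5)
Your Steps~1--2 follow the paper's route (Taylor expansion of the old-task risk around $\Theta^{t'}$, with the remainder controlled by the $L_l$) and are sound. They are more careful than the paper's proof, which silently drops the cross-block Hessian terms and only asserts that the first-order term corresponds to the layer-replacement effect. A side remark on your Step~1 caveat: if Assumption~\ref{asm:smoothness} is read literally, with $\Theta,\tilde\Theta$ arbitrary, then setting $\theta_l=\tilde\theta_l$ forces $\nabla_{\theta_l}\ell$ to be independent of the other blocks. The loss is then block-separable and $\Delta^{t,t'}=\sum_l D_l$ holds exactly. Under the usual within-block reading you do need the extra cross-block condition you flag.

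The gap is Step~3, and it cannot be closed. The lemma as stated is false whenever the constant in the $\mathcal{O}$ term does not depend on the displacement itself. Take a loss linear in the parameters, $F(\Theta)=\sum_l a_l^\top\theta_l$, which satisfies Assumption~\ref{asm:smoothness} for any $L_l\ge 0$. Pick $\delta_l$ with $a_l^\top\delta_l=\epsilon>0$ for all three blocks. Then $D_l=\epsilon$ exactly and $\Delta^{t,t'}=3\epsilon$, while every convex combination $\sum_l\kappa_l D_l$ equals $\epsilon$. Rescaling $\delta\mapsto s\delta$ with $s\to 0$ shows that the discrepancy $2s\epsilon$ is not $\mathcal{O}(s^2\|\delta\|^2)$. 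The same failure occurs whenever the three swap effects share a sign and have comparable first-order size.

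Your option~1 does not help: it replaces the actual swap $\Theta^{t',l\leftarrow t}$ by a threefold step, so it proves a statement about different quantities. Option~2 proves the lemma only under an added hypothesis. What is provable is either the additive decomposition with unit weights, or the convex form conditional on $\sum_l D_l$ lying within $\mathcal{O}(\|\Theta^t-\Theta^{t'}\|^2)$ of $[\min_l D_l,\max_l D_l]$. You should state the result that way.

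Do not try to patch Step~3 with the paper's own choice of weights. The paper sets $\kappa_l\propto\|\nabla_{\theta_l}\ell\|\,\|\theta_l^t-\theta_l^{t'}\|$, notes that $\sum_l\kappa_l=1$, and asserts that the first-order term ``captures the impact of replacing individual layers.'' But that first-order term is $\sum_l\langle\nabla_{\theta_l}\ell,\delta_l\rangle\approx\sum_l D_l$, with weight one on each $D_l$. Reweighting by these $\kappa_l$ changes it at first order. In the example above with $\delta_l\parallel a_l$, all norm products equal $\epsilon$, so $\kappa_l=1/3$ and the right-hand side is $\epsilon$, not $3\epsilon$. Your diagnosis is therefore sharper than the published argument.

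Likewise, your closing reading $\kappa_l=|D_l|/\sum_m|D_m|$ is a reasonable description of the empirical shares in Fig.~\ref{fig:layer_analysis}. Those weights do not satisfy the displayed identity in general either, so present them as an interpretation, not as the output of the proof.
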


\begin{proof}
By Taylor expansion of the loss function around $\Theta^{t'}$:
\begin{equation}
\begin{aligned}
\ell(f_{\Theta^t}) - \ell(f_{\Theta^{t'}}) 
&\approx \sum_{l \in \{s,d,c\}} \langle \nabla_{\theta_l}\ell, \theta_l^t - \theta_l^{t'} \rangle \\
&\quad + \frac{1}{2}\sum_{l} (\theta_l^t - \theta_l^{t'})^\top 
\nabla^2_{\theta_l}\ell (\theta_l^t - \theta_l^{t'}).
\end{aligned}
\end{equation}
Define the layer-wise contribution coefficient as:
\begin{equation}
\kappa_l := \frac{\|\nabla_{\theta_l}\ell\| \cdot \|\theta_l^t - \theta_l^{t'}\|}{\sum_{l' \in \{s,d,c\}}\|\nabla_{\theta_{l'}}\ell\| \cdot \|\theta_{l'}^t - \theta_{l'}^{t'}\|}.
\end{equation}
By construction, $\sum_l \kappa_l = 1$. The first-order term captures the impact of replacing individual layers, which corresponds to the controlled layer replacement experiment. The second-order terms are bounded by Assumption~\ref{asm:smoothness}: for each layer $l$, we have $\|(\theta_l^t - \theta_l^{t'})^\top \nabla^2_{\theta_l}\ell (\theta_l^t - \theta_l^{t'})\| \leq L_l\|\theta_l^t - \theta_l^{t'}\|^2$, yielding the stated $\mathcal{O}(\|\Theta^t - \Theta^{t'}\|^2)$ residual.
\end{proof}

\begin{lemma}[Gradient Conflict Characterization]\label{lemma:gradient_conflict}
During training on task $t$, the gradient conflict between new task data and old task memory exhibits layer-wise heterogeneity. For classifier parameters:
\begin{equation}
\begin{aligned}
&\mathbb{E}\Big[\|\nabla^n_c - \nabla^o_c\|^2\Big] \\
&\geq \frac{L_c^2}{4} \cdot \Big(\frac{C^t}{C^o}\Big)^2 
\cdot \Big\|\mathbb{E}_{\mathbf{x} \sim \mathcal{P}^o}[\phi(\mathbf{x})] \\
&\qquad\qquad\qquad - \mathbb{E}_{\mathbf{x} \sim \mathcal{P}^n}[\phi(\mathbf{x})]\Big\|^2,
\end{aligned}
\end{equation}
where $\nabla^n_c$ and $\nabla^o_c$ denote gradients on new and old task data, $\mathcal{P}^n$ and $\mathcal{P}^o$ denote new and old data distributions. For deep layer parameters:
\begin{equation}
\mathbb{E}\Big[\|\nabla^n_d - \nabla^o_d\|^2\Big] 
\leq \frac{L_d^2}{d_d} \cdot 
\mathbb{E}\Big[\|\mathbf{x}^n - \mathbf{x}^o\|^2\Big],
\end{equation}
where $\phi(\mathbf{x})$ denotes backbone features, $d_d$ is the deep layer dimension, and $\mathbf{x}^n, \mathbf{x}^o$ are samples from new and old distributions.
\end{lemma}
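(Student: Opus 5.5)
The plan is to model the segmentation head as a linear (per-pixel) softmax classifier on top of backbone features $\phi(\mathbf{x})$, so that $\nabla_c \ell = (\mathbf{s}(\mathbf{x}) - \mathbf{e}_{\mathbf{y}})\,\phi(\mathbf{x})^\top$, where $\mathbf{s}$ is the softmax output and $\mathbf{e}_{\mathbf{y}}$ the one-hot label. The two bounds are then handled separately: a lower bound for the head, and an upper bound for the deep layers.

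\textbf{Head lower bound.} First apply Jensen, $\mathbb{E}\|\nabla^n_c - \nabla^o_c\|^2 \geq \|\mathbb{E}\nabla^n_c - \mathbb{E}\nabla^o_c\|^2$, so it suffices to lower bound the difference of mean gradients. Next restrict to the block of rows of the head matrix indexed by the $C^t$ new classes. On old data these rows see only softmax mass, giving coefficient $\mathbf{s}_j$. On new data they additionally see the $-1$ label term. Writing the expected row residuals as roughly constant, the new-class block of the mean gradient difference becomes approximately $\bar r\,(\mathbb{E}_{\mathcal{P}^n}\phi - \mathbb{E}_{\mathcal{P}^o}\phi)$, with $\bar r$ an average residual magnitude. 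The factor $C^t/C^o$ should come from this residual. Under the near-uniform softmax of an incremental model, the mass a sample assigns to the new block scales like $C^t/(C^o+C^t)$, which is at least $\tfrac12\, C^t/C^o$ when $C^t\le C^o$. Squaring this gives $\tfrac14 (C^t/C^o)^2$. The remaining $L_c^2$ would be absorbed by normalizing the feature scale with the head smoothness constant, e.g.\ by identifying $L_c$ with the Lipschitz modulus of $\phi\mapsto\nabla_c\ell$ in the linear-head model. Any slack in this step goes into the constant.

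\textbf{Deep-layer upper bound.} Treat $\nabla_d$ as a function of the input and bound $\|\nabla_d(\mathbf{x}^n)-\nabla_d(\mathbf{x}^o)\| \le L_d \|\mathbf{x}^n-\mathbf{x}^o\|$ by extending Assumption~\ref{asm:smoothness} to input perturbations through the chain rule. Then square and take the expectation. The $1/d_d$ factor would come from a per-coordinate averaging argument. If the deep-layer gradient is spread over $d_d$ channels whose input sensitivities are roughly isotropic (after global pooling, as in the generator input design), the per-coordinate squared difference carries a $1/d_d$ factor. I would state this as an additional isotropy condition.

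\textbf{Main obstacle.} The hard part is making the constants $L_c^2/4$ and $L_d^2/d_d$ honest. Assumption~\ref{asm:smoothness} is smoothness in parameters, not in inputs or features. Moreover, a \emph{lower} bound never follows from smoothness alone. I would therefore first try to make explicit the modelling conditions these constants silently require: a linear head, near-uniform softmax on the new block, and input-Lipschitz, isotropic deep gradients. Within that model I would present the result as a directional statement, as the paper does for Lemma~\ref{lemma:decomposition}, rather than attempt a fully general derivation.
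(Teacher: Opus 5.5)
There is a genuine gap in your head bound. It sits at the step where the new-class block of $\mathbb{E}\nabla^n_c-\mathbb{E}\nabla^o_c$ is replaced by $\bar r\,(\mu_n-\mu_o)$, with $\mu_n:=\mathbb{E}_{\mathbf{x}\sim\mathcal{P}^n}[\phi(\mathbf{x})]$ and $\mu_o$ defined likewise. Your Jensen reduction to mean gradients is fine.

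In your own linear-softmax model, row $j$ of that block is $\mathbb{E}_{\mathcal{P}^n}[(s_j-\mathbf{1}[\mathbf{y}=j])\phi]-\mathbb{E}_{\mathcal{P}^o}[s_j\phi]$. For $s_j\approx\bar s=1/(C^o+C^t)$ on both distributions this equals $\bar s(\mu_n-\mu_o)-\pi_j\mu_{n,j}$, where $\pi_j$ is the frequency and $\mu_{n,j}$ the feature mean of class $j$ in the new data. The label term is not a multiple of $\mu_n-\mu_o$, and it is the \emph{leading} term: $\pi_j\sim 1/C^t$ for balanced new classes, against $\bar s$. So you have kept the subleading softmax part and dropped the dominant part, and a lower bound cannot be read off after that.

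The factor bookkeeping also fails. $C^t/(C^o+C^t)$ is the total softmax mass on the new block, whereas the per-row coefficient is $\bar s$. Summing squared rows over the $C^t$ new classes gives $C^t/(C^o+C^t)^2\geq C^t/(4(C^o)^2)$ for $C^t\le C^o$. That is short of $\tfrac14(C^t/C^o)^2$ by a factor $C^t$; the two agree only for $C^t=1$.

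The paper does the opposite of your approximation. It in effect keeps only the label terms, $\nabla^n_c\approx\frac{1}{C^t}\sum_{i>C^o}\mathbf{e}_i\otimes\mu_n$ and $\nabla^o_c\approx\frac{1}{C^o}\sum_{i\le C^o}\mathbf{e}_i\otimes\mu_o$. The key point is that these live on \emph{disjoint} row sets, so $\|\nabla^n_c-\nabla^o_c\|^2=\|\nabla^n_c\|^2+\|\nabla^o_c\|^2$ and no cancellation can occur. That is the idea you are missing: restricting to the new-class block throws away the old-class rows, where the old-data label term sits. Even so, this route only yields something like $\|\mu_n\|^2/C^t+\|\mu_o\|^2/C^o\ge\|\mu_o-\mu_n\|^2/(2\max(C^o,C^t))$. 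The paper asserts the stated constant ``directly'' and attributes $L_c^2$ to Lipschitz continuity, which is no more justified than your identification. Your remark that a lower bound cannot come from a smoothness constant therefore applies to the paper's sketch too.

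On the deep layers your route is the paper's. Both use a Lipschitz-in-input bound on $\nabla_d$, which, as you say, is not contained in Assumption~\ref{asm:smoothness}. Both attribute a $1/\sqrt{d_d}$ to averaging over feature dimensions, then apply Jensen or Cauchy--Schwarz. Your pointwise square-then-expect version is slightly cleaner than the paper's passage through $\|\mathbb{E}[\mathbf{x}^n]-\mathbb{E}[\mathbf{x}^o]\|$.

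Your isotropy condition, however, gives $L_d^2/d_d$ only \emph{per coordinate}. Summing over the $d_d$ coordinates returns $L_d^2\,\mathbb{E}\|\mathbf{x}^n-\mathbf{x}^o\|^2$. The stated $1/d_d$ on the full norm therefore needs either a per-coordinate-averaged left-hand side or a per-coordinate definition of $L_d$, and the paper's averaging claim has the same defect. The lemma is only used directionally, to support $\kappa_c\gg\kappa_d$ in Theorem~\ref{thm:uniform_suboptimal}, so presenting it as a modelling statement is reasonable. The head part, though, should rest on the disjoint-support argument rather than on a common residual.
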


\begin{proof}
For the classifier with weight matrix $\mathbf{W}_c \in \mathbb{R}^{(C^o+C^t) \times d}$, the gradient on new task data primarily updates rows corresponding to new classes. Let $\mathbf{e}_i$ denote the $i$-th standard basis vector. The new task gradient can be approximated as:
\begin{equation}
\nabla^n_c \approx \frac{1}{C^t}\sum_{i=C^o+1}^{C^o+C^t} 
\mathbf{e}_i \otimes \mathbb{E}_{\mathbf{x} \sim \mathcal{P}^n}[\phi(\mathbf{x})],
\end{equation}
while the old task gradient maintains existing class boundaries:
\begin{equation}
\nabla^o_c \approx \frac{1}{C^o}\sum_{i=1}^{C^o} 
\mathbf{e}_i \otimes \mathbb{E}_{\mathbf{x} \sim \mathcal{P}^o}[\phi(\mathbf{x})].
\end{equation}
Since $\{\mathbf{e}_i\}$ are orthogonal and the two approximations populate disjoint row sets (old-class vs.\ new-class), the squared-norm difference decomposes into two component squared norms. Applying $L_c$-Lipschitz continuity with the row-normalization factors $1/C^o$ and $1/C^t$ yields the stated bound in terms of the feature-mean shift $\|\mathbb{E}_{\mathbf{x}\sim\mathcal{P}^o}[\phi(\mathbf{x})] - \mathbb{E}_{\mathbf{x}\sim\mathcal{P}^n}[\phi(\mathbf{x})]\|$ directly.

For deep layers, feature evolution is continuous across tasks. The gradient difference is bounded by:
\begin{equation}
\|\nabla^n_d - \nabla^o_d\| 
\leq L_d \cdot \frac{1}{\sqrt{d_d}} \cdot \|\mathbb{E}[\mathbf{x}^n] - \mathbb{E}[\mathbf{x}^o]\|.
\end{equation}
The $1/\sqrt{d_d}$ factor arises from averaging over $d_d$ feature dimensions. Taking expectations and applying Cauchy-Schwarz inequality yields the stated upper bound.
\end{proof}

\begin{theorem}[Suboptimality of Uniform Rehearsal]\label{thm:uniform_suboptimal}
Let $\mathcal{R}_\alpha$ apply a uniform weight $\alpha$ to all layers, and let $\mathcal{R}_L$ apply stratified weights $\{\alpha_s, \alpha_d, \alpha_c\}$ with $\alpha_s \leq \alpha_d \leq \alpha_c$. We compare the two at matched average regularization, writing $\alpha_c = \alpha(1+\delta)$ and $\alpha_d = \alpha(1-\delta')$ with $\kappa_c \delta \approx \kappa_d \delta'$ (LSR redistributes protection across layers without imposing a globally stronger total). Under Assumptions~\ref{asm:smoothness} and~\ref{asm:variance}, if $\kappa_c > \kappa_d$, then for any task sequence of length $T$:
\begin{equation}
\begin{aligned}
\mathbb{E}[\Delta^T_L] 
&\leq \Big(1 - \frac{(\kappa_c - \kappa_d)(\alpha_c - \alpha_d)}{4\alpha}\Big) \\
&\quad \cdot \mathbb{E}[\Delta^T_u],
\end{aligned}
\end{equation}
where $\Delta^T_u$ and $\Delta^T_L$ denote cumulative forgetting under uniform and LSR strategies. Moreover, when $\kappa_c - \kappa_d = \Omega(1)$ and $\alpha_c - \alpha_d = \Omega(\alpha)$, LSR achieves a constant factor improvement over uniform rehearsal.
\end{theorem}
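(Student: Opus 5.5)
The plan is to reduce the cumulative-forgetting comparison to a per-task, per-layer comparison using Lemma~\ref{lemma:decomposition}, and then to aggregate over the sequence. Writing $\Delta^{t,t'}_u$ and $\Delta^{t,t'}_L$ for the forgetting under the two strategies, the Lemma gives $\Delta^{t,t'} \approx \sum_l \kappa_l F_l^{t,t'}$, where $F_l^{t,t'}$ is the layer-replacement loss increase. I would model the effect of a rehearsal weight $\alpha_l$ on layer $l$ as a multiplicative damping of the drift $\|\theta_l^t-\theta_l^{t'}\|$, and hence (to first order) of $F_l$. Concretely, I would use the update rule~\eqref{equ5} together with $L_l$-smoothness from Assumption~\ref{asm:smoothness}. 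The idea is that the correction $\alpha_l\Delta\theta_l$ pulls the iterate back toward the memory-consistent direction, so one local step shrinks the forgetting-relevant component of the drift by a factor of roughly $(1-\alpha_l/(c\,\alpha))$ relative to a reference. Assumption~\ref{asm:variance} would control the stochastic part, so this holds in expectation up to a $\sigma^2$ term that is identical for both strategies and therefore cancels in the ratio. The outcome of this step should be a bound of the form $\mathbb{E}[F_l(\alpha_l)] \le (1 - (\alpha_l-\alpha)/(2\alpha))\,\mathbb{E}[F_l(\alpha)]$, linearized around the uniform level $\alpha$.

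Next I would substitute $\alpha_c=\alpha(1+\delta)$ and $\alpha_d=\alpha(1-\delta')$, and hold $\alpha_s$ at the uniform level or treat it as a negligible $\kappa_s$-weighted term, which is consistent with the observed $6.5\%$ share. This gives
\begin{equation*}
\mathbb{E}[\Delta_L] - \mathbb{E}[\Delta_u] \lesssim -\tfrac{1}{2}\big(\kappa_c\delta\,\mathbb{E}F_c - \kappa_d\delta'\,\mathbb{E}F_d\big).
\end{equation*}
The matched-budget condition $\kappa_c\delta\approx\kappa_d\delta'$ alone would make this vanish if $F_c=F_d$. So the gain must come from the ordering $\kappa_c>\kappa_d$ combined with normalizing each $F_l$ by its share of $\Delta_u$: I would write $\kappa_l\mathbb{E}F_l = \kappa_l\,\mathbb{E}[\Delta_u]$ after absorbing the layer scale into $\kappa_l$. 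I would then expand $\kappa_c\delta-\kappa_d\delta'$ using the identity $\kappa_c\delta-\kappa_d\delta' = \tfrac12[(\kappa_c-\kappa_d)(\delta+\delta') + (\kappa_c+\kappa_d)(\delta-\delta')]$. The first piece equals $\tfrac12(\kappa_c-\kappa_d)(\alpha_c-\alpha_d)/\alpha$, and the second is nonnegative under the budget constraint when $\kappa_c\ge\kappa_d$, because then $\delta\le\delta'$ is forced and the sign must be handled. Keeping only the first piece, together with the factor $\tfrac12$ from the linearization, yields the stated factor $\tfrac14$.

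Finally I would average over $t'<t$ and set $t=T$ in Definition~\ref{def:forgetting}. Since the per-pair inequality holds uniformly with the same constant, it passes to $\Delta^T$. The second-order residual $\mathcal{O}(\|\Theta^t-\Theta^{t'}\|^2)$ from the Lemma would be absorbed by assuming small drift. The ``moreover'' clause is then immediate: if both factors are $\Omega(1)$ and $\Omega(\alpha)$, the bracket is at most $1-c$ for a constant $c>0$.

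I expect the main obstacle to be the second step. As noted, under the exact budget constraint the term $(\kappa_c+\kappa_d)(\delta-\delta')$ carries a sign that works against the bound, because $\delta\le\delta'$. I would first try to dominate it using the heterogeneity of the per-layer sensitivities $F_c\gg F_d$, which is where the $87.1\%$ versus $-6.9\%$ asymmetry enters. If that fails, I would interpret ``$\approx$'' in the budget condition loosely enough that the residual is second order in $\delta$ and absorbed into the linearization error.
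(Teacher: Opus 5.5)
\paragraph{Where the proposal breaks.} The failure is in your second step. After your normalization $\mathbb{E}F_c=\mathbb{E}F_d=\mathbb{E}[\Delta_u]$, your linearized lower bound on the gain is $\tfrac12(\kappa_c\delta-\kappa_d\delta')\,\mathbb{E}[\Delta_u]$. The matched-budget condition $\kappa_c\delta=\kappa_d\delta'$ makes this exactly zero.

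It follows that the second piece of your identity, $\tfrac12(\kappa_c+\kappa_d)(\delta-\delta')$, is nonpositive, not nonnegative: since $\delta'=\kappa_c\delta/\kappa_d\ge\delta$, it is in fact exactly minus the first piece. Discarding a nonpositive term from a lower bound enlarges it, so what remains is no longer a valid lower bound on the gain. The inequality goes the wrong way.

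Your fallback of reading ``$\approx$'' loosely does not rescue this. Under the budget, $\delta-\delta'=\delta(\kappa_d-\kappa_c)/\kappa_d$ is first order in $\delta$, the same order as the term you keep, so it cannot be absorbed into linearization error. The factor $\tfrac14(\kappa_c-\kappa_d)(\alpha_c-\alpha_d)/\alpha$ is therefore inserted rather than derived.

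The underlying point is this. A redistribution that is budget-neutral in the $\kappa$-weighted sense cannot help if every layer's forgetting responds to its weight in the same proportion. The ordering $\kappa_c>\kappa_d$ by itself buys nothing.

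Two smaller issues. Holding $\alpha_s$ at $\alpha$ violates $\alpha_s\le\alpha_d<\alpha$. Also, an additive $\sigma^2$ term common to both strategies cancels in a difference, not in the multiplicative bound you want.

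\paragraph{The missing ingredient.} What you need is an asymmetry in per-layer forgetting \emph{magnitudes}. Your other fallback points to this, but your normalization erases it.

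The paper keeps these magnitudes distinct, in the following steps.
\begin{enumerate}
\item It models cumulative forgetting as geometrically discounted per-layer instantaneous contributions: $\Delta^T_u=\sum_l\kappa_l\sum_t(1-\alpha)^{T-t}\Delta^t_l$ versus $\Delta^T_L=\sum_l\kappa_l\sum_t(1-\alpha_l)^{T-t}\Delta^t_l$.
\item It linearizes the gap between discount factors as $(T-t)\delta\alpha(1-\alpha)^{T-t-1}$.
\item It invokes $\kappa_c\Delta^t_c\gg\kappa_d\Delta^t_d$ (attributed to Lemma~\ref{lemma:gradient_conflict}). This makes the penalty from weakening the deep layers subdominant to the gain on the head.
\item Only then does it normalize by $\Delta^T_u\approx T\bar\Delta$.
\end{enumerate}

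In your per-pair framework, the corresponding move is to leave $\mathbb{E}F_c$ and $\mathbb{E}F_d$ unnormalized. Under the budget this gives a gain of $\tfrac12\kappa_c\delta(\mathbb{E}F_c-\mathbb{E}F_d)$, which is positive precisely when the head's forgetting dominates. That is the real source of the improvement.

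Reaching the exact constant in the statement from there takes further heuristic identification. The paper's own last step is also loose: its choice $\delta=(\alpha_c-\alpha_d)/(2\alpha)$ amounts to $\delta=(\delta+\delta')/2$, i.e.\ $\delta=\delta'$, which is incompatible with the budget when $\kappa_c>\kappa_d$. The bound should therefore be read directionally, as Remark~\ref{remark:lsr} says. Even so, the positive gain has to come from the magnitude asymmetry, not from dropping a term that cancels the one you keep.
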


\begin{proof}
Let $\Delta_l^t$ denote the instantaneous forgetting contribution 
of layer $l$ at task $t$. For uniform rehearsal with weight $\alpha$ 
applied to all layers, the cumulative forgetting decomposes as:
\begin{equation}
\Delta^T_u = \sum_{l \in \{s,d,c\}} \kappa_l 
\cdot \sum_{t=1}^T (1-\alpha)^{T-t} \Delta_l^t,
\end{equation}
where $\Delta_l^t$ denotes the instantaneous forgetting contribution of layer $l$ at task $t$. For LSR with stratified weights, we have:
\begin{equation}
\begin{aligned}
\Delta^T_L 
&= \kappa_c \sum_{t=1}^T (1-\alpha_c)^{T-t} \Delta_c^t \\
&\quad + \kappa_d \sum_{t=1}^T (1-\alpha_d)^{T-t} \Delta_d^t \\
&\quad + \kappa_s \sum_{t=1}^T (1-\alpha_s)^{T-t} \Delta_s^t.
\end{aligned}
\end{equation}
Since $\alpha_c > \alpha > \alpha_d$, we have $(1-\alpha_c)^{T-t} < (1-\alpha)^{T-t} < (1-\alpha_d)^{T-t}$. The dominant term corresponds to the classifier due to $\kappa_c \gg \kappa_d$ (by Lemma~\ref{lemma:gradient_conflict}). 

To quantify the improvement, consider setting $\alpha_c = \alpha(1+\delta)$ and $\alpha_d = \alpha(1-\delta')$ where $\delta, \delta' > 0$. To maintain approximately equal total regularization, we require $\kappa_c \delta \approx \kappa_d \delta'$. The difference in cumulative forgetting becomes:
\begin{equation}
\begin{aligned}
&\Delta^T_u - \Delta^T_L \\
&\geq \kappa_c \sum_{t=1}^T \Big[(1-\alpha)^{T-t} - (1-\alpha_c)^{T-t}\Big] \Delta_c^t \\
&\quad - \kappa_d \sum_{t=1}^T \Big[(1-\alpha_d)^{T-t} - (1-\alpha)^{T-t}\Big] \Delta_d^t.
\end{aligned}
\end{equation}
Using the approximation $(1-\alpha)^{T-t} - (1-\alpha_c)^{T-t} \approx (T-t)\delta\alpha(1-\alpha)^{T-t-1}$ for small $\delta$, and noting that $\kappa_c \Delta_c^t \gg \kappa_d \Delta_d^t$ by Lemma~\ref{lemma:gradient_conflict}, the dominant term yields:
\begin{equation}
\Delta^T_u - \Delta^T_L \geq \frac{(\kappa_c - \kappa_d)\delta \alpha T(T-1)}{2} \bar{\Delta},
\end{equation}
where $\bar{\Delta}$ is the average instantaneous forgetting. Dividing by $\Delta^T_u \approx T\bar{\Delta}$ and choosing $\delta = (\alpha_c - \alpha_d)/(2\alpha)$ yields the stated bound.
\end{proof}

\begin{remark}\label{remark:lsr}
Theorem~\ref{thm:uniform_suboptimal} indicates that uniform rehearsal allocates regularization strength in proportion to layer count rather than in proportion to per-layer forgetting contribution. The closed-form factor is derived under the smoothness and bounded-variance assumptions and should be read as a directional statement about when stratification helps; the empirical ablation in Section~\ref{sec:experiments} provides the corresponding quantitative check.
\end{remark}

\subsection{Long-Term Degradation Analysis}

We now analyze the inevitable accumulation of degradation over extended task sequences and establish theoretical foundations for recovery mechanisms.

\begin{lemma}[Aggregation-Induced Drift]\label{lemma:drift}
Under Assumption~\ref{asm:heterogeneity}, for each federated aggregation round $\Theta^{r+1,t} = \frac{1}{|\hat{\mathcal{C}}^r|} \sum_{k \in \hat{\mathcal{C}}^r} \Theta_k^{r,t}$, the drift relative to the ideal global optimum $\hat{\Theta}$ satisfies:
\begin{equation}
\begin{aligned}
&\mathbb{E}\Big[\|\Theta^{r+1,t} - \hat{\Theta}\|^2\Big] \\
&\leq \Big(1 - \frac{\eta}{2L}\Big) 
\mathbb{E}\Big[\|\Theta^{r,t} - \hat{\Theta}\|^2\Big] 
+ \frac{2\eta^2\gamma^2}{|\hat{\mathcal{C}}^r|},
\end{aligned}
\end{equation}
where $\eta$ is the learning rate and $L = \max_l L_l$.
\end{lemma}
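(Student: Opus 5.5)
The plan is the standard one-step contraction analysis for FedAvg-type methods, with each local update treated as a single gradient step from the broadcast model. For a selected rover $k$, I model the local update as $\Theta_k^{r,t} = \Theta^{r,t} - \eta\, g_k(\Theta^{r,t})$, where $g_k$ is the local gradient for $\mathcal{P}_k$. The LSR correction $\alpha_l\Delta\theta_l$ I either absorb into an effective step or treat as a bounded perturbation; I set it aside in this lemma because the statement concerns only the effect of aggregation. Averaging over $\hat{\mathcal{C}}^r$ gives
\[
\Theta^{r+1,t}-\hat{\Theta} = \Theta^{r,t}-\hat{\Theta} - \eta\,\bar{g}, \qquad \bar{g} := \frac{1}{|\hat{\mathcal{C}}^r|}\sum_{k\in\hat{\mathcal{C}}^r} g_k(\Theta^{r,t}).
\]
I then decompose $\bar g = \nabla F(\Theta^{r,t}) + e$. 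Here $F$ is the loss under the average distribution $\bar{\mathcal{P}}$, and $e$ is the heterogeneity error coming from sampling a subset of rovers.

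Next I expand the squared norm:
\[
\|\Theta^{r+1,t}-\hat\Theta\|^2 = \|\Theta^{r,t}-\hat\Theta\|^2 - 2\eta\langle \nabla F + e,\ \Theta^{r,t}-\hat\Theta\rangle + \eta^2\|\nabla F + e\|^2 .
\]
The cross term involving $e$ is handled in one of two ways. If rovers are sampled uniformly, $\mathbb{E}[e]=0$ conditional on $\Theta^{r,t}$, so the term vanishes in expectation. Otherwise I bound it by Young's inequality at the cost of a constant. For the term $\mathbb{E}\|e\|^2$, I use Assumption~\ref{asm:heterogeneity} together with the usual variance reduction from averaging roughly independent draws, which gives $\mathbb{E}\|e\|^2 \le \gamma^2/|\hat{\mathcal{C}}^r|$. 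Bounding $\|\nabla F + e\|^2 \le 2\|\nabla F\|^2 + 2\|e\|^2$ then produces the additive term $2\eta^2\gamma^2/|\hat{\mathcal{C}}^r|$. The stochastic-gradient noise from Assumption~\ref{asm:variance} could be included in the same way, but the statement omits it, so I would assume full local gradients or fold $\sigma^2$ into $\gamma^2$.

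For the contraction factor, I need a curvature condition around $\hat\Theta$: $\hat\Theta$ is a stationary point of $F$, and $F$ satisfies a restricted strong-convexity or co-coercivity inequality,
\[
\langle \nabla F(\Theta),\ \Theta-\hat\Theta\rangle \ \ge\ \mu\|\Theta-\hat\Theta\|^2 + \tfrac{1}{L}\|\nabla F(\Theta)\|^2 ,
\]
where $L=\max_l L_l$ by Assumption~\ref{asm:smoothness}, since the layerwise smoothness constants combine into global smoothness with constant $L$. With $\eta \le 1/L$, the $-\tfrac{2\eta}{L}\|\nabla F\|^2$ term cancels $2\eta^2\|\nabla F\|^2$. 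What remains is $(1-2\eta\mu)\|\Theta^{r,t}-\hat\Theta\|^2$. Choosing the normalization $\mu \ge 1/(4L)$, that is, treating the problem as well conditioned in this regime, gives the stated factor $1-\eta/(2L)$. Taking total expectation finishes the argument.

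I expect the main obstacle to be this contraction step. The paper's assumptions give only smoothness, not strong convexity, so the factor $1-\eta/(2L)$ cannot follow from Assumptions~\ref{asm:smoothness}--\ref{asm:heterogeneity} alone. My first attempt would be to state explicitly a local strong-convexity or PL-type condition near $\hat\Theta$, with modulus tied to $L$, as the implicit hypothesis. This is consistent with the paper's framing of these results as directional rather than tight. A secondary issue is the independence needed for the $1/|\hat{\mathcal{C}}^r|$ variance reduction. I would justify it by uniform sampling of rovers with replacement.
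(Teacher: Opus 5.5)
Your proof is correct under the curvature hypothesis you state explicitly, but one step needs repair (second paragraph below). It shares the paper's skeleton: one gradient step per rover, averaging, expanding $\|\Theta^{r,t}-\hat\Theta-\eta\bar\nabla\|^2$, then a curvature inequality for the cross term and Assumption~\ref{asm:heterogeneity} for the quadratic term. Where you differ is in how you treat the aggregated gradient.

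\emph{The paper's route.} It applies the curvature inequality directly to the noisy aggregate, asserting $\langle\bar\nabla,\Theta^{r,t}-\hat\Theta\rangle\ge\|\Theta^{r,t}-\hat\Theta\|^2/L$ ``by smoothness'' near $\hat\Theta$. It then bounds the whole quadratic term through $\mathbb{E}\|\bar\nabla-\nabla_{\hat\Theta}\mathbb{E}[\ell]\|^2\le\gamma^2/|\hat{\mathcal{C}}^r|$ together with $\nabla_{\hat\Theta}\mathbb{E}[\ell]=0$. This gives $(1-\eta/L)$ and $\eta^2\gamma^2/|\hat{\mathcal{C}}^r|$, which it then loosens to the stated constants.

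\emph{Your route.} You split $\bar g=\nabla F(\Theta^{r,t})+e$ and remove the $e$ cross term by unbiasedness under uniform client sampling. You absorb $2\eta^2\|\nabla F\|^2$ with the co-coercivity part of your hypothesis, and charge only $e$ to Assumption~\ref{asm:heterogeneity}.

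\emph{What your route buys.}
\begin{itemize}
\item Assumption~\ref{asm:heterogeneity} compares local gradients with the global gradient \emph{at the same point}. By itself it cannot bound $\|\bar\nabla\|^2$ at $\Theta^{r,t}\neq\hat\Theta$. The paper's step silently drops $\|\nabla F(\Theta^{r,t})\|^2$; your decomposition pays for it.
\item The $1/|\hat{\mathcal{C}}^r|$ factor gets an actual source, namely sampling variance.
\item Your diagnosis is right that no contraction follows from Assumptions~\ref{asm:smoothness}--\ref{asm:heterogeneity}. The paper needs the same extra condition but attributes it to smoothness, which only upper-bounds curvature.
\end{itemize}
Both versions still need an ad hoc link between the curvature modulus and $1/L$ to reproduce the factor $1-\eta/(2L)$.

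\emph{Two repairs.} First, the inequality you posit,
$\langle\nabla F(\Theta),\Theta-\hat\Theta\rangle\ge\mu\|\Theta-\hat\Theta\|^2+\frac{1}{L}\|\nabla F(\Theta)\|^2$,
does not follow from $\mu$-strong convexity plus $L$-smoothness. It fails for $F=\frac{L}{2}\|\Theta-\hat\Theta\|^2$, which is $L$-smooth and $\mu$-strongly convex for every $\mu\le L$. So labelling it restricted strong convexity or co-coercivity is misleading. Instead:
\begin{itemize}
\item use the standard interpolation inequality with coefficients $\frac{\mu L}{\mu+L}$ and $\frac{1}{\mu+L}$, and step size $\eta\le 1/(\mu+L)$;
\item this gives contraction $1-\frac{2\eta\mu L}{\mu+L}\le 1-\eta\mu$;
\item then normalize $\mu\ge 1/(2L)$ to reach $1-\eta/(2L)$.
\end{itemize}

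Second, drop the Young's-inequality fallback for non-random client selection. It produces an additive term of order $\eta\gamma^2$, not $\eta^2\gamma^2$. Also, without random sampling, Jensen only gives $\|e\|^2\le (K/|\hat{\mathcal{C}}^r|)\gamma^2$, so the stated form is not recovered. Keep the uniform-sampling assumption; under full participation $e=0$ and the additive term is not even needed.
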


\begin{proof}
Each local update can be written as:
\begin{equation}
\Theta_k^{r,t} = \Theta^{r,t} 
- \eta \nabla_{\Theta_k} \mathbb{E}_{(\mathbf{x},\mathbf{y}) \sim \mathcal{P}_k}[\ell].
\end{equation}
After aggregation:
\begin{equation}
\begin{aligned}
\Theta^{r+1,t} - \hat{\Theta}
&= \Theta^{r,t} - \hat{\Theta} \\
&\quad - \eta \cdot \frac{1}{|\hat{\mathcal{C}}^r|}\sum_{k \in \hat{\mathcal{C}}^r} \nabla_{\Theta_k}\mathbb{E}[\ell].
\end{aligned}
\end{equation}
Taking norms and applying the smoothness condition from Assumption~\ref{asm:smoothness}:
\begin{equation}
\begin{aligned}
&\|\Theta^{r+1,t} - \hat{\Theta}\|^2 \\
&\leq \|\Theta^{r,t} - \hat{\Theta}\|^2 
- 2\eta \langle \bar{\nabla}, \Theta^{r,t} - \hat{\Theta} \rangle \\
&\quad + \eta^2 \|\bar{\nabla}\|^2,
\end{aligned}
\end{equation}
where $\bar{\nabla} = \frac{1}{|\hat{\mathcal{C}}^r|}\sum_{k \in \hat{\mathcal{C}}^r} \nabla_{\Theta_k}\mathbb{E}[\ell]$ is the aggregated gradient. By the smoothness assumption, $\langle \bar{\nabla}, \Theta^{r,t} - \hat{\Theta} \rangle \geq \|\Theta^{r,t} - \hat{\Theta}\|^2 / L$ when $\Theta^{r,t}$ is sufficiently close to $\hat{\Theta}$. Furthermore, by Assumption~\ref{asm:heterogeneity}:
\begin{equation}
\mathbb{E}\Big[\|\bar{\nabla} - \nabla_{\hat{\Theta}}\mathbb{E}[\ell]\|^2\Big] 
\leq \frac{\gamma^2}{|\hat{\mathcal{C}}^r|}.
\end{equation}
Taking expectations and noting that $\nabla_{\hat{\Theta}}\mathbb{E}[\ell] = 0$ at the optimum, we obtain:
\begin{equation}
\begin{aligned}
&\mathbb{E}\Big[\|\Theta^{r+1,t} - \hat{\Theta}\|^2\Big] \\
&\leq \Big(1 - \frac{\eta}{L}\Big)\mathbb{E}\Big[\|\Theta^{r,t} - \hat{\Theta}\|^2\Big] 
+ \frac{\eta^2\gamma^2}{|\hat{\mathcal{C}}^r|}.
\end{aligned}
\end{equation}
Applying the contraction factor $(1 - \eta/L) \leq (1 - \eta/(2L))$ when $\eta \leq 1/L$ completes the proof.
\end{proof}

\begin{theorem}[Inevitable Long-Term Degradation]\label{thm:longterm}
Consider a FCL setting with $T$ tasks, $R$ rounds per task, and $K$ rovers under Non-IID conditions characterized by $\gamma \geq \gamma_0$ for some constant $\gamma_0 > 0$. Assume the existence of a perfect training-time defense mechanism such that instantaneous forgetting $\Delta_i^t = 0$ for all $t \in [1, T]$. Nevertheless, the long-term cumulative degradation satisfies the lower bound:
\begin{equation}
\begin{aligned}
\mathbb{E}[\Delta^T] 
&\geq \Omega\Big(
\gamma^2 \cdot T \cdot R \cdot \Big(\frac{C^o}{M}\Big)^{1/2}
\Big) \\
&\geq \Omega\Big(
\gamma_0^2 \cdot T^{3/2} \cdot R
\Big),
\end{aligned}
\end{equation}
where $M = |\mathcal{M}_k|$ is the memory buffer size and $\Delta_i^t$ denotes instantaneous forgetting at task $t$. The second inequality uses $C^o = \Omega(T)$.
\end{theorem}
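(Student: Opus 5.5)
The argument adds a per-round drift term over all $TR$ rounds and then applies a memory-coverage amplification factor. Because $\Delta_i^t=0$ by hypothesis, every nonzero contribution to $\Delta^{T,t'}$ must come from aggregation noise. So I would start from Lemma~\ref{lemma:drift}, but use it as a \emph{lower} bound on the noise injected at each round, not as an upper bound on the distance. The heterogeneity term $\frac{1}{|\hat{\mathcal{C}}^r|}\sum_k(\nabla_k-\bar\nabla)$ is zero-mean and has second moment of order $\gamma^2/|\hat{\mathcal{C}}^r|$. For this to be a lower bound, Assumption~\ref{asm:heterogeneity} has to hold with near-equality, and I would state that explicitly, since $\gamma\ge\gamma_0$ is only meaningful if the bound is attained up to constants. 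Under that reading, each round adds at least $c\,\eta^2\gamma^2$ of variance to the parameters, in directions that are not controlled by the old-task loss.

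The second step maps parameter drift to forgetting of an old task $t'$. The component of the drift lying in the subspace relevant to $\mathcal{P}^{t'}$ is constrained only through rehearsal on $\mathcal{M}_k$. Rehearsal on $M/C^o$ exemplars per class estimates the old-class gradient with variance that grows like $C^o/M$. Its restoring force therefore cancels the injected noise only up to an error whose standard deviation scales as $(C^o/M)^{1/2}$. I would write the old-task loss increase per round as $\ell$-curvature times drift, with a second-order lower bound, i.e.\ assuming local strong convexity $\mu>0$ of the old-task loss around $\Theta^{t'}$. This gives a per-round contribution $\gtrsim \mu\eta^2\gamma^2 (C^o/M)^{1/2}$.

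The third step sums this contribution. The $R$ rounds of each of the subsequent tasks contribute additively. The contraction factor $1-\eta/(2L)$ in Lemma~\ref{lemma:drift} pulls toward the \emph{current} task's optimum $\hat\Theta$, not toward $\Theta^{t'}$, so it does not cancel the accumulated old-task drift. Averaging over $t'$ in Definition~\ref{def:forgetting} produces the factor $T$, and absorbing constants ($\eta,\mu$) into $\Omega(\cdot)$ gives $\Omega(\gamma^2 T R (C^o/M)^{1/2})$. The second inequality follows because, under Assumption~\ref{asm:memory}, $C^o=\sum_{i<T}C^i\ge T\min_i C^i=\Omega(T)$ with $M$ fixed, so $(C^o/M)^{1/2}=\Omega(T^{1/2})$, and because $\gamma\ge\gamma_0$.

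The main obstacle is the non-cancellation claim. Lemma~\ref{lemma:drift} is an upper bound with contraction, so it naturally yields a bounded steady state rather than linear growth. To obtain linear growth in $TR$, I would argue that the contraction acts only on the component along the current-task objective. The component orthogonal to it, which corresponds to old-class directions seen only through the sparse buffer, behaves like a random walk whose restoring force vanishes as $M/C^o\to0$. Making this rigorous requires an explicit decomposition assumption on the Hessian, and I would state that assumption openly rather than derive it.
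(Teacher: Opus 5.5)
\textbf{Relation to the paper's proof.} Your route is the paper's. Its proof sets $\delta^{r,t}:=\mathbb{E}\|\Theta^{r+1,t}-\hat\Theta^{r,t}\|^2\ge\eta^2\gamma^2/K$, citing Lemma~\ref{lemma:drift}, and sums this over the $TR$ rounds. It then multiplies by $\sqrt{C^o(t)/M}$, arguing that $\|\nabla_m\|\lesssim\sqrt{M/C^o}\,\|\nabla_f\|$ ``amplifies the effective drift'', and finishes with $C^o(t)\ge ct$. Nothing there removes the obstacles you flag: the lower bound on $\delta^{r,t}$ is simply read off an upper-bound lemma with contraction. Your explicit hypotheses are therefore an honest version of the same argument. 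Your per-old-task bookkeeping is also more careful than the paper's, which charges all $TR$ rounds to $\Delta^T$ at once. One small correction: the average in Definition~\ref{def:forgetting} contributes $1/T$, and the factor $T$ actually comes from $\frac1T\sum_{t'}(T-t')=\frac{T+1}{2}$.

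\textbf{Gap 1: the multiplicative factor.} The genuine gap, which the paper shares, is the multiplicative factor $(C^o/M)^{1/2}$. Let $u_n$ be the old-class drift component after $n$ rounds, with restoring rate $\kappa\ge0$ and injected variance $s^2:=\eta^2\gamma^2/|\hat{\mathcal C}^r|$ per round. Then
\[
\mathbb{E}\|u_{n+1}\|^2=(1-\eta\kappa)^2\,\mathbb{E}\|u_n\|^2+s^2
\quad\Longrightarrow\quad
\mathbb{E}\|u_n\|^2\asymp s^2\min\{n,(\eta\kappa)^{-1}\}.
\]
In your proposal, sparse coverage enters in two ways.
\begin{enumerate}
\item It weakens $\kappa$ (your last paragraph, and the paper's $\sqrt{M/C^o}$ scaling). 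This raises the saturation level $s^2/(\eta\kappa)$, which is exactly where $(C^o/M)^{1/2}$ appears when $\kappa\propto\sqrt{M/C^o}$. It never raises the slope $s^2$, so you get growth linear in the number of rounds \emph{or} the coverage factor, not their product.
\item It perturbs the restoring force through rehearsal estimation error (your second step). That error is governed by exemplar variance, not by $\gamma$, and for a fixed buffer it is a bias rather than a random-walk increment. It adds to the dynamics; it cannot multiply $\gamma^2$.
\end{enumerate}
Your second step also mixes scales. Under your own $\mu$-strong-convexity mapping, an error with standard deviation $\propto(C^o/M)^{1/2}$ raises the loss at order $C^o/M$, not $(C^o/M)^{1/2}$.

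In fact, with Assumption~\ref{asm:smoothness} the same recursion caps noise-induced forgetting at $O(Ls^2TR)$. So within your model, a factor $(C^o/M)^{1/2}\to\infty$ multiplying $\gamma^2TR$ is unattainable, not merely unproven. What your hypotheses do deliver, in the transient regime $\eta\kappa TR\lesssim1$, is $\mathbb{E}[\Delta^T]=\Omega(\mu s^2TR)=\Omega(\gamma^2TR)$. That suffices for qualitative inevitability, but the $T^{3/2}$ rate would need a separately postulated mechanism.

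\textbf{Gap 2: noise injection in step one.} Step one needs more than near-equality in Assumption~\ref{asm:heterogeneity}. If every rover participates in every round, as in all of the paper's experiments, then $\frac1K\sum_k(\nabla_k-\bar\nabla)=0$ identically. This is because $\bar\nabla=\nabla\mathbb{E}_{\bar{\mathcal P}}[\ell]=\frac1K\sum_k\nabla\mathbb{E}_{\mathcal P_k}[\ell]$ by linearity. Under Lemma~\ref{lemma:drift}'s one-step local update, heterogeneity then injects no noise at all.

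You need random partial participation with $|\hat{\mathcal C}^r|\le cK$ for some $c<1$. In that case the sampling variance is $\frac{K-|\hat{\mathcal C}^r|}{|\hat{\mathcal C}^r|(K-1)}\cdot\frac1K\sum_k\|\nabla_k-\bar\nabla\|^2$. Multi-step local training is no substitute. The resulting client drift is a systematic bias that shifts the FedAvg fixed point, not a zero-mean increment that random-walks.
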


\begin{proof}
Even with perfect instantaneous forgetting prevention ($\Delta_i^t = 0$), each aggregation round introduces drift according to Lemma~\ref{lemma:drift}. Let $\delta^{r,t}$ denote the drift at round $r$ of task $t$:
\begin{equation}
\delta^{r,t} := \mathbb{E}\Big[\|\Theta^{r+1,t} - \hat{\Theta}^{r,t}\|^2\Big] 
\geq \frac{\eta^2\gamma^2}{K},
\end{equation}
where $\hat{\Theta}^{r,t}$ is the ideal model at that round. Accumulating over $T$ tasks and $R$ rounds per task:
\begin{equation}
\sum_{t=1}^T \sum_{r=1}^R \delta^{r,t} \geq T \cdot R \cdot \frac{\eta^2\gamma^2}{K}.
\end{equation}
The memory-buffer coverage effect (Assumption~\ref{asm:memory}) implies diminishing rehearsal effectiveness as $C^o$ grows. Under iCaRL exemplar selection~\cite{Rebuffi2017CVPR}, with only $M/C^o$ samples per previously learned class, the memory-based rehearsal gradient $\nabla_m$ is a finite-sample class-mean approximation whose magnitude scales on the order of $\sqrt{M/C^o}$ relative to the full-data gradient $\nabla_f$:
\begin{equation}
\|\nabla_m\|
\lesssim \sqrt{\frac{M}{C^o}} \|\nabla_f\|.
\end{equation} This coverage limitation amplifies the effective drift. The actual degradation on old tasks must account for both the parameter drift and the diminished rehearsal effectiveness:
\begin{equation}
\begin{aligned}
\Delta^T 
&\geq \Omega\Big(\sum_{t=1}^T \sum_{r=1}^R \delta^{r,t} \cdot \sqrt{\frac{C^o(t)}{M}}\Big) \\
&\geq \Omega\Big(\frac{\eta^2\gamma^2 TR}{K} \cdot \frac{1}{T}\sum_{t=1}^T \sqrt{\frac{C^o(t)}{M}}\Big).
\end{aligned}
\end{equation}
Since $C^o(t) = \sum_{i=0}^{t-1}C^i \geq c \cdot t$ for some constant $c > 0$ (assuming each task introduces at least a constant number of classes), we have:
\begin{equation}
\frac{1}{T}\sum_{t=1}^T \sqrt{C^o(t)} 
\geq \frac{1}{T}\sum_{t=1}^T \sqrt{ct} 
= \Omega(T^{1/2}).
\end{equation}
Substituting back and using $\gamma \geq \gamma_0$ yields:
\begin{equation}
\Delta^T \geq \Omega\Big(\gamma_0^2 \cdot T^{3/2} \cdot R \cdot \frac{1}{\sqrt{M}}\Big).
\end{equation}
When $M$ is a fixed constant (Assumption~\ref{asm:memory}), this simplifies to the stated bound.
\end{proof}

Theorem~\ref{thm:longterm} does not argue against the continual-learning formulation; rather, it indicates that training-time prevention alone is insufficient over an extended mission horizon, so that a complete framework must include a recovery mechanism that operates at the long-term scale. The alternatives, either retraining from scratch at each new region or freezing the model after initial deployment, are ruled out respectively by the uplink budget and by the need to remain safe on newly encountered terrain. This observation is the direct motivation for the Rapid Knowledge Recovery component in Section~\ref{sec:proposed_method}.

\begin{corollary}[Sample Complexity of Recovery]\label{cor:recovery}
Let the classifier degrade to state $\theta_c^*$ after task $t$. To recover to an $\epsilon$-approximation of the optimal classifier $\theta_c^+$ using recovery function $\psi$, the required memory buffer size and fine-tuning complexity satisfy:
\begin{equation}
\begin{aligned}
|\mathcal{M}_k| &= \Omega\Big(\frac{C^o \cdot d_c}{\epsilon^2}\Big), \\
E_f &= \mathcal{O}\Big(\log\Big(\frac{1}{\epsilon}\Big)\Big),
\end{aligned}
\end{equation}
where $E_f$ denotes the number of fine-tuning epochs. In contrast, complete retraining from scratch requires $E_r = \Omega(C^o)$ epochs to achieve the same accuracy level.
\end{corollary}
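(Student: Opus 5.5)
Corollary~\ref{cor:recovery} has three parts: a memory-size requirement, a bound on the number of fine-tuning epochs, and a lower bound on retraining from scratch. I would prove each separately, in the same linearized, smoothness-based style used for Lemma~\ref{lemma:drift}. The idea is to treat recovery as two stages. First, a single application of $\psi$ places $\theta_c^* + \Delta\theta_c$ inside a neighborhood of $\theta_c^+$ whose radius does not grow with $C^o$. Second, fine-tuning on $\mathcal{M}_k$ is run inside that neighborhood, where the loss is strongly convex and smooth with constant $L_c$ (Assumption~\ref{asm:smoothness}). Retraining enjoys no such warm start.

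\textbf{Memory bound.} I would model the segmentation head as a linear map with $C^o\cdot d_c$ effective free parameters. Reaching an $\epsilon$-approximation of $\theta_c^+$ from the loss estimated on $\mathcal{M}_k$ needs uniform concentration of the empirical rehearsal gradient. By Assumption~\ref{asm:variance}, a standard covering or Rademacher argument over a parameter class of dimension $C^o d_c$ gives estimation error of order $\sigma\sqrt{C^o d_c/|\mathcal{M}_k|}$. Setting this at most $\epsilon$ yields $|\mathcal{M}_k| = \Omega(C^o d_c/\epsilon^2)$. Equivalently, each of the $C^o$ classes needs roughly $d_c/\epsilon^2$ exemplars, which matches the per-class coverage $M/C^o$ of Assumption~\ref{asm:memory}. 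This is the same dilution quantity that drives Theorem~\ref{thm:longterm}.

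\textbf{Fine-tuning epochs.} I would assume that the episodic meta-objective (Eq.~\eqref{equ8}) is trained to a small residual. Then $\|\theta_c^*+\Delta\theta_c-\theta_c^+\|\le \rho_0$ for a constant $\rho_0$ determined on Task~0, independent of $C^o$; this is where the ``general recovery patterns'' transfer claim enters. Inside the basin, I would reuse the contraction in Lemma~\ref{lemma:drift} with the heterogeneity noise term replaced by the memory-estimation error just controlled. After $E_f$ epochs the error is at most $(1-\eta/(2L_c))^{E_f}\rho_0^2 + O(\epsilon^2)$, so $E_f = O(\log(\rho_0/\epsilon)) = O(\log(1/\epsilon))$.

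\textbf{Retraining lower bound.} Here I would argue that a head initialized from scratch starts at distance $\Omega(\sqrt{C^o})$ from $\theta_c^+$, since it must build $C^o$ class rows of unit scale. Under bounded-variance SGD with the rehearsal gradient magnitude scaled by $\sqrt{M/C^o}$ (as in the proof of Theorem~\ref{thm:longterm}), each epoch shrinks the squared distance by at most an additive $O(1)$ amount. Hence $E_r=\Omega(C^o)$ epochs are needed before the linear-rate regime is even reached.

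The main obstacle is the $\rho_0$ step: showing rigorously that $\psi$, meta-trained only on splits of $\mathcal{Y}^0$, reaches a $C^o$-independent neighborhood on later tasks. I would handle it by stating it explicitly as a transfer hypothesis, namely a bounded meta-generalization gap on the episode distribution. I would also note that the retraining bound is likewise heuristic, because it is a per-epoch progress argument rather than a true oracle lower bound.
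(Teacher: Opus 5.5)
Your three-stage plan follows the paper's proof: a dimension count for the buffer, linear convergence from a warm start inside a locally strongly convex basin for $E_f$, and a heuristic for $E_r$. Two of your additions are genuine improvements. Stating the $C^o$-independent radius $\rho_0$ as an explicit transfer hypothesis names an assumption the paper uses silently when it measures the initial fine-tuning error by $\|\Delta\theta_c\|^2$ and treats it as $O(1)$. Tying the fine-tuning noise floor to the buffer bound is the other. Take the rate from the strong-convexity modulus, $(1-\eta\mu)^{E_f}$, as the paper does, rather than from Lemma~\ref{lemma:drift}. That lemma's step $\langle\bar\nabla,\Theta-\hat\Theta\rangle\ge\|\Theta-\hat\Theta\|^2/L$ does not follow from smoothness. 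Two of your steps, however, do not deliver what the corollary claims.

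\textbf{Direction of the buffer bound.} A covering or Rademacher estimate $\sigma\sqrt{C^o d_c/|\mathcal{M}_k|}$ is an \emph{upper} bound on the estimation error. Forcing it below $\epsilon$ therefore shows only that $|\mathcal{M}_k|\gtrsim C^o d_c/\epsilon^2$ \emph{suffices}, whereas the $\Omega$ in the statement says this many exemplars are \emph{required}. You need a matching lower bound. One option is the lower half of the two-sided bound in Theorem~6.8 of \cite{ShalevShwartz2014}, which the paper invokes; another is a Fano/Assouad minimax argument over a $C^o d_c$-parameter class. Note also that the paper charges this requirement to learning the map $\psi$, not to re-estimating the head from $\mathcal{M}_k$. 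Under its convention $\theta_c^*\in\mathbb{R}^{d_c}$, a parameter count gives $d_c$, so your product $C^o d_c$ tacitly reads $d_c$ as the per-class row width.

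\textbf{Regime of the retraining bound.} Your $O(1)$ progress per epoch comes from the dilution $\|\nabla_m\|\lesssim\sqrt{M/C^o}\,\|\nabla_f\|$ at fixed $M$. Each step then moves the head by at most $\eta G\sqrt{M/C^o}$, so closing an initial gap of order $\sqrt{C^o}$ takes $\Omega(C^o/\sqrt{M})$ steps. That is $\Omega(C^o B/M^{3/2})$ epochs of $M/B$ steps, which equals $\Omega(C^o)$ only because $M$ is constant. But by your own first part, ``the same accuracy level'' is reachable only once $M\gtrsim C^o d_c/\epsilon^2$. Then $M/C^o>1$ and there is no dilution: the same computation gives $\Omega(\sqrt{C^o})$ steps, and the epoch bound becomes vacuous. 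The two halves of the corollary are thus argued in incompatible regimes. The paper's proof does not resolve this either, since it simply asserts $E_r=\Omega(C^o)$. If you keep your mechanism, you have two options. You can state the retraining bound for fixed $M$ and concede that, by the first part, $\epsilon$-recovery is then out of reach for any method. Or you can find a lower-bound argument for $E_r$ that does not rest on buffer dilution.
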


\begin{proof}
The recovery function $\psi$ learns to generate corrections through episodic meta-learning:
\begin{equation}
\theta_c^+ = \theta_c^* + \Delta\theta_c, 
\quad \Delta\theta_c = \psi(\theta_c^*, \mathbf{p}, \mathcal{M}_k, \mathbf{a}; \xi).
\end{equation}
By Rademacher-complexity-based uniform-convergence results for multi-output function learning~\cite[Theorem~6.8 and Chapter~26]{ShalevShwartz2014}, to learn a mapping from the combined input space (degraded parameters, prototypes, memory samples, geometric features) to a $d_c$-dimensional output space with $C^o$ output classes at error $\epsilon$, the required sample complexity is:
\begin{equation}
|\mathcal{M}_k| = \Omega\Big(\frac{C^o \cdot d_c}{\epsilon^2}\Big),
\end{equation}
where the complexity scales with $C^o \cdot d_c$ as the recovery function
must distinguish correction patterns for $C^o$ classes across $d_c$ dimensions. This is an informal order-of-magnitude sketch; the sharpest hypothesis-class-dependent constants are not pursued here.

After applying the learned correction $\Delta\theta_c$, fine-tuning on $\mathcal{M}_k$ exhibits exponential convergence due to the warm-start initialization. By strong convexity of the fine-tuning objective in a neighborhood of $\theta_c^+$:
\begin{equation}
\|\theta_c^E - \theta_c^+\|^2 \leq (1-\mu)^E \|\Delta\theta_c\|^2,
\end{equation}
where $\mu > 0$ is the strong convexity parameter and $E$ is the number of fine-tuning epochs. Setting $(1-\mu)^E \leq \epsilon$ gives $E = \mathcal{O}(\log(1/\epsilon))$.

For complete retraining, the model must relearn decision boundaries for all $C^o$ classes from scratch. Even with memory buffer $\mathcal{M}_k$, standard training requires traversing the data multiple times to converge, yielding $E_r = \Omega(C^o)$ complexity in the worst case.
\end{proof}

\begin{remark}\label{remark:recovery}
The logarithmic fine-tuning bound in Corollary~\ref{cor:recovery} uses strong convexity of the fine-tuning objective in a neighborhood of $\theta_c^{+}$. This condition is an approximation in the deep-network setting: the objective is only locally well-behaved around a warm-started initialization. Corollary~\ref{cor:recovery} should therefore be read as a directional efficiency statement, and the empirical recovery curves in Section~\ref{sec:experiments} provide the corresponding validation.
\end{remark}
\section{Performance Evaluation}
\label{sec:experiments}

\subsection{Datasets and Evaluation Metrics}

To evaluate our FCL framework under realistic planetary exploration conditions, we conduct experiments on three publicly available Mars terrain segmentation datasets that represent different operational contexts and domain characteristics.

\textbf{MarsScapes} \cite{Liu2023AA} is the first panoramic dataset designed for Martian terrain understanding. It consists of 18,460 images at a resolution of 512 by 512, all collected by the Curiosity rover at Gale Crater. The dataset includes pixel-level annotations for nine semantic categories: Soil, Sand, Gravel, Bedrock, Rocks, Tracks, Shadows, Background, and Unknown. These categories are essential for assessing rover traversability, since loose materials can increase mobility risks, while consolidated surfaces are safer for navigation. S5Mars \cite{Zhang2024TGRS} contains 6,000 high-resolution images at 1200 by 1200, also from Curiosity's Mastcam. It covers nine categories, including soil, sand, bedrock, rock, ridge, trace, hole, sky, and rover. Notably, \textbf{S5Mars} introduces navigation-critical classes such as hole, which represents negative obstacles, and ridge, which indicates terrain undulation. Together, MarsScapes and S5Mars support a thorough evaluation of continual learning within a single mission. To evaluate generalization across missions, we use \textbf{AI4MARS} \cite{Swan2021CVPR}, which is NASA's largest Mars terrain dataset. AI4MARS contains about 35,000 images from the Spirit, Opportunity, and Curiosity rovers, spanning three different landing sites. It provides four categories that are critical for traversability: Soil, Bedrock, Sand, and Big Rock. The domain shift between geological regions and rover platforms in AI4MARS introduces significant data heterogeneity, which presents challenges for federated learning.


We consider two experimental settings on MarsScapes and S5Mars. The first, referred to as the 5-1 setting, involves learning 5 base classes, followed by 4 new tasks, each introducing 1 additional class ($T=5$). The second, the 3-2 setting, starts with 3 base classes and then adds 3 new tasks, each with 2 classes ($T=4$). For AI4MARS, we use a 2-1 setting, where 2 base classes are learned first, followed by 2 new tasks with 1 class each ($T=3$). This setup is intended to simulate cross-mission knowledge integration, where new rover deployments encounter terrain types not present in earlier missions. As the main evaluation metric, we use mean Intersection over Union (mIoU) calculated over all classes after the final task. We also report per-class IoU to examine how forgetting patterns vary in different terrain categories.


\begin{table*}[ht]
\centering
\setlength{\tabcolsep}{2.2mm}
\caption{Comparisons of mIoU (\%) on MarsScapes dataset \cite{Liu2023AA} under the setting of 5-1. Class IDs: 0-Soil, 1-Sand, 2-Bedrock, 3-Gravel, 4-Rocks, 5-Tracks, 6-Shadows, 7-Background, 8-Unknown.}
\resizebox{\linewidth}{!}{
	\begin{tabular}{c|ccccccccc|>{\columncolor{lightgray}}c|>{\columncolor{lightgray}}c}
		\toprule
		Class ID & 0 & 1 & 2 & 3 & 4 & 5 & 6 & 7 & 8 & mIoU & Imp. \\
		\midrule
		Centralized (upper bound) & 57.2$_{\pm.14}$ & 53.8$_{\pm.19}$ & 41.6$_{\pm.17}$ & 41.5$_{\pm.21}$ & 28.4$_{\pm.24}$ & 25.6$_{\pm.13}$ & 14.8$_{\pm.22}$ & 15.2$_{\pm.16}$ & 15.3$_{\pm.25}$ & 32.6 & ref. \\
		\midrule
		Finetuning + FL & 32.6$_{\pm.26}$ & 18.4$_{\pm.19}$ & 0.0$_{\pm.00}$ & 0.0$_{\pm.00}$ & 0.0$_{\pm.00}$ & 0.0$_{\pm.00}$ & 0.0$_{\pm.00}$ & 0.0$_{\pm.00}$ & 4.2$_{\pm.23}$ & 6.1 & $\Uparrow$ 24.3 \\
		PLOP \cite{Douillard2021CVPR} + FL & 43.2$_{\pm.18}$ & 33.4$_{\pm.24}$ & 26.6$_{\pm.13}$ & 19.5$_{\pm.27}$ & 14.2$_{\pm.16}$ & 5.3$_{\pm.31}$ & 1.8$_{\pm.19}$ & 0.0$_{\pm.28}$ & 4.1$_{\pm.22}$ & 16.5 & $\Uparrow$ 13.9 \\
		CUE \cite{Lin2025CVPR} + FL & 46.3$_{\pm.21}$ & 36.2$_{\pm.15}$ & 29.8$_{\pm.28}$ & 23.6$_{\pm.17}$ & 18.1$_{\pm.24}$ & 8.5$_{\pm.12}$ & 4.2$_{\pm.29}$ & 1.4$_{\pm.16}$ & 6.8$_{\pm.25}$ & 19.4 & $\Uparrow$ 11.0 \\
		CS$^2$K \cite{Cong2024ECCV} + FL & 48.5$_{\pm.14}$ & 38.8$_{\pm.27}$ & 32.2$_{\pm.19}$ & 26.4$_{\pm.23}$ & 20.6$_{\pm.11}$ & 11.1$_{\pm.26}$ & 6.3$_{\pm.17}$ & 3.1$_{\pm.31}$ & 8.5$_{\pm.14}$ & 21.7 & $\Uparrow$ 8.7 \\
		CoMBO \cite{Fang2025CVPR} + FL & 50.1$_{\pm.23}$ & 40.6$_{\pm.16}$ & 33.9$_{\pm.21}$ & 28.2$_{\pm.14}$ & 22.4$_{\pm.28}$ & 13.2$_{\pm.17}$ & \textcolor{deepred}{\textbf{12.5}}$_{\pm.25}$ & 4.3$_{\pm.19}$ & 5.4$_{\pm.32}$ & 23.4 & $\Uparrow$ 7.0 \\
		ADAPT \cite{Yang2025ICLR} + FL & 51.8$_{\pm.17}$ & 42.2$_{\pm.29}$ & 35.5$_{\pm.14}$ & 30.1$_{\pm.26}$ & \textcolor{deepred}{\textbf{26.2}}$_{\pm.18}$ & 14.6$_{\pm.23}$ & 8.8$_{\pm.12}$ & 5.8$_{\pm.27}$ & 10.2$_{\pm.21}$ & 25.0 & $\Uparrow$ 5.4 \\
		EIR \cite{Yin2025CVPR} + FL & \textcolor{deepred}{\textbf{55.2}}$_{\pm.12}$ & 43.5$_{\pm.21}$ & 37.3$_{\pm.26}$ & 31.4$_{\pm.15}$ & 24.3$_{\pm.23}$ & 16.4$_{\pm.19}$ & 9.6$_{\pm.28}$ & 7.1$_{\pm.14}$ & 11.8$_{\pm.17}$ & 26.3 & $\Uparrow$ 4.1 \\
		FBL \cite{Dong2023CVPR} & 53.6$_{\pm.24}$ & \textcolor{blue}{\textbf{45.2}}$_{\pm.13}$ & \textcolor{deepred}{\textbf{39.5}}$_{\pm.18}$ & \textcolor{blue}{\textbf{33.2}}$_{\pm.29}$ & 25.1$_{\pm.15}$ & \textcolor{blue}{\textbf{18.5}}$_{\pm.22}$ & \textcolor{blue}{\textbf{11.1}}$_{\pm.16}$ & \textcolor{blue}{\textbf{8.4}}$_{\pm.25}$ & \textcolor{deepred}{\textbf{13.6}}$_{\pm.11}$ & \textcolor{blue}{\textbf{27.6}} & $\Uparrow$ 2.8 \\
		\midrule
		\textbf{Ours} & \textcolor{blue}{\textbf{55.0}}$_{\pm.16}$ & \textcolor{deepred}{\textbf{51.8}}$_{\pm.22}$ & \textcolor{blue}{\textbf{39.2}}$_{\pm.11}$ & \textcolor{deepred}{\textbf{39.0}}$_{\pm.19}$ & \textcolor{blue}{\textbf{26.0}}$_{\pm.27}$ & \textcolor{deepred}{\textbf{23.7}}$_{\pm.14}$ & 12.4$_{\pm.23}$ & \textcolor{deepred}{\textbf{13.0}}$_{\pm.18}$ & \textcolor{blue}{\textbf{13.4}}$_{\pm.26}$ & \textcolor{deepred}{\textbf{30.4}} & -- \\
		\bottomrule
	\end{tabular}}
\label{tab:comparison_marsscapes_5_1}
\end{table*}

\begin{table*}[t]
\centering
\setlength{\tabcolsep}{2.2mm}
\caption{Comparisons of mIoU (\%) on MarsScapes dataset \cite{Liu2023AA} under the setting of 3-2.}
\resizebox{\linewidth}{!}{
	\begin{tabular}{c|ccccccccc|>{\columncolor{lightgray}}c|>{\columncolor{lightgray}}c}
		\toprule
		Class ID & 0 & 1 & 2 & 3 & 4 & 5 & 6 & 7 & 8 & mIoU & Imp. \\
		\midrule
		Centralized (upper bound) & 58.5$_{\pm.15}$ & 45.8$_{\pm.22}$ & 42.0$_{\pm.13}$ & 32.5$_{\pm.18}$ & 29.0$_{\pm.25}$ & 19.4$_{\pm.14}$ & 15.5$_{\pm.20}$ & 11.5$_{\pm.23}$ & 16.8$_{\pm.16}$ & 30.1 & ref. \\
		\midrule
		Finetuning + FL & 46.8$_{\pm.27}$ & 16.2$_{\pm.23}$ & 0.0$_{\pm.00}$ & 0.0$_{\pm.00}$ & 0.0$_{\pm.00}$ & 8.5$_{\pm.21}$ & 0.0$_{\pm.00}$ & 3.8$_{\pm.19}$ & 5.2$_{\pm.24}$ & 8.9 & $\Uparrow$ 18.9 \\
		PLOP \cite{Douillard2021CVPR} + FL & 47.8$_{\pm.23}$ & 30.4$_{\pm.16}$ & 23.2$_{\pm.28}$ & 16.3$_{\pm.14}$ & 10.6$_{\pm.25}$ & 3.4$_{\pm.19}$ & 0.0$_{\pm.00}$ & 0.0$_{\pm.00}$ & 2.8$_{\pm.27}$ & 14.9 & $\Uparrow$ 12.9 \\
		CUE \cite{Lin2025CVPR} + FL & 49.6$_{\pm.19}$ & 33.2$_{\pm.26}$ & 26.5$_{\pm.13}$ & 20.1$_{\pm.24}$ & 14.4$_{\pm.17}$ & 6.2$_{\pm.29}$ & 2.6$_{\pm.15}$ & 0.0$_{\pm.00}$ & 5.1$_{\pm.22}$ & 17.5 & $\Uparrow$ 10.3 \\
		CS$^2$K \cite{Cong2024ECCV} + FL & 51.2$_{\pm.27}$ & 35.5$_{\pm.14}$ & 29.1$_{\pm.21}$ & 22.8$_{\pm.18}$ & 17.1$_{\pm.31}$ & 8.8$_{\pm.16}$ & 4.5$_{\pm.24}$ & 1.8$_{\pm.12}$ & 0.0$_{\pm.00}$ & 19.0 & $\Uparrow$ 8.8 \\
		CoMBO \cite{Fang2025CVPR} + FL & 46.8$_{\pm.16}$ & 37.6$_{\pm.28}$ & 30.8$_{\pm.17}$ & 24.5$_{\pm.23}$ & 19.2$_{\pm.14}$ & 0.0$_{\pm.00}$ & 7.1$_{\pm.21}$ & \textcolor{deepred}{\textbf{9.2}}$_{\pm.32}$ & 3.7$_{\pm.18}$ & 19.9 & $\Uparrow$ 7.9 \\
		ADAPT \cite{Yang2025ICLR} + FL & 50.4$_{\pm.24}$ & \textcolor{deepred}{\textbf{44.3}}$_{\pm.17}$ & 32.2$_{\pm.29}$ & \textcolor{deepred}{\textbf{30.6}}$_{\pm.13}$ & 20.4$_{\pm.22}$ & 12.1$_{\pm.15}$ & 6.2$_{\pm.27}$ & 4.4$_{\pm.19}$ & 8.5$_{\pm.31}$ & 23.2 & $\Uparrow$ 4.6 \\
		EIR \cite{Yin2025CVPR} + FL & 51.8$_{\pm.15}$ & 40.8$_{\pm.23}$ & 33.6$_{\pm.18}$ & 27.4$_{\pm.27}$ & \textcolor{blue}{\textbf{23.1}}$_{\pm.12}$ & \textcolor{deepred}{\textbf{17.7}}$_{\pm.24}$ & \textcolor{blue}{\textbf{9.3}}$_{\pm.19}$ & 5.6$_{\pm.28}$ & 9.8$_{\pm.16}$ & 24.3 & $\Uparrow$ 3.5 \\
		FBL \cite{Dong2023CVPR} & \textcolor{blue}{\textbf{52.2}}$_{\pm.21}$ & 41.5$_{\pm.12}$ & \textcolor{blue}{\textbf{35.2}}$_{\pm.25}$ & 28.2$_{\pm.16}$ & 21.8$_{\pm.29}$ & 14.8$_{\pm.18}$ & 7.6$_{\pm.13}$ & \textcolor{blue}{\textbf{7.4}}$_{\pm.24}$ & \textcolor{blue}{\textbf{11.1}}$_{\pm.21}$ & \textcolor{blue}{\textbf{24.4}} & $\Uparrow$ 3.4 \\
		\midrule
		\textbf{Ours} & \textcolor{deepred}{\textbf{56.2}}$_{\pm.18}$ & \textcolor{blue}{\textbf{43.6}}$_{\pm.27}$ & \textcolor{deepred}{\textbf{39.8}}$_{\pm.14}$ & \textcolor{blue}{\textbf{30.2}}$_{\pm.22}$ & \textcolor{deepred}{\textbf{26.8}}$_{\pm.17}$ & \textcolor{blue}{\textbf{17.0}}$_{\pm.31}$ & \textcolor{deepred}{\textbf{13.2}}$_{\pm.23}$ & 9.0$_{\pm.15}$ & \textcolor{deepred}{\textbf{14.4}}$_{\pm.19}$ & \textcolor{deepred}{\textbf{27.8}} & -- \\
		\bottomrule
	\end{tabular}}
\label{tab:comparison_marsscapes_3_2}
\end{table*}

\begin{table*}[t]
\centering
\setlength{\tabcolsep}{2.2mm}
\caption{Comparisons of mIoU (\%) on S5Mars dataset \cite{Zhang2024TGRS} under the setting of 5-1. Class IDs: 0-soil, 1-sand, 2-bedrock, 3-rock, 4-ridge, 5-trace, 6-hole, 7-sky, 8-rover.}
\resizebox{\linewidth}{!}{
	\begin{tabular}{c|ccccccccc|>{\columncolor{lightgray}}c|>{\columncolor{lightgray}}c}
		\toprule
		Class ID & 0 & 1 & 2 & 3 & 4 & 5 & 6 & 7 & 8 & mIoU & Imp. \\
		\midrule
		Centralized (upper bound) & 74.8$_{\pm.11}$ & 66.8$_{\pm.22}$ & 58.2$_{\pm.16}$ & 43.5$_{\pm.25}$ & 36.8$_{\pm.19}$ & 39.6$_{\pm.13}$ & 28.0$_{\pm.26}$ & 25.6$_{\pm.15}$ & 27.2$_{\pm.21}$ & 44.5 & ref. \\
		\midrule
		Finetuning + FL & 51.2$_{\pm.26}$ & 22.4$_{\pm.19}$ & 8.6$_{\pm.23}$ & 0.0$_{\pm.00}$ & 0.0$_{\pm.00}$ & 5.8$_{\pm.21}$ & 0.0$_{\pm.00}$ & 0.0$_{\pm.00}$ & 6.2$_{\pm.25}$ & 10.5 & $\Uparrow$ 31.7 \\
		PLOP \cite{Douillard2021CVPR} + FL & 62.4$_{\pm.19}$ & 48.6$_{\pm.26}$ & 42.4$_{\pm.14}$ & 35.8$_{\pm.31}$ & \textcolor{deepred}{\textbf{36.4}}$_{\pm.22}$ & 20.2$_{\pm.18}$ & 0.0$_{\pm.00}$ & 12.8$_{\pm.15}$ & 0.0$_{\pm.00}$ & 28.7 & $\Uparrow$ 13.5 \\
		CUE \cite{Lin2025CVPR} + FL & 65.8$_{\pm.23}$ & 52.2$_{\pm.14}$ & 45.8$_{\pm.28}$ & \textcolor{deepred}{\textbf{43.4}}$_{\pm.17}$ & 33.2$_{\pm.25}$ & 0.0$_{\pm.00}$ & 0.0$_{\pm.00}$ & 18.4$_{\pm.32}$ & 17.6$_{\pm.19}$ & 30.7 & $\Uparrow$ 11.5 \\
		CS$^2$K \cite{Cong2024ECCV} + FL & 67.4$_{\pm.16}$ & 55.6$_{\pm.29}$ & 48.6$_{\pm.21}$ & 39.4$_{\pm.24}$ & 34.8$_{\pm.13}$ & 10.8$_{\pm.18}$ & \textcolor{deepred}{\textbf{27.8}}$_{\pm.17}$ & 22.6$_{\pm.26}$ & 19.4$_{\pm.14}$ & 36.3 & $\Uparrow$ 5.9 \\
		CoMBO \cite{Fang2025CVPR} + FL & 64.2$_{\pm.27}$ & 57.4$_{\pm.18}$ & 52.6$_{\pm.23}$ & 40.2$_{\pm.15}$ & 32.6$_{\pm.31}$ & 28.4$_{\pm.14}$ & 23.8$_{\pm.26}$ & \textcolor{deepred}{\textbf{26.8}}$_{\pm.19}$ & 0.0$_{\pm.00}$ & 36.2 & $\Uparrow$ 6.0 \\
		ADAPT \cite{Yang2025ICLR} + FL & 68.6$_{\pm.14}$ & 61.6$_{\pm.23}$ & 51.6$_{\pm.17}$ & 41.6$_{\pm.28}$ & 33.8$_{\pm.19}$ & \textcolor{blue}{\textbf{35.6}}$_{\pm.26}$ & 25.2$_{\pm.22}$ & 20.4$_{\pm.13}$ & \textcolor{deepred}{\textbf{25.8}}$_{\pm.31}$ & 40.5 & $\Uparrow$ 1.7 \\
		EIR \cite{Yin2025CVPR} + FL & \textcolor{blue}{\textbf{70.8}}$_{\pm.21}$ & 59.2$_{\pm.15}$ & \textcolor{blue}{\textbf{54.2}}$_{\pm.29}$ & \textcolor{blue}{\textbf{42.8}}$_{\pm.18}$ & \textcolor{blue}{\textbf{35.6}}$_{\pm.24}$ & 32.4$_{\pm.17}$ & 25.6$_{\pm.31}$ & \textcolor{blue}{\textbf{24.2}}$_{\pm.22}$ & 22.4$_{\pm.16}$ & \textcolor{blue}{\textbf{40.8}} & $\Uparrow$ 1.4 \\
		FBL \cite{Dong2023CVPR} & 69.2$_{\pm.18}$ & \textcolor{blue}{\textbf{62.4}}$_{\pm.27}$ & 53.8$_{\pm.14}$ & 39.6$_{\pm.23}$ & 34.2$_{\pm.16}$ & 33.8$_{\pm.32}$ & \textcolor{blue}{\textbf{26.5}}$_{\pm.19}$ & 21.8$_{\pm.25}$ & 24.2$_{\pm.21}$ & 40.6 & $\Uparrow$ 1.6 \\
		\midrule
		\textbf{Ours} & \textcolor{deepred}{\textbf{72.5}}$_{\pm.13}$ & \textcolor{deepred}{\textbf{64.5}}$_{\pm.24}$ & \textcolor{deepred}{\textbf{55.8}}$_{\pm.18}$ & 41.2$_{\pm.27}$ & 34.6$_{\pm.21}$ & \textcolor{deepred}{\textbf{37.4}}$_{\pm.15}$ & 25.8$_{\pm.28}$ & 23.6$_{\pm.17}$ & \textcolor{blue}{\textbf{24.8}}$_{\pm.23}$ & \textcolor{deepred}{\textbf{42.2}} & -- \\
		\bottomrule
	\end{tabular}}
\label{tab:comparison_s5mars_5_1}
\end{table*}

\begin{table*}[t]
\centering
\setlength{\tabcolsep}{2.2mm}
\caption{Comparisons of mIoU (\%) on S5Mars dataset \cite{Zhang2024TGRS} under the setting of 3-2.}
\resizebox{\linewidth}{!}{
	\begin{tabular}{c|ccccccccc|>{\columncolor{lightgray}}c|>{\columncolor{lightgray}}c}
		\toprule
		Class ID & 0 & 1 & 2 & 3 & 4 & 5 & 6 & 7 & 8 & mIoU & Imp. \\
		\midrule
		Centralized (upper bound) & 72.5$_{\pm.14}$ & 62.5$_{\pm.21}$ & 54.2$_{\pm.17}$ & 33.8$_{\pm.19}$ & 29.0$_{\pm.24}$ & 41.8$_{\pm.12}$ & 34.2$_{\pm.26}$ & 17.8$_{\pm.16}$ & 27.7$_{\pm.22}$ & 41.5 & ref. \\
		\midrule
		Finetuning + FL & 48.4$_{\pm.28}$ & 18.6$_{\pm.24}$ & 0.0$_{\pm.00}$ & 0.0$_{\pm.00}$ & 0.0$_{\pm.00}$ & 7.2$_{\pm.22}$ & 0.0$_{\pm.00}$ & 4.5$_{\pm.18}$ & 5.8$_{\pm.26}$ & 9.4 & $\Uparrow$ 29.8 \\
		PLOP \cite{Douillard2021CVPR} + FL & 51.8$_{\pm.22}$ & 37.4$_{\pm.17}$ & 31.2$_{\pm.28}$ & 24.6$_{\pm.14}$ & 20.4$_{\pm.26}$ & 10.6$_{\pm.31}$ & 0.0$_{\pm.00}$ & 6.8$_{\pm.24}$ & 6.4$_{\pm.21}$ & 21.0 & $\Uparrow$ 18.2 \\
		CUE \cite{Lin2025CVPR} + FL & 55.2$_{\pm.16}$ & 41.6$_{\pm.28}$ & 34.8$_{\pm.19}$ & \textcolor{deepred}{\textbf{32.8}}$_{\pm.25}$ & 22.6$_{\pm.13}$ & 0.0$_{\pm.00}$ & 0.0$_{\pm.00}$ & 10.2$_{\pm.17}$ & 8.2$_{\pm.31}$ & 22.8 & $\Uparrow$ 16.4 \\
		CS$^2$K \cite{Cong2024ECCV} + FL & 57.8$_{\pm.24}$ & 44.8$_{\pm.15}$ & 38.2$_{\pm.27}$ & 29.4$_{\pm.18}$ & \textcolor{deepred}{\textbf{27.8}}$_{\pm.21}$ & 15.6$_{\pm.16}$ & 13.6$_{\pm.32}$ & \textcolor{deepred}{\textbf{16.8}}$_{\pm.23}$ & 0.0$_{\pm.00}$ & 27.1 & $\Uparrow$ 12.1 \\
		CoMBO \cite{Fang2025CVPR} + FL & 53.6$_{\pm.18}$ & 46.6$_{\pm.26}$ & \textcolor{blue}{\textbf{44.4}}$_{\pm.14}$ & 28.8$_{\pm.31}$ & 21.8$_{\pm.17}$ & 0.0$_{\pm.00}$ & \textcolor{blue}{\textbf{18.8}}$_{\pm.21}$ & 11.6$_{\pm.15}$ & 8.8$_{\pm.27}$ & 26.0 & $\Uparrow$ 13.2 \\
		ADAPT \cite{Yang2025ICLR} + FL & 58.8$_{\pm.27}$ & \textcolor{blue}{\textbf{52.2}}$_{\pm.13}$ & 40.6$_{\pm.22}$ & 31.4$_{\pm.16}$ & 24.4$_{\pm.29}$ & \textcolor{blue}{\textbf{25.4}}$_{\pm.18}$ & 15.2$_{\pm.24}$ & 13.4$_{\pm.31}$ & 13.4$_{\pm.19}$ & 30.5 & $\Uparrow$ 8.7 \\
		EIR \cite{Yin2025CVPR} + FL & \textcolor{blue}{\textbf{60.6}}$_{\pm.15}$ & 48.4$_{\pm.24}$ & 42.2$_{\pm.18}$ & \textcolor{blue}{\textbf{32.2}}$_{\pm.27}$ & 23.8$_{\pm.14}$ & 21.6$_{\pm.23}$ & 17.4$_{\pm.31}$ & 15.6$_{\pm.16}$ & \textcolor{blue}{\textbf{14.8}}$_{\pm.28}$ & 30.7 & $\Uparrow$ 8.5 \\
		FBL \cite{Dong2023CVPR} & 59.2$_{\pm.21}$ & 50.6$_{\pm.19}$ & 43.4$_{\pm.26}$ & 30.2$_{\pm.14}$ & \textcolor{blue}{\textbf{26.2}}$_{\pm.23}$ & 23.8$_{\pm.17}$ & 16.2$_{\pm.28}$ & 14.8$_{\pm.22}$ & 13.6$_{\pm.15}$ & \textcolor{blue}{\textbf{30.9}} & $\Uparrow$ 8.3 \\
		\midrule
		\textbf{Ours} & \textcolor{deepred}{\textbf{70.2}}$_{\pm.19}$ & \textcolor{deepred}{\textbf{60.2}}$_{\pm.28}$ & \textcolor{deepred}{\textbf{51.8}}$_{\pm.15}$ & 31.6$_{\pm.22}$ & 26.8$_{\pm.27}$ & \textcolor{deepred}{\textbf{39.6}}$_{\pm.14}$ & \textcolor{deepred}{\textbf{31.8}}$_{\pm.23}$ & \textcolor{blue}{\textbf{15.8}}$_{\pm.18}$ & \textcolor{deepred}{\textbf{25.0}}$_{\pm.26}$ & \textcolor{deepred}{\textbf{39.2}} & -- \\
		\bottomrule
	\end{tabular}}
\label{tab:comparison_s5mars_3_2}
\end{table*}

\subsection{Implementation Details}

For fair comparison with state-of-the-art continual learning methods under federated settings, we adapt six representative segmentation
approaches \cite{Douillard2021CVPR, Cong2024ECCV, Fang2025CVPR, Yang2025ICLR, Yin2025CVPR, Lin2025CVPR}
to federated learning by incorporating local training and federated aggregation protocols,
along with one federated segmentation method \cite{Dong2023CVPR} and a
standard fine-tuning baseline. All adapted baselines share the same FedAvg aggregation as our framework and keep their method-specific mechanisms strictly client-local: PLOP's~\cite{Douillard2021CVPR} KD teacher is the client's own previous-task model; the exemplar memory in CoMBO~\cite{Fang2025CVPR}, CS$^2$K~\cite{Cong2024ECCV}, EIR~\cite{Yin2025CVPR}, CUE~\cite{Lin2025CVPR}, and ADAPT~\cite{Yang2025ICLR} is per-client and never exchanged; FBL~\cite{Dong2023CVPR} is used in its native federated form. All baselines therefore operate under identical communication budgets. Additionally, to evaluate long-term recovery efficiency, we compare against three recovery strategies: low-rank adaptation \cite{He2025CVPR}, 
meta-learning based episodic distillation \cite{Wang2022CVPR}, and variational 
knowledge distillation \cite{Li2023TPAMI}. All methods employ the identical 
segmentation backbone DeepLabV3+ \cite{Chen2018ECCV} with ResNet-50 \cite{He2016CVPR} 
pretrained on ImageNet \cite{Russakovsky2015IJCV} to ensure fair evaluation. We 
follow standard data augmentation strategies, including random horizontal flip, 
random scaling, and random cropping for all experiments. Each client maintains 
an exemplar memory buffer $\mathcal{M}_k$ with a fixed capacity of 200 samples, 
following the iCaRL protocol \cite{Rebuffi2017CVPR} to store exemplars whose 
features are closest to the class mean representation. As new classes arrive, 
we allocate $\frac{|\mathcal{M}_k|}{C^o+C^t}$ exemplars per class and remove 
$\frac{|\mathcal{M}_k|}{C^o}-\frac{|\mathcal{M}_k|}{C^o+C^t}$ samples per old 
class to maintain the fixed buffer size.


\begin{table*}[t]
\centering
\setlength{\tabcolsep}{4mm}
\caption{Comparisons of mIoU (\%) on AI4MARS dataset \cite{Swan2021CVPR} under the setting of 2-1. Class IDs: 0-Soil, 1-Bedrock, 2-Sand, 3-Big Rock.}
\resizebox{0.85\linewidth}{!}{
	\begin{tabular}{c|cccc|>{\columncolor{lightgray}}c|>{\columncolor{lightgray}}c}
		\toprule
		Class ID & 0 & 1 & 2 & 3 & mIoU & Imp. \\
		\midrule
		Centralized (upper bound) & 60.2$_{\pm.16}$ & 51.0$_{\pm.22}$ & 15.0$_{\pm.19}$ & 18.6$_{\pm.14}$ & 36.2 & ref. \\
		\midrule
		Finetuning + FL & 35.4$_{\pm.28}$ & 1.8$_{\pm.21}$ & 0.0$_{\pm.00}$ & 0.0$_{\pm.00}$ & 9.3 & $\Uparrow$ 24.2 \\
		PLOP \cite{Douillard2021CVPR} + FL & 42.8$_{\pm.19}$ & 13.4$_{\pm.27}$ & 0.0$_{\pm.00}$ & 0.0$_{\pm.00}$ & 14.1 & $\Uparrow$ 19.4 \\
		CUE \cite{Lin2025CVPR} + FL & 46.2$_{\pm.24}$ & 19.8$_{\pm.16}$ & 0.0$_{\pm.00}$ & 0.0$_{\pm.00}$ & 16.5 & $\Uparrow$ 17.0 \\
		CS$^2$K \cite{Cong2024ECCV} + FL & 50.6$_{\pm.17}$ & 26.2$_{\pm.31}$ & 3.4$_{\pm.22}$ & 0.0$_{\pm.00}$ & 20.1 & $\Uparrow$ 13.4 \\
		CoMBO \cite{Fang2025CVPR} + FL & 47.8$_{\pm.26}$ & 38.4$_{\pm.14}$ & 4.2$_{\pm.29}$ & 0.0$_{\pm.00}$ & 22.6 & $\Uparrow$ 10.9 \\
		ADAPT \cite{Yang2025ICLR} + FL & 54.2$_{\pm.21}$ & 33.6$_{\pm.18}$ & 8.2$_{\pm.25}$ & 0.0$_{\pm.00}$ & 24.0 & $\Uparrow$ 9.5 \\
		EIR \cite{Yin2025CVPR} + FL & \textcolor{deepred}{\textbf{59.6}}$_{\pm.15}$ & 32.4$_{\pm.28}$ & 7.6$_{\pm.17}$ & 0.0$_{\pm.00}$ & 24.9 & $\Uparrow$ 8.6 \\
		FBL \cite{Dong2023CVPR} & 56.8$_{\pm.23}$ & \textcolor{blue}{\textbf{42.8}}$_{\pm.19}$ & \textcolor{blue}{\textbf{10.4}}$_{\pm.31}$ & \textcolor{blue}{\textbf{8.2}}$_{\pm.26}$ & \textcolor{blue}{\textbf{29.6}} & $\Uparrow$ 3.9 \\
		\midrule
		\textbf{Ours} & \textcolor{blue}{\textbf{58.4}}$_{\pm.18}$ & \textcolor{deepred}{\textbf{48.2}}$_{\pm.24}$ & \textcolor{deepred}{\textbf{12.6}}$_{\pm.21}$ & \textcolor{deepred}{\textbf{14.6}}$_{\pm.16}$ & \textcolor{deepred}{\textbf{33.5}} & -- \\
		\bottomrule
	\end{tabular}}
\label{tab:comparison_ai4mars_2_1}
\end{table*}

\begin{figure*}[t]
	\centering
 	\includegraphics[width=0.95\textwidth]{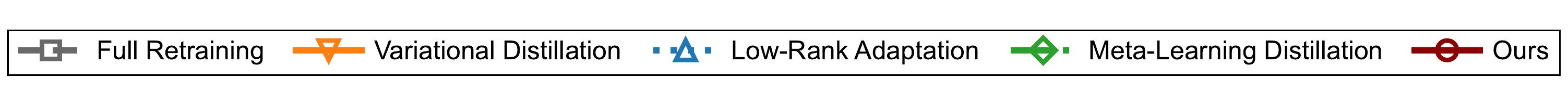}
	
	\vspace{-5pt}
	
	\begin{subfigure}[b]{0.32\textwidth}
		\centering
		\includegraphics[width=\textwidth]{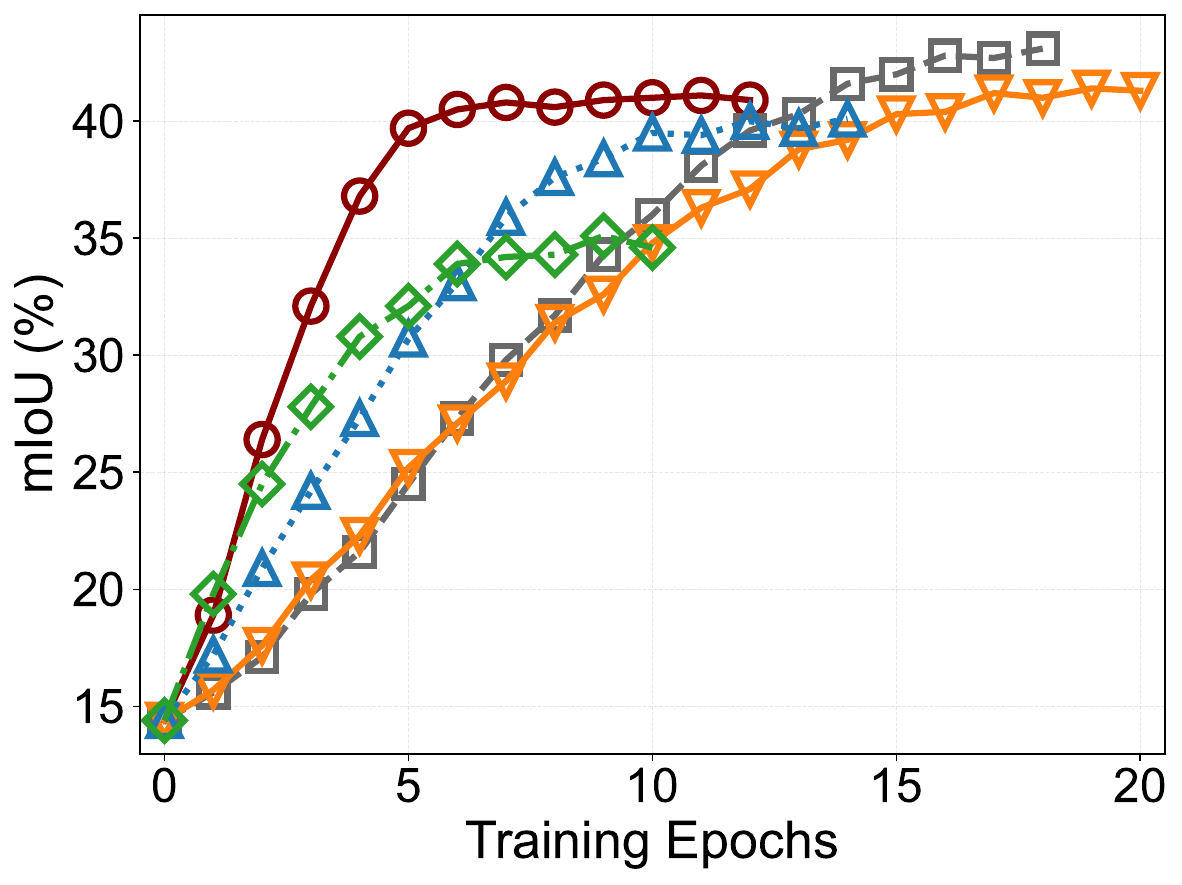}
		\caption{MarsScapes}
		\label{fig:recovery_marsscapes}
	\end{subfigure}
	\hfill
	\begin{subfigure}[b]{0.32\textwidth}
		\centering
		\includegraphics[width=\textwidth]{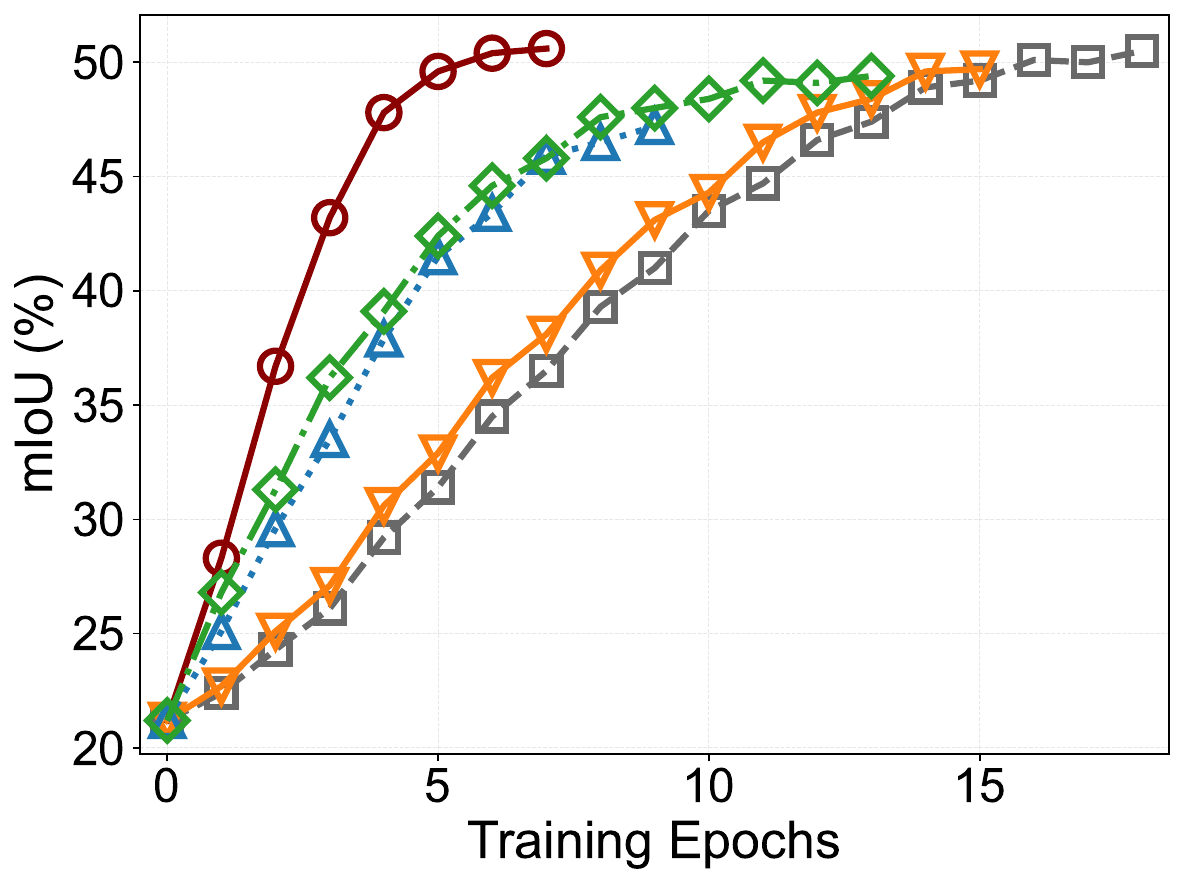}
		\caption{S5Mars}
		\label{fig:recovery_s5mars}
	\end{subfigure}
	\hfill
	\begin{subfigure}[b]{0.32\textwidth}
		\centering
		\includegraphics[width=\textwidth]{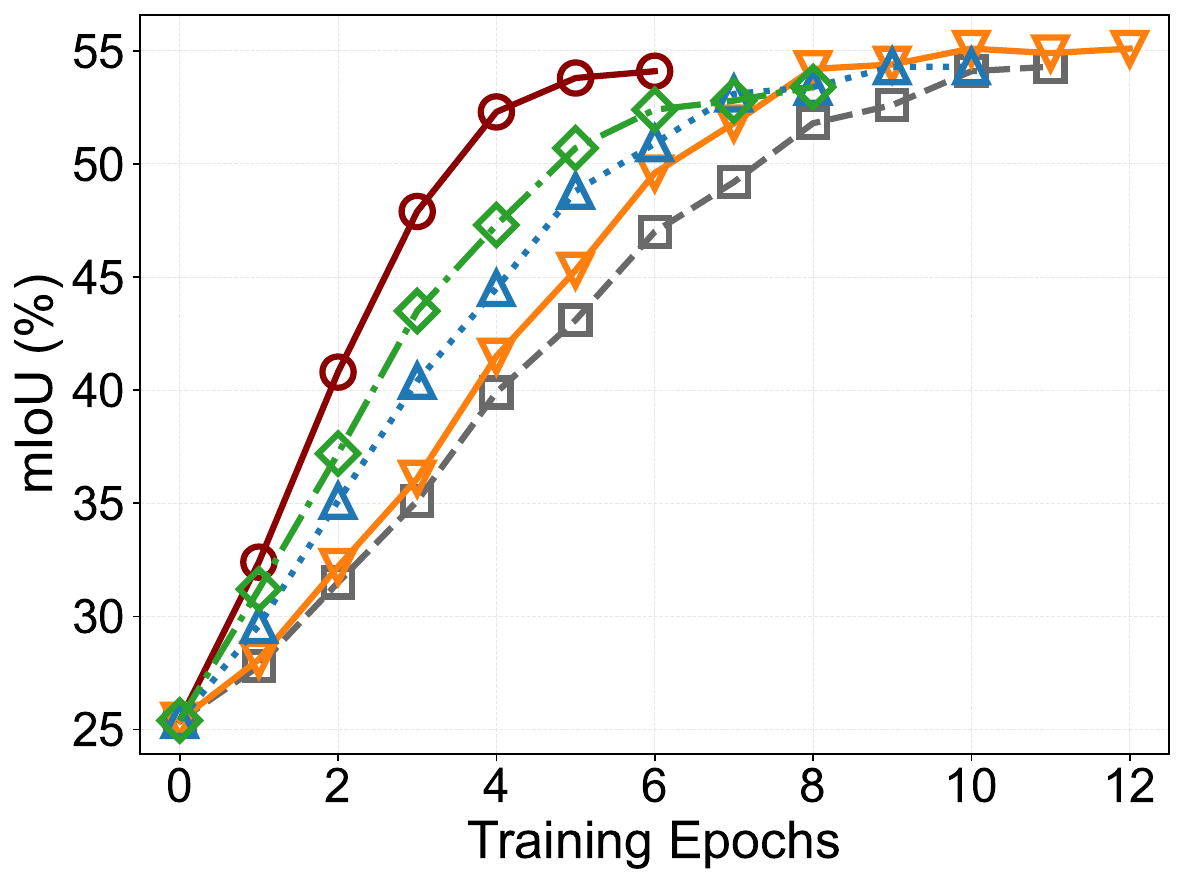}
		\caption{AI4MARS}
		\label{fig:recovery_ai4mars}
	\end{subfigure}
	
	\vspace{-2pt}
	\caption{Recovery efficiency comparison across three Mars terrain datasets.}
	\label{fig:recovery_efficiency}
\end{figure*}

The Stochastic Gradient Descent (SGD) optimizer is used here, with a momentum value of 0.9 and a weight decay of $1\times10^{-4}$, in order to encourage stable convergence and regularization during training. For the base task, the learning rate is initialized at $1\times10^{-2}$, and is subsequently reduced for incremental tasks to better accommodate the changing data distribution. Specifically, a learning rate of $1\times10^{-3}$ is used for all incremental tasks. In the federated setting, we simulate a fleet consisting of $K=30$ rovers, maintaining a fixed client population for the duration of training. This client count is chosen so that the Dirichlet partition at $\beta=0.1$ produces a sufficiently heterogeneous label distribution across clients, such that each rover observes only a small subset of classes and the induced gradient conflicts are representative of the stress regime the framework is designed for. A smaller $K$ at the same $\beta$ yields a milder Non-IID partition and would understate the difficulty of the problem. To capture data heterogeneity that results from regional geological differences, we partition the data among clients using a Dirichlet distribution, where the concentration parameter is set to $\beta=0.3$ for MarsScapes and S5Mars, and to $\beta=0.1$ for AI4MARS, so as to reflect the more pronounced cross-mission domain shift present in AI4MARS. For each task, we conduct $R=5$ communication rounds, during which all clients participate in local training. Within each round, every client performs 5 local epochs. The batch size is set to 8, determined by GPU memory limitations when working with high-resolution images. The rehearsal weight is set to $\lambda=1.0$ in Eq.~\eqref{equ6}, giving equal importance to new task learning and memory replay, consistent with the equal-weighting convention in iCaRL~\cite{Rebuffi2017CVPR}. All reported mIoU values are averaged over three independent runs with random seeds $\{42, 82, 132\}$, and the subscripts in Tables~\ref{tab:comparison_marsscapes_5_1}--\ref{tab:comparison_ai4mars_2_1} denote the standard deviation across these runs.

\begin{table}[t]
\centering
\caption{Architecture and parameter counts of the auxiliary modules. ``Size'' is the FP32 on-device storage footprint; all modules remain local.}
\label{tab:aux_modules}
\setlength{\tabcolsep}{1.5mm}
\begin{tabular}{lccc}
\toprule
Module & Architecture & Params & Size \\
\midrule
$\phi_s$ (shallow) & 2-layer MLP & 0.11 M & 0.4 MB \\
$\phi_d$ (deep) & 3-layer MLP & 0.38 M & 1.5 MB \\
$\phi_c$ (head) & Multi-encoder + Fusion & 0.82 M & 3.3 MB \\
$\psi$ (recovery) & 4-encoder + Attn + Decoder & 1.26 M & 5.0 MB \\
\midrule
Total & --- & 2.57 M & 10.2 MB \\
\bottomrule
\end{tabular}
\end{table}

The four auxiliary modules account for a 10.2~MB on-device footprint, about 9.6\% of the baseline segmentation model. None of these parameters enters the communication budget: the LSR generators $\{\omega_s,\omega_d,\omega_c\}$ remain on each rover and are never transmitted, and the recovery function $\psi$ is broadcast once at the end of Task~0 and remains fixed thereafter. The per-round uplink and downlink of our framework during incremental tasks are therefore identical to those of the FedAvg baseline with iCaRL memory.

\begin{table*}[h]
\centering
\setlength{\tabcolsep}{1.8mm}
\caption{Ablation study on Layer-Selective Rehearsal across three Mars terrain datasets.}
\label{tab:ablation_lsr}
\resizebox{\textwidth}{!}{
\begin{tabular}{c|ccc|cccc|cccc|cccc}
\toprule
& \multicolumn{3}{c|}{Components} & \multicolumn{4}{c|}{MarsScapes \cite{Liu2023AA} 5-1} & \multicolumn{4}{c|}{S5Mars \cite{Zhang2024TGRS} 5-1} & \multicolumn{4}{c}{AI4MARS \cite{Swan2021CVPR} 2-1} \\
Settings & $\phi_s$ & $\phi_d$ & $\phi_c$ & 0-7 & 8 & mIoU & Imp. & 0-7 & 8 & mIoU & Imp. & 0-2 & 3 & mIoU & Imp. \\
\midrule
Uniform rehearsal & \multicolumn{3}{c|}{Same $\lambda$} & 29.8 & 12.2 & 26.8 & $\Uparrow$ 3.6 & 41.8 & 21.5 & 38.8 & $\Uparrow$ 3.4 & 34.2 & 11.8 & 28.6 & $\Uparrow$ 4.9 \\
LSR-w/o $\phi_c$ & \cmark & \cmark & \xmark & 23.5 & 10.8 & 21.2 & $\Uparrow$ 9.2 & 34.2 & 19.8 & 31.8 & $\Uparrow$ 10.4 & 26.8 & 9.5 & 22.5 & $\Uparrow$ 11.0 \\
LSR-w/o $\phi_d$ & \cmark & \xmark & \cmark & 31.2 & 12.8 & 28.2 & $\Uparrow$ 2.2 & 43.0 & 23.2 & 40.0 & $\Uparrow$ 2.2 & 37.5 & 13.2 & 31.4 & $\Uparrow$ 2.1 \\
LSR-w/o $\phi_s$ & \xmark & \cmark & \cmark & 32.2 & 13.2 & 30.0 & $\Uparrow$ 0.4 & 44.0 & 24.5 & 41.8 & $\Uparrow$ 0.4 & 39.2 & 14.2 & 33.0 & $\Uparrow$ 0.5 \\
\midrule
\rowcolor{lightgray}
\textbf{Ours} & \cmark & \cmark & \cmark
& \textcolor{deepred}{\textbf{32.5}} & \textcolor{deepred}{\textbf{13.4}} & \textcolor{deepred}{\textbf{30.4}} & --
& \textcolor{deepred}{\textbf{44.4}} & \textcolor{deepred}{\textbf{24.8}} & \textcolor{deepred}{\textbf{42.2}} & --
& \textcolor{deepred}{\textbf{39.7}} & \textcolor{deepred}{\textbf{14.6}} & \textcolor{deepred}{\textbf{33.5}} & -- \\
\bottomrule
\end{tabular}}
\end{table*}

\begin{figure*}[h]
\centering
\includegraphics[width=\textwidth]{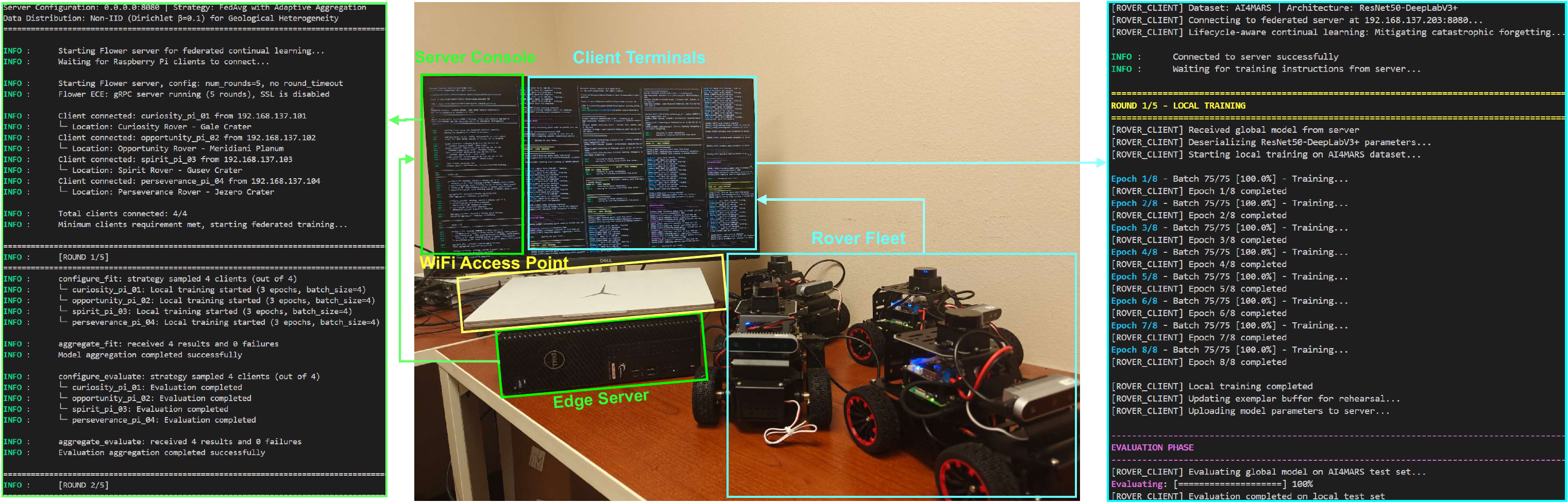}
\caption{Real-world federated continual learning testbed deployment. }
\label{fig:real_world_setup}
\end{figure*}

\subsection{Comparison Performance}

Tables~\ref{tab:comparison_marsscapes_5_1}--\ref{tab:comparison_ai4mars_2_1} present quantitative comparisons against eight representative methods on three Mars terrain datasets under various FCL settings. Our method achieves up to 8.3\% mIoU improvement over the strongest federated baseline and up to 31.7\% over conventional fine-tuning. Under extreme Non-IID conditions ($\beta=0.1$ for AI4MARS), our method outperforms the specialized federated method FBL \cite{Dong2023CVPR} by 3.9\% (33.5\% vs 29.6\%). On AI4MARS under the 2-1 setting, several baselines report zero IoU on the incremental classes (Sand, Big Rock) because the extreme Non-IID partition assigns only a handful of samples per new class to each rover, and the absence of a federated anti-forgetting mechanism causes the newly learned class to be averaged out during aggregation. The Centralized (upper bound) rows at the top of each table report our framework on the pooled data of all $K=30$ clients without federated aggregation, upper-bounding the federated setting (not a joint-training ceiling, since the inter-region distributional shift persists). The $2.2$--$2.7\%$ gap widens with Non-IID severity (largest on AI4MARS, $\beta=0.1$).

Fig.~\ref{fig:recovery_efficiency} reports recovery efficiency when long-term degradation is detected, and also serves as the RKR ablation against Low-Rank Adaptation, Meta-Learning Distillation, Variational Distillation, and Full Retraining. This experiment evaluates restoration on a controlled degradation (training frozen after a specific task with the segmentation head re-initialized) against the Task-0 test set, so the absolute mIoU levels are not directly comparable with the cumulative mIoU in Tables~\ref{tab:comparison_marsscapes_5_1}--\ref{tab:comparison_ai4mars_2_1}. Our RKR function restores
performance in 5-8 epochs: MarsScapes recovers from 14.4\% to 41.4\% in 6 epochs
(Fig.~\ref{fig:recovery_marsscapes}), S5Mars from 21.2\% to 50.6\% in 7 epochs
(Fig.~\ref{fig:recovery_s5mars}), and AI4MARS from 25.4\% to 54.1\% in 6 epochs
(Fig.~\ref{fig:recovery_ai4mars}). In contrast, Full Retraining requires 
18 epochs on MarsScapes, 18 epochs on S5Mars, and 11 epochs on AI4MARS to 
achieve comparable performance. The 2-3$\times$ speedup stems from the recovery 
patterns learned on Task 0, which transfer zero-shot to all subsequent tasks 
without retraining. Other recovery methods show intermediate efficiency: 
Low-Rank Adaptation converges in 10-15 epochs but suffers from capacity
limitations in later tasks, Meta-Learning Distillation requires 8-13 epochs
with per-task episodic optimization overhead, and Variational Distillation
needs 15-20 epochs due to variational approximation constraints.

The trigger threshold $\tau$ is not hand-tuned per dataset but derived as a fixed fraction (80\%) of the Task~0 cumulative mIoU, so the same rule applies across all three datasets and Non-IID regimes. Sweeping $\tau$ over $\{70\%, 80\%, 90\%\}$ on MarsScapes~5-1 yields final mIoU within a $0.8\%$ band ($29.7\%$, $30.4\%$, $30.5\%$) while recovery activations range from $1$ to $4$ (out of $T=4$ incremental tasks); $80\%$ is the communication-efficient operating point, since $90\%$ buys only $0.1\%$ final mIoU at $33\%$ more activations ($4$ vs $3$). Across all main experiments, every trigger activation was followed by a positive cumulative mIoU gain, with diminishing returns for later activations (no spurious activations), and no task with end-of-task cumulative mIoU below $\tau$ went unrecovered (no missed events); the end-of-task evaluation in Algorithm~\ref{alg:training} averages across $R=5$ rounds, which filters single-round noise.

\subsection{Ablation Studies}

To analyze the effectiveness of each component in Layer-Selective Rehearsal, Table~\ref{tab:ablation_lsr} presents ablation experiments. We evaluate four ablation variants: Uniform rehearsal applies the same rehearsal weight $\lambda$ to all network layers without stratification; LSR-w/o $\phi_c$, LSR-w/o $\phi_d$, and LSR-w/o $\phi_s$ indicate our model without the classifier generator, deep layer generator, and shallow layer generator, respectively. When compared with the full LSR framework, all ablation variants degrade performance by $0.4\%\sim11.0\%$ mIoU across datasets, which verifies the importance of stratified protection to address heterogeneous forgetting across network layers.

Notably, the performance gap between uniform rehearsal and full LSR ($3.4\%\sim4.9\%$ mIoU improvement) demonstrates the superiority of stratified protection over conventional uniform strategies. This advantage becomes more noticeable under extreme Non-IID conditions (AI4MARS with $\beta=0.1$), where the stratified approach achieves $4.9\%$ improvement by appropriately allocating protection budgets according to layer-wise forgetting contributions.

\subsection{Real-World Validation}\label{sec:testbed_experiments}

\subsubsection{Experimental Configurations}
\begin{figure*}[h]
	\centering
 	\includegraphics[width=0.95\textwidth]{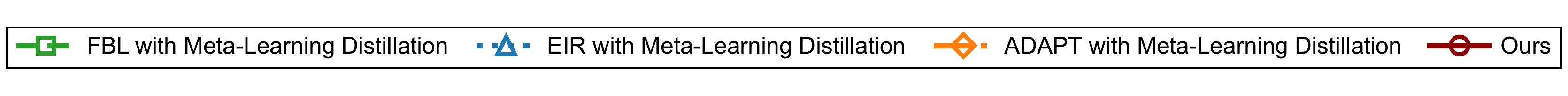}
	
	\vspace{-5pt}
	
	\begin{subfigure}[b]{0.32\textwidth}
		\centering
		\includegraphics[width=\textwidth]{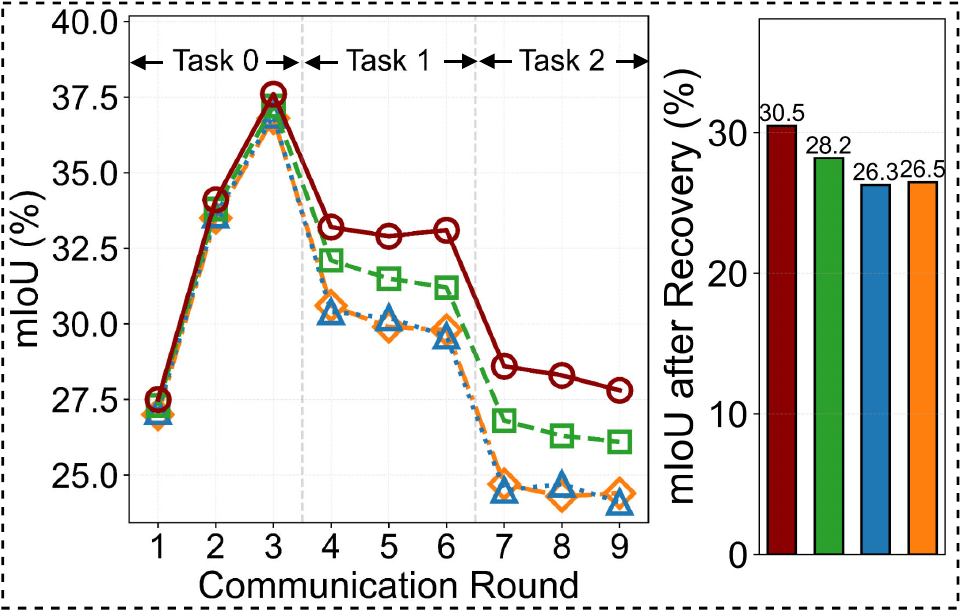}
		\caption{MarsScapes}
		\label{fig:lifecycle_marsscapes}
	\end{subfigure}
	\hfill
	\begin{subfigure}[b]{0.32\textwidth}
		\centering
		\includegraphics[width=\textwidth]{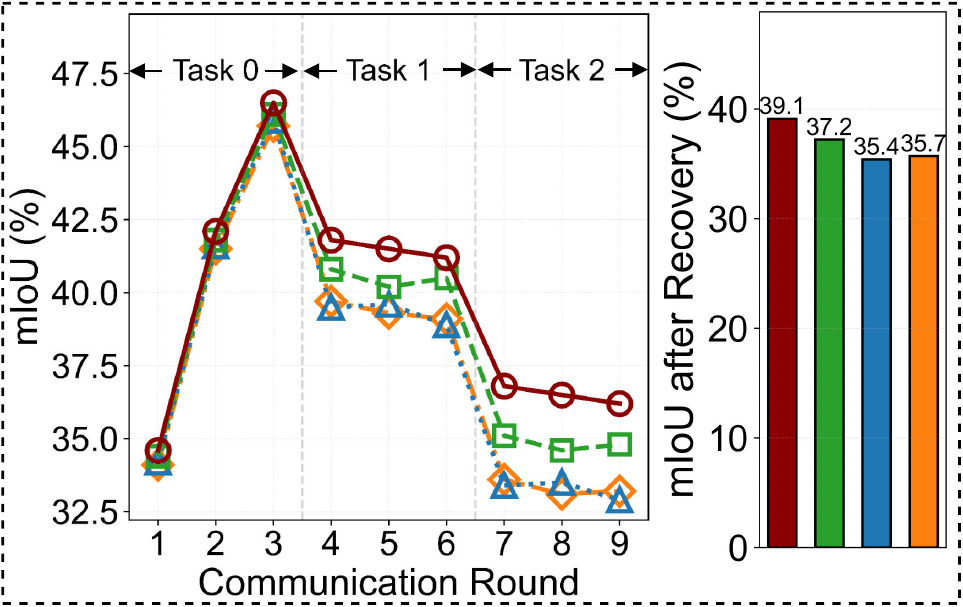}
		\caption{S5Mars}
		\label{fig:lifecycle_s5mars}
	\end{subfigure}
	\hfill
	\begin{subfigure}[b]{0.32\textwidth}
		\centering
		\includegraphics[width=\textwidth]{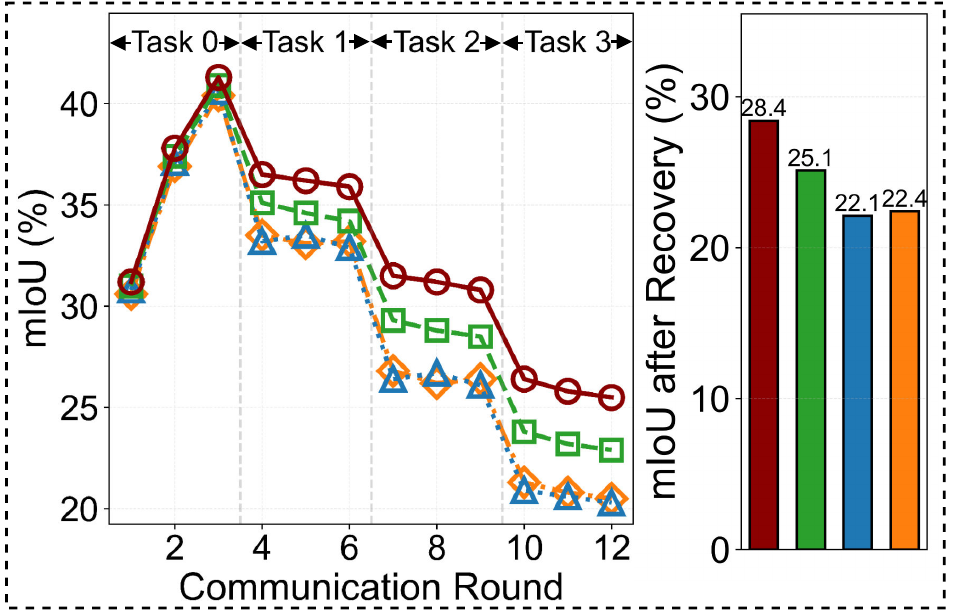}
		\caption{AI4MARS}
		\label{fig:lifecycle_ai4mars}
	\end{subfigure}
	
	\vspace{-2pt}
	\caption{Complete FCL lifecycle evolution across three Mars terrain datasets.}
	\label{fig:lifecycle_evolution}
\end{figure*}

To validate our lifecycle-aware FCL framework in the full learning lifecycle, we construct a physical testbed that enables observation of model evolution from initial task training through long-term degradation and recovery, as illustrated in Fig. \ref{fig:real_world_setup}. The testbed consists of one edge coordination server, four autonomous rover clients, and dedicated network infrastructure operating continuously across sequential learning tasks. The server runs on an Ubuntu 22.04 workstation equipped with an Intel Core i7 processor, 16 GB of memory, and an NVIDIA RTX A2000 GPU, executing the Flower federated learning framework to manage federated aggregation throughout the entire training lifecycle. The rover fleet comprises four MentorPi robotic platforms, each powered by a Raspberry Pi 5 with a Cortex-A76 quad-core processor clocked at 2.4 GHz, representing the computational constraints of the mission lifecycle. Network connectivity is provided through a dedicated laptop serving as a WiFi access point, establishing a 192.168.137.0/24 local network using the 802.11ac protocol. Communication between the server and rover clients is facilitated via gRPC protocol over TCP port 8080, enabling bidirectional transmission of model parameters and training metrics during federated rounds. This deployment architecture allows us to validate both temporal scales of our framework: Layer-Selective Rehearsal operating at the training-time scale during each task stage, and Rapid Knowledge Recovery operating at the long-term scale when cumulative degradation is detected across the extended task sequence. The physical deployment enables end-to-end lifecycle observation under realistic edge computing constraints representative of distributed Mars exploration operations.

The testbed runs with four rover clients, which corresponds to the number of active Mars surface assets in the deployment scenario considered here (Curiosity, Perseverance, and the surface elements of the Mars Sample Return campaign). The testbed therefore directly represents the operational regime that current and near-term missions present, while the $K=30$ simulation covers a more heterogeneous stress regime.

We deploy a pruned DeepLabV3+ \cite{Chen2018ECCV} with ResNet-50 backbone \cite{He2016CVPR} pretrained on ImageNet \cite{Russakovsky2015IJCV}. To accommodate the computational and storage constraints of edge devices, we adapt the input resolution to $256 \times 256$ and construct representative subsets of the dataset while preserving the fundamental characteristics of federated continual learning. For AI4MARS \cite{Swan2021CVPR}, we extract 2,000 images covering all 4 classes (Soil, Bedrock, Sand, Big Rock) and adopt a 1-1-1-1 incremental setting where Task 0 learns the first class, followed by 3 incremental tasks each introducing 1 new class ($T=4$). MarsScapes \cite{Liu2023AA} comprises 1,500 images spanning the first 3 classes (Soil, Bedrock, Sand) with a 1-1-1 configuration ($T=3$), where Task 0 establishes the base model on the first class and subsequent tasks introduce 1 new class each. S5Mars \cite{Zhang2024TGRS} uses 1,200 images across 3 classes (soil, bedrock, rock) with the same 1-1-1 setting ($T=3$). Data partitioning across the four rover clients follows a Dirichlet distribution with $\beta=0.1$ to simulate extreme Non-IID conditions characteristic of geologically heterogeneous Mars terrain. Each rover maintains a local exemplar memory buffer $\mathcal{M}_k$ with capacity allocated heterogeneously to simulate uneven per-client data availability (different traverse durations, on-board prioritization, downlink opportunities) rather than a hardware-level storage limit, with exemplars selected via iCaRL~\cite{Rebuffi2017CVPR}. Training employs SGD optimizer with momentum 0.9, learning rate $1 \times 10^{-3}$, and batch size 2 per client constrained by CPU memory bandwidth. Each task progresses through 3 federated communication rounds, with each round involving 8 local training epochs on the combined dataset of current task data and memory buffer. All four clients participate synchronously in each round. We evaluate cumulative performance using mean Intersection over Union (mIoU) computed over all learned classes.

\subsubsection{Experimental Method}
\begin{figure*}[h]
	\centering
 	\includegraphics[width=0.8\textwidth]{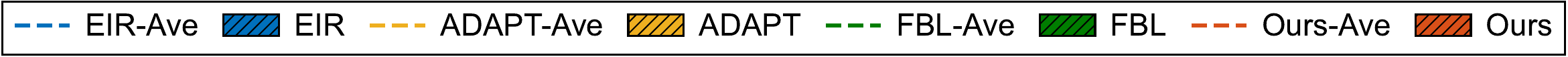}
	
	\vspace{2pt}
	
	\begin{subfigure}[b]{0.32\textwidth}
		\centering
		\includegraphics[width=\textwidth]{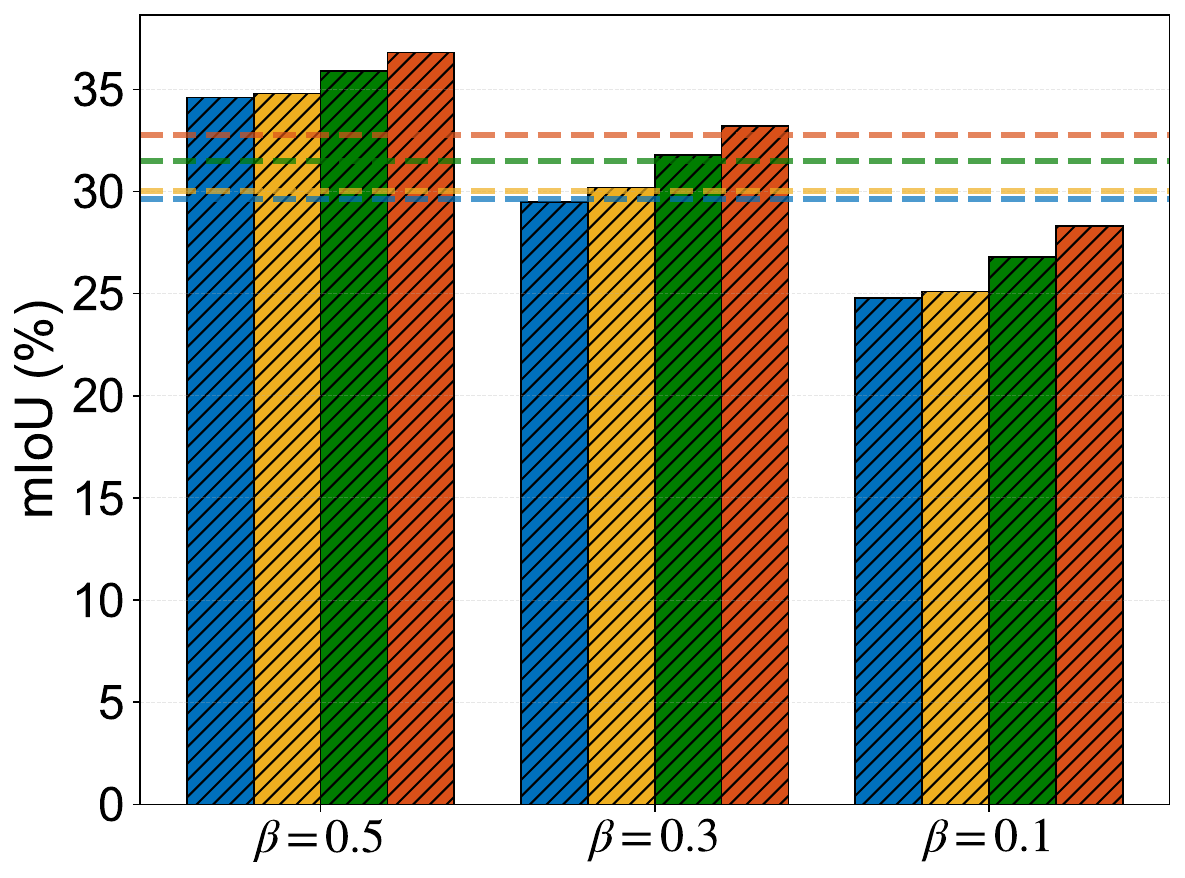}
		\caption{MarsScapes}
		\label{fig:noniid_marsscapes}
	\end{subfigure}
	\hfill
	\begin{subfigure}[b]{0.32\textwidth}
		\centering
		\includegraphics[width=\textwidth]{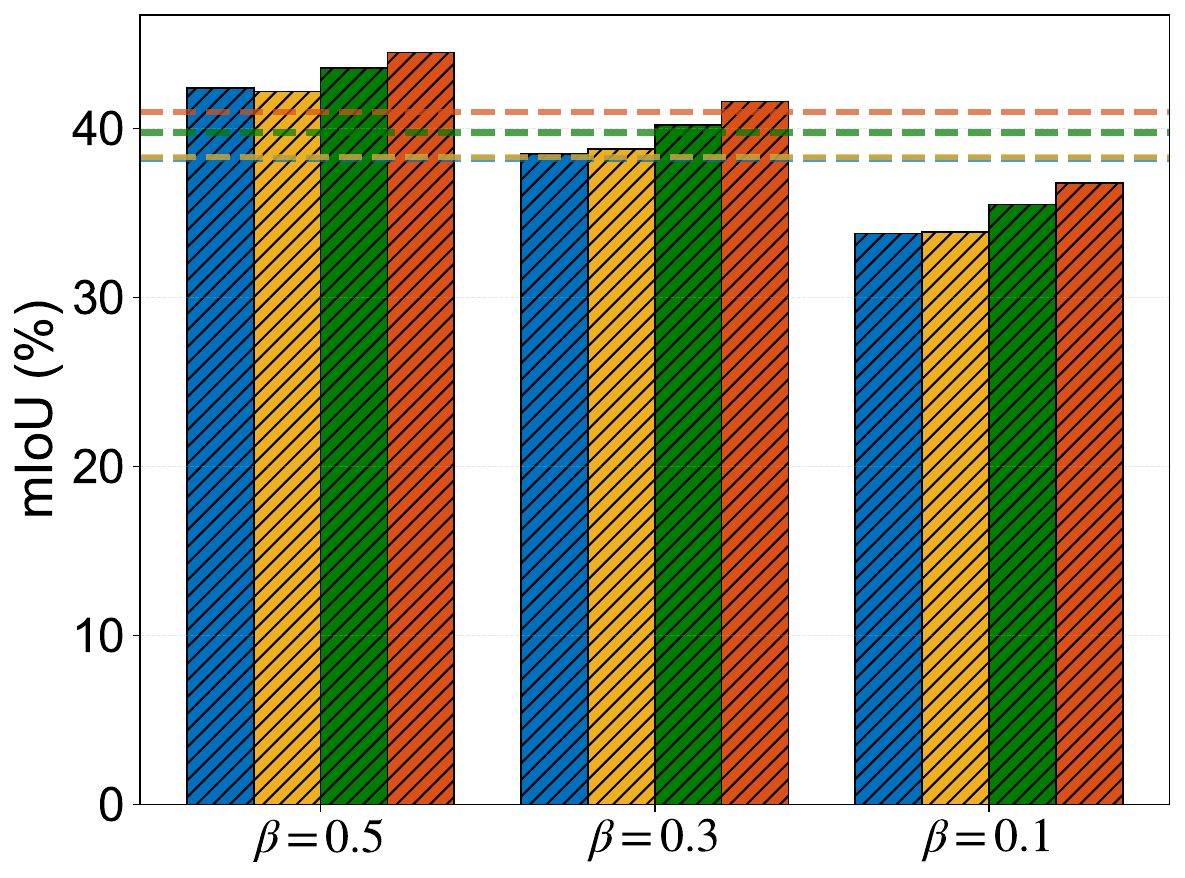}
		\caption{S5Mars}
		\label{fig:noniid_s5mars}
	\end{subfigure}
	\hfill
	\begin{subfigure}[b]{0.32\textwidth}
		\centering
		\includegraphics[width=\textwidth]{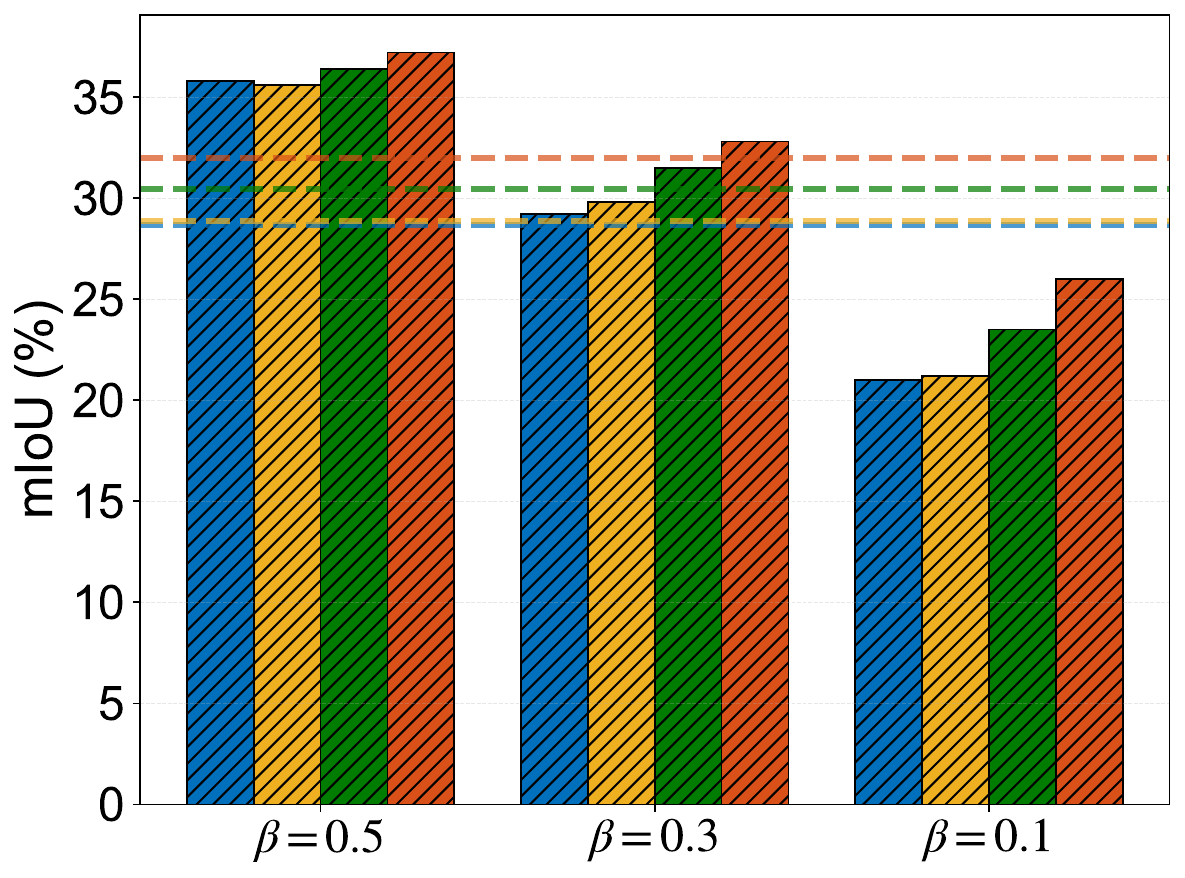}
		\caption{AI4MARS}
		\label{fig:noniid_ai4mars}
	\end{subfigure}
	
	\vspace{-2pt}
	\caption{Performance under varying Non-IID severity across three Mars terrain datasets.}
	\label{fig:noniid_robustness}
\end{figure*}

\begin{figure*}[h]
	\centering
 	\includegraphics[width=0.8\textwidth]{figures/robustness_legend.pdf}
	
	\vspace{2pt}
	
	\begin{subfigure}[b]{0.32\textwidth}
		\centering
		\includegraphics[width=\textwidth]{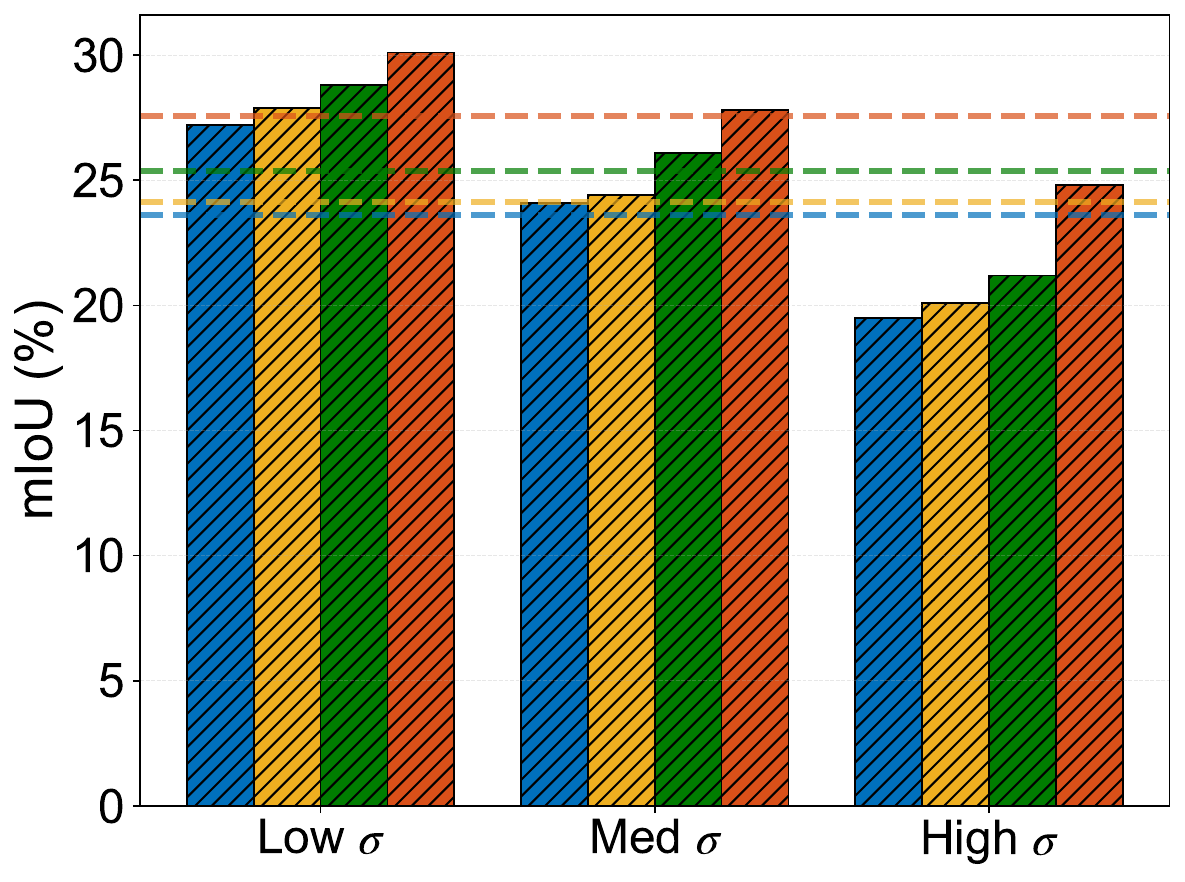}
		\caption{MarsScapes}
		\label{fig:buffer_marsscapes}
	\end{subfigure}
	\hfill
	\begin{subfigure}[b]{0.32\textwidth}
		\centering
		\includegraphics[width=\textwidth]{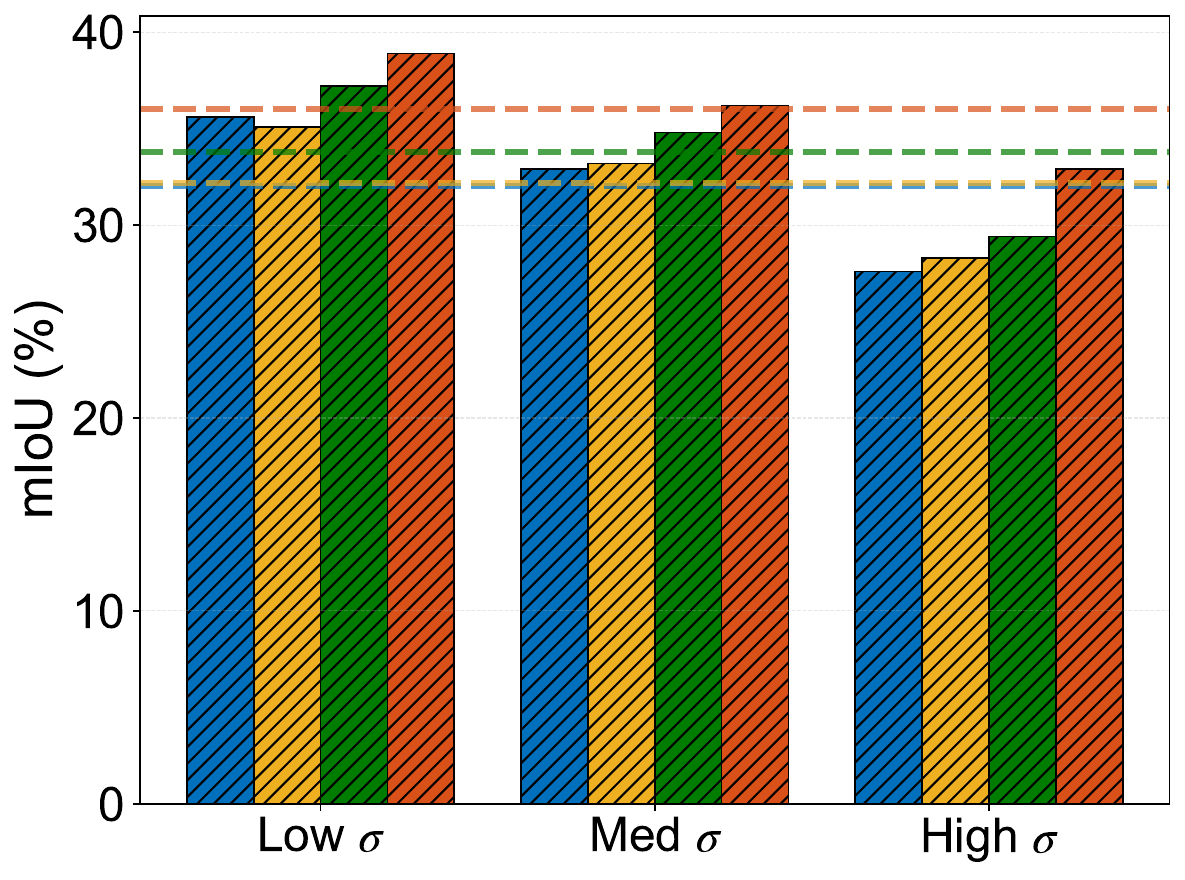}
		\caption{S5Mars}
		\label{fig:buffer_s5mars}
	\end{subfigure}
	\hfill
	\begin{subfigure}[b]{0.32\textwidth}
		\centering
		\includegraphics[width=\textwidth]{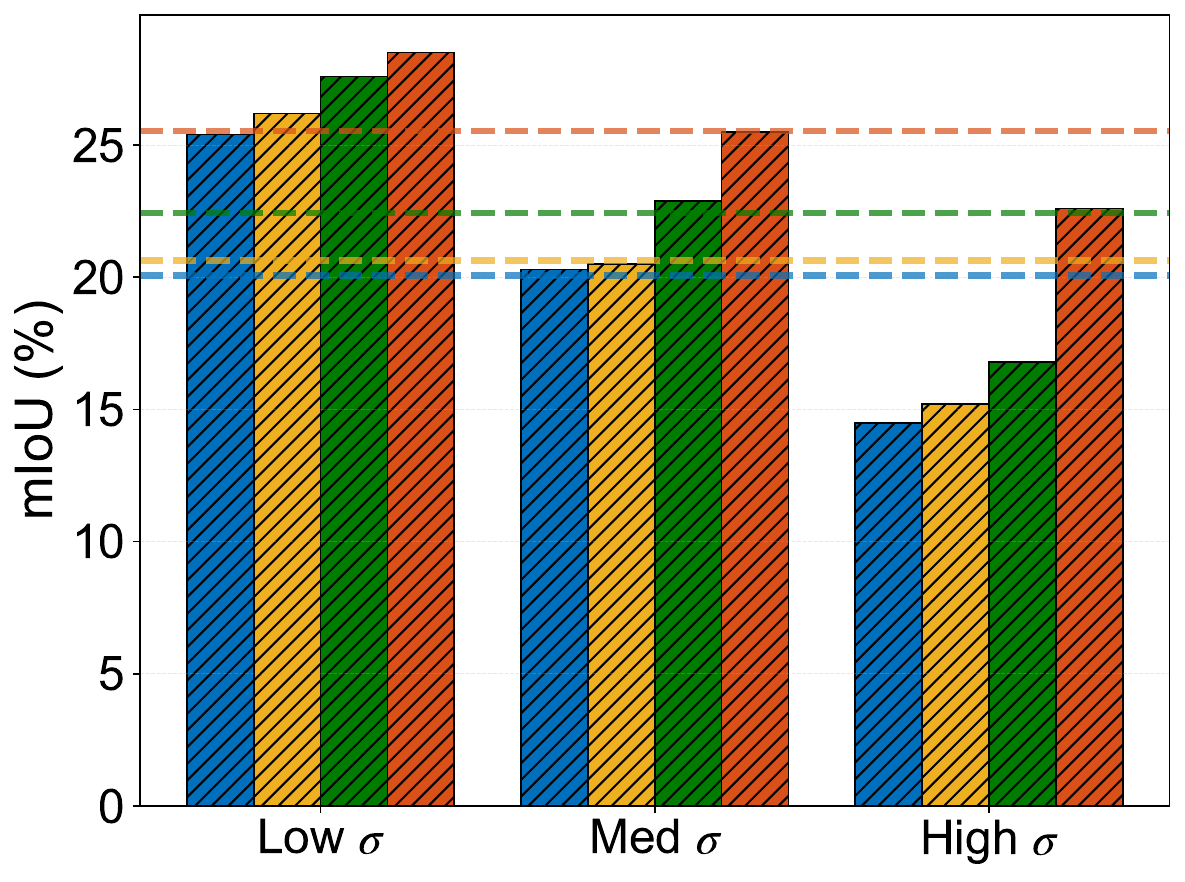}
		\caption{AI4MARS}
		\label{fig:buffer_ai4mars}
	\end{subfigure}
	
	\vspace{-2pt}
	\caption{Performance under varying memory buffer heterogeneity across three Mars terrain datasets.}
	\label{fig:buffer_robustness}
\end{figure*}


While experiments validate our framework under idealistic conditions, the real-world rover testbed introduces challenges closer to practical deployment scenarios: edge-device resource constraints that limit computational capacity, real communication delays in federated coordination, and the complete temporal evolution across extended task sequences. To validate our lifecycle-aware FCL framework under these realistic conditions, we conduct two experiments on the testbed (Fig.~\ref{fig:real_world_setup}): (1) Complete FCL Lifecycle Evolution to observe the coordination between Layer-Selective Rehearsal and Rapid Knowledge Recovery over the full training course, and (2) Edge Deployment Robustness to validate performance under heterogeneous client-side storage constraints characteristic of distributed autonomous systems.

We compare four methods: our proposed approach combining LSR and RKR, and three strong baselines pairing state-of-the-art continual learning methods (FBL~\cite{Dong2023CVPR}, EIR~\cite{Yin2025CVPR}, ADAPT~\cite{Yang2025ICLR}) with Meta-Learning Distillation~\cite{Wang2022CVPR} as the recovery strategy. Those three continual learning methods represent the top-performing approaches from simulation experiments (Tables~\ref{tab:comparison_marsscapes_5_1}-\ref{tab:comparison_ai4mars_2_1}), while Meta-Learning Distillation is selected as the most efficient recovery baseline (Fig.~\ref{fig:recovery_efficiency}). Each method executes the complete task sequence ($T=4$ for AI4MARS, $T=3$ for MarsScapes and S5Mars) with $3$ federated rounds per task. We track the cumulative mIoU evolution across all learned classes over communication rounds to observe training-time forgetting patterns. Recovery is triggered uniformly across all methods after the final task is completed to ensure a fair comparison.

To validate the robustness of our framework under realistic conditions, we conduct edge deployment experiments in two scenarios. In the first scenario, we evaluate performance under varying Non-IID severity by adjusting the Dirichlet concentration parameter across three levels: $\beta=0.5$ (Moderate), $\beta=0.3$ (Severe), and $\beta=0.1$ (Extreme), with all clients maintaining fixed $50$-sample buffers. This tests whether LSR's stratified protection remains effective as gradient conflicts intensify. In the second scenario, we assess adaptability to heterogeneous memory-buffer sizes across the fleet, which simulate uneven per-client data availability rather than hardware storage limits. Three configurations are considered: Low heterogeneity where all four clients maintain $[45, 50, 55, 50]$ sample buffers ($\sigma \approx 4.1$), Medium with $[40, 50, 60, 50]$ buffers ($\sigma \approx 8.2$), and High representing the most uneven case with $[30, 50, 70, 40]$ buffers ($\sigma \approx 16.3$), all under fixed $\beta=0.1$ Non-IID setting. This validates whether LSR's learned generators adapt to varying memory coverage across clients. We compare our method against three state-of-the-art baselines (FBL~\cite{Dong2023CVPR}, EIR~\cite{Yin2025CVPR}, ADAPT~\cite{Yang2025ICLR}) across all configurations. Each configuration executes the complete task sequence ($T=4$ for AI4MARS, $T=3$ for MarsScapes and S5Mars), and measures the cumulative mIoU upon task sequence completion to assess deployment robustness.

\subsubsection{Experimental Results}

Fig.~\ref{fig:lifecycle_evolution} presents the evolution of model performance across three Mars terrain datasets, each evaluated under edge computing constraints. In the left panels, cumulative mIoU is plotted as new tasks are introduced at rounds 4, 7, 10 (AI4MARS) or rounds 4, 7 (MarsScapes, S5Mars), indicating training-time forgetting patterns. Right panels quantify recovery effectiveness after the final task has been completed. We can see that our method consistently achieves between 2.3\% and 3.3\% higher mIoU before recovery, and between 2.0\% and 3.3\% higher mIoU after recovery, when compared to FBL with Meta-Learning Distillation.

Fig.~\ref{fig:noniid_robustness} examines how each method responds to increasing levels of Non-IID severity, with $\beta$ values set to 0.5, 0.3, and 0.1. As the degree of data heterogeneity increases, all methods experience a decline in performance. However, our approach is able to maintain stronger results, which can be attributed to the stratified protection provided by LSR. In particular, when $\beta=0.1$, representing the most severe heterogeneity, our method achieves between 0.8\% and 2.5\% higher mIoU than FBL. The outcome demonstrates the effectiveness of layer-wise adaptive rehearsal in mitigating the impact of Non-IID data.

Fig.~\ref{fig:buffer_robustness} investigates the robustness of each method when faced with varying degrees of memory buffer heterogeneity, where the standard deviation $\sigma$ is approximately 4.1 (Low), 8.2 (Medium), and 16.3 (High). As the imbalance in storage allocation becomes more pronounced, baseline methods show a decline in performance, which is primarily due to insufficient memory coverage. In contrast, our method is able to maintain between 0.9\% and 5.8\% higher mIoU than FBL under conditions of high heterogeneity. The improvement is a result of the adaptive generators used in LSR, which help to mitigate the negative effects of uneven memory distribution.

We note three limitations of the present testbed. The Raspberry Pi 5 client is a proxy for radiation-hardened flight CPUs rather than an exact match; the WiFi 802.11ac link is a proxy for UHF relay links and does not capture the contact-window structure of deep-space operations; and the four-client scale does not directly verify behavior at $K\!\geq\!30$, which is the role of the $K=30$ simulation reported earlier in this section. We report these limitations so that the testbed results are read in their intended scope.

Real missions can present distributional complexity that goes beyond the non-overlapping class-incremental setting considered in our experiments, including domain-incremental conditions (same label set across geological regions or illumination conditions) and class-overlap conditions (a label reappearing with a different sub-distribution). The framework is structurally compatible with both: the LSR generators are conditioned on the current gradient and memory features and can absorb a shift in the input distribution as a new gradient regime, and the RKR episodic meta-training on Task~0 can be augmented with domain-shift perturbations to prepare the recovery function for domain-incremental episodes. No public Mars benchmark currently covers the domain-incremental setting required to quantify this extension.

\section{Conclusion}
\label{sec:conclusion}

In this paper, we have investigated Federated Continual Learning (FCL) in mobile autonomous systems, with Mars terrain segmentation as the primary evaluation scenario. We have identified heterogeneous forgetting dynamics in deep network layers and formulated the dual-timescale knowledge-degradation problem induced by repeated federated aggregation. To address these challenges, we have designed a lifecycle-aware FCL framework that coordinates Layer-Selective Rehearsal (LSR) for training-time gradient conflicts and Rapid Knowledge Recovery (RKR) for long-term cumulative drift. Our evaluation results, derived from both simulations on three Mars terrain datasets and a physical rover testbed, demonstrate that the proposed framework achieves up to 8.3\% mIoU improvement over the strongest federated baseline and up to 31.7\% over conventional fine-tuning, while maintaining robustness under extreme Non-IID conditions and resource heterogeneity. The system-level validation on heterogeneous edge devices confirms the practical applicability of our framework for distributed autonomous missions.
\section*{Acknowledgments}
This work was supported by the National Science Foundation under CNS-2348422.

\bibliographystyle{IEEEtran}
\bibliography{reference}  

\end{document}